\definecolor{Gray}{gray}{0.9}
\definecolor{midgreen}{rgb}{0.1,0.5,0.1}
\definecolor{darkgray}{gray}{0.25}
\definecolor{lightblue}{rgb}{0.25,0.25,0.8}
\definecolor{mydarkblue}{rgb}{0,0.08,0.45}
\newtheorem{claim}{Claim}
\newcommand{\EE}{\mathbb{E}}
\newcommand{\sr}{{\tt srank}}
\newcommand{\poly}{\mathrm{poly}}
\newcommand{\norm}[1]{\ensuremath{\left\| #1 \right\|}}
\newcommand{\normop}[1]{\ensuremath{\left\| #1 \right\|_{\mathrm{op}}}}
\newcommand{\wt}{\widetilde}
\newcommand{\op}{\mathrm{op}}
\newcommand{\att}{\mathrm{Att}}
\def\diag{{\tt diag}}
\def\0{{\bm 0}}
\def\A{{\rm A}}
\def\B{{\rm B}}
\def\D{{\rm D}}
\def\K{{\rm K}}
\def\Q{{\rm Q}}
\def\U{{\mu}}
\def\V{{\rm V}}
\def\X{{\rm X}}
\def\Y{{\rm Y}}
\def\bPi{{\rm \Pi}}
\def\R{{\rm R}}
\def\Ccal{\mathcal{C}}
\def\Hcal{\mathcal{H}}
\def\Ncal{\mathcal{N}}
\def\Pcal{\mathcal{P}}
\global\long\def\RR{\mathbb{R}}
\pgfplotsset{compat=newest}
\tikzset{
	arn/.style = {circle, white, draw=black, fill=gray!30, inner sep = 10.5},
	arn_t/.style = {circle, white, draw=black, very thick, fill=gray!30, inner sep = 11.0},
	arn_l/.style = {circle, white, draw=black, very thick, fill=black, inner sep = 2},
	photon/.style={draw=black, very thick, dashed},
	electron/.style={draw=black, very thick},
	tr/.style={buffer gate US,thick,draw,fill=gray!60,rotate=90,	anchor=east,minimum width=2.25cm},
	br/.style={buffer gate US,thick,draw,fill=gray!60,rotate=90,	anchor=east,minimum width=4.5cm},
	brr/.style={buffer gate US,draw,fill=gray!60,rotate=90,	anchor=east,minimum width=4.5cm, opacity = 0.6},
	trr/.style={buffer gate US,thick,draw,fill=gray!60,rotate=90,	anchor=east,minimum width=2.25cm, opacity = 0.6},
	trrr/.style={buffer gate US,draw,fill=white!60,rotate=90,	anchor=east,minimum width=2.25cm, opacity = 0.5},
    black2darr/.style={matrix of nodes, row sep=-\pgflinewidth, column sep=-\pgflinewidth, nodes={draw,black}},
    gray2darr/.style={matrix of nodes, row sep=-\pgflinewidth, column sep=-\pgflinewidth, nodes={draw,lightgray}},
    place/.style={circle,draw=blue,fill=blue}, 
    place_b1/.style={circle,draw=orange,fill=orange},
    place_b2/.style={circle,draw=red!90,fill=red!90}, 
    place_b3/.style={circle,draw=violet!70,fill=violet!70},
    place_b4/.style={circle,draw=blue!70,fill=blue!70}
}
\newcommand{\blackSquare}[2]{\draw[teal, black, thick] (A_sparse-#2.south west) rectangle (A_sparse-#2.north east);}
\newcommand{\blackRectangle}[2]{\draw[teal, black, thick] (A_res-#1.south west) rectangle (A_res-#2.north east);}
\theoremstyle{plain}
\newtheorem{theorem}{Theorem}[section]
\newtheorem{lemma}[theorem]{Lemma}
\theoremstyle{definition}
\newtheorem{defn}[theorem]{Definition}
\theoremstyle{remark}
\title{KDEformer: Accelerating Transformers via Kernel Density Estimation}
\author[1]{Amir Zandieh}
\author[2]{Insu Han$^\dag$}
\author[3]{Majid Daliri$^\dag$}
\author[2]{Amin Karbasi}
\affil[1]{Max-Planck-Institut für Informatik}
\affil[2]{Yale University}
\affil[3]{New York University}
\newcommand\blfootnote[1]{%
  \begingroup
  \renewcommand\thefootnote{}\footnote{#1}%
  \addtocounter{footnote}{-1}%
  \endgroup
}
\begin{document}

\maketitle

\begin{abstract}
Dot-product attention mechanism plays a crucial role in modern deep architectures (e.g., Transformer) for sequence modeling, however, naïve exact computation of this model incurs quadratic time and memory complexities in sequence length, hindering the training of long-sequence models.
Critical bottlenecks are due to the computation of partition functions in the denominator of softmax function as well as the multiplication of the softmax matrix with the matrix of values. 
Our key observation is that the former can be reduced to a variant of the kernel density estimation (KDE) problem, and an efficient KDE solver can be further utilized to accelerate the latter via subsampling-based fast matrix products. 
Our proposed KDEformer can approximate the attention in sub-quadratic time with provable spectral norm bounds, while all prior results merely provide entry-wise error bounds.
Empirically, we verify that KDEformer outperforms other attention approximations in terms of accuracy, memory, and runtime on various pre-trained models. 
On BigGAN image generation, we achieve better generative scores than the exact computation with over $4\times$ speedup. 
For ImageNet classification with T2T-ViT, KDEformer shows over $18\times$ speedup while the accuracy drop is less than $0.5\%$.
\end{abstract}

\section{Introduction}\blfootnote{${}^\dag$Equal contribution.}
Transformers~\cite{vaswani2017attention} have been successfully applied to a wide variety of learning tasks in areas such as natural language processing~\cite{devlin2018bert, yang2019xlnet, brown2020language, raffel2020exploring}, computer vision~\cite{carion2020end,dosovitskiy2021an}, and time series forecasting~\cite{zhou2021informer}.
Although popular, these models face serious scalability limitations because naïve exact computation of their attention layers incurs quadratic (in sequence length) runtime and memory complexities. 
This can inhibit the training of large-scale long-sequence models.

Several algorithms have been proposed to improve Transformers' efficiency via approximating the \emph{softmax matrices} in their attention layers with either sparse matrices~\cite{kitaev2019reformer,daras2020smyrf, roy2021efficient, sun2021sparse} or low-rank matrices~\cite{choromanski2020rethinking,katharopoulos2020transformers}, or a combination of both~\cite{chen2021scatterbrain, zaheer2020big, chen2021pixelated, dass2022vitality}.
However, all prior advances 
% in efficient transformers 
solely focused on point-wise approximating the entries of the softmax matrix and fail to provide rigorous approximation guarantees on the final output of the attention mechanism. 
In this work, we design algorithms to approximate the output matrix of attention layers with provable spectral norm guarantees.

\subsection{Problem Formulation and Setting.}

Let $n$ be the number of tokens in the input sequence and $d$ be the dimension of latent representations. 
The \emph{dot-product attention} \cite{vaswani2017attention} is a mapping which takes inputs $\Q, \K, \V \in \RR^{n \times d}$ (interpreted as queries, keys, and values of a dictionary) and outputs the following matrix:
\begin{align*} 
\att(\Q, \K, \V) &:= \D^{-1} \A \V \\ \A := \exp\left( \Q \K^\top / \sqrt{d} \right)&,~~~ \D := \diag(\A \mathbf{1}_n), 
\end{align*}
where $\exp(\cdot)$ is applied in an element-wise manner, $\mathbf{1}_n$ is the ones vector in $\RR^n$, and $\diag(\cdot)$ maps its input vector to a diagonal matrix.
We refer to $\A \in \RR^{n \times n}$ as the \emph{attention matrix} and to $\D^{-1} \A$ as the \emph{softmax matrix}. 
Exact computation of the attention matrix $\A$ takes $\Theta(n^2 d)$ operations and storing it requires $\Theta(n^2)$ memory. Thus, naïve computation of $\mathrm{Att}(\Q, \K, \V)$ requires $\Omega(n^2 d)$ runtime and $\Omega(n^2)$ memory. 
Our aim is to approximate the output matrix $\mathrm{Att}(\Q, \K, \V)$ efficiently while preserving its spectral structure.

Our approach is based on reducing the number of columns of matrix $\A$ using importance sampling. We also devise an efficient estimator for the diagonal scaling matrix $\D$, which bypasses exact and explicit computation of matrix $\A$.
Formally, for any given $\varepsilon >0$ and any $\Q, \K, \V \in \RR^{n \times d}$, we want to quickly find a sampling matrix $\bPi \in \RR^{m \times n}$ with a small number $m = n^{1 - \Omega(1)}$ of rows along with a diagonal matrix $\wt{\D} \in \RR^{n \times n}$, such that the following bound on the \emph{operator norm} of the error is satisfied:
\begin{equation}\label{eq:def-spectral-erro-attention}
\norm{\mathrm{Att}(\Q, \K, \V) - \wt{\D}^{-1} \A \bPi^\top\cdot \bPi \V }_{\op} \le \varepsilon \cdot \norm{\D^{-1}\A}_{\op} \norm{\V}_{\op}.
\end{equation}
Note that $\D^{-1}\A$ is a \emph{row-stochastic (transition) matrix}, so its operator norm is $\norm{\D^{-1}\A}_{\op} \in [1, \sqrt{n}]$.

Given a sampling matrix $\bPi$ with $m$ rows, we can compute the matrix product $\A \bPi^\top \cdot \bPi \V$ in $O(n m d)$ total runtime and $O(n m)$ memory because we only need to compute the $m$ sampled columns of $\A$. Therefore, our main goal is to generate a sampling matrix $\bPi$ with a small number of samples along with a diagonal matrix $\wt{\D}$ which satisfy \cref{eq:def-spectral-erro-attention} using a sub-quadratic runtime in $n$.

All prior approximate attention methods have solely focused on finding an approximate attention matrix $\wt{\A}$ such that $\norm{\A - \wt{\A}}_F$ is small, even though $\A$ is not the ultimate output of attention and the output depends on $\V$ in addition to $\A$. In contrast, we propose the first efficient algorithm for approximating the output matrix $\mathrm{Att}(\Q,\K,\V)$ with spectral bounds as per \cref{eq:def-spectral-erro-attention}  (see~\cref{sec:main_result}).

\subsection{Our Techniques and Results}
We leverage the line of work on efficient \emph{Kernel Density Estimation (KDE)}~\cite{scholkopf2002learning, joshi2011comparing, charikar2017hashing, backurs2018efficient, backurs2019space,siminelakis2019rehashing}.
In the KDE problem, we are given a dataset $\X=\{ x_1, x_2, \ldots x_n \}$ and a kernel function $k(\cdot, \cdot)$ and aim to compute the kernel density $\U_\X(q) = \frac{1}{n} \sum_{i=1}^n k(q , x_i)$ for an arbitrary query point $q$. 
The goal of existing methods in the literature is to estimate this value to $(1+\varepsilon)$ relative error in time $O\left( \varepsilon^{-2} d/\wt{\mu}^{\tau}\right)$ for some $\tau>0$, where $\wt{\mu}$ is a lower bound on $\U_\X(q)$.
Particularly, the best-known algorithm for the Gaussian kernel, due to~\citet{charikar2020kernel}, achieves $\tau = 0.173+o(1)$.

We show that finding the sampling matrix $\bPi$ and diagonal scaling $\wt{\D}$ which satisfy \cref{eq:def-spectral-erro-attention} can be reduced to a generalization of the KDE problem.
First note that the $i^{th}$ diagonal entry of the scaling matrix $\D$ is $\D_{i,i} = \sum_{j=1}^n \exp\left( \frac{\langle q_i, k_j \rangle}{ \sqrt{d}} \right)$, which is indeed the kernel density corresponding to exponential kernel function $k(x,y)=\exp(\langle x, y \rangle)$ and dataset $\frac{1}{d^{1/4}} \cdot \K$ at query point $\frac{1}{d^{1/4}} \cdot q_i$. 
Thus, if we had an efficient KDE procedure for estimating the exponential kernel density up to a multiplicative $(1 \pm \varepsilon)$ factor, we could compute a scaling $\wt{\D}$ that satisfies the spectral guarantee of \cref{eq:def-spectral-erro-attention}.

Additionally, to design an efficient sampling matrix $\bPi$ that satisfies \cref{eq:def-spectral-erro-attention} with small number of rows, the sampling probabilities need to be proportional to the column norms of the softmax matrix $\D^{-1} \A$~\cite{zouzias2013randomized}. 
One can see that the squared norm of the $i^{th}$ column of $\D^{-1} \A$ is $\sum_{j \in [n]} \D_{j,j}^{-2} \exp\left( \frac{2}{\sqrt{d}} \langle q_j, k_i \rangle \right)$, which is a \emph{weighted} exponential kernel density with weights $\left\{ \D_{i,i}^{-2} \right\}_{i \in [n]}$ and dataset $\frac{\sqrt{2}}{d^{1/4}} \cdot \Q$ at query point $\frac{\sqrt{2}}{d^{1/4}} \cdot k_i$. 
Therefore, if we could estimate this weighted exponential kernel density up to some constant multiplicative factor, we could generate a sampling matrix $\bPi$ with small number of samples that satisfies \cref{eq:def-spectral-erro-attention}.

Thus, having a generalized KDE procedure for efficiently evaluating the weighted exponential kernel density, enables us to approximate $\mathrm{Att}(\Q, \K, \V)$ as per \cref{eq:def-spectral-erro-attention}. 
While there is no prior solution for this problem, we show how to translate it to the Gaussian KDE problem, which has witnessed significant recent progress, by applying appropriate transformations on $\K$ and $\Q$ (see \cref{alg-w-exp-kde} and \cref{thm-corrctness-runtime-wexpkde-alg}).

\paragraph{Our Theoretical Results.} We give an algorithm that outputs a diagonal $\wt{\D} \in \RR^{n\times n}$ and a sampling matrix $\bPi \in \RR^{m \times n}$ with $m = O\left( \varepsilon^{-2} \log n \cdot \sr({\D}^{-1} \A) \right)$ samples which satisfy the spectral bound of \cref{eq:def-spectral-erro-attention} with high probability in $n$, where $\sr(\D^{-1}\A)$ denotes the \emph{stable rank} of the softmax matrix.
Our method reduces the memory of attention layers to $mn = {O}\left( \varepsilon^{-2} n \log n \cdot \sr(\D^{-1}\A) \right)$. 
Furthermore, if the Gaussian KDE is supported by an algorithm with runtime $O\left( \varepsilon^{-2} d/\wt{\mu}^{\tau}\right)$ for relative error $1+\varepsilon$, and density lower bound $\wt{\mu}$, then our algorithm's runtime is bounded by $O\left( \varepsilon^{-2} d \cdot n^{1 + \tau} \right)$ for any datasets of queries $\Q$ and keys $\K$ with diameter $\max_{i,j \in [n]} \norm{k_i - q_j}_2^2 = o \left( \sqrt{d} \cdot \log n \right)$, which is strongly sub-quadratic in $n$. 
The current best value for $\tau$ is $\tau = 0.173+o(1)$ due to \cite{charikar2020kernel} and any future progress on Gaussian density evaluation immediately improves our method's runtime.

This result applies to a wide range of practical scenarios where the dimension $d$ is not too large. To see why, note that entries of $\K, \Q$ are typically constant, thus, the diameter is $\max_{i,j \in [n]} \norm{k_i - q_j}_2^2 = O(d)$. Therefore, for any dimension $d = o(\log^2n)$, e.g., $d \approx \frac{\log^2n}{\log \log n}$, our method needs only $O\left(m + \varepsilon^{-2} d \cdot n^{1 + \tau}  \right)$ operations, which is significantly faster than exact computation of $\mathrm{Att}(\Q, \K, \V)$.

\begin{figure}[t]
\vspace{-10pt}
\centering
\includegraphics[width=0.55\textwidth]{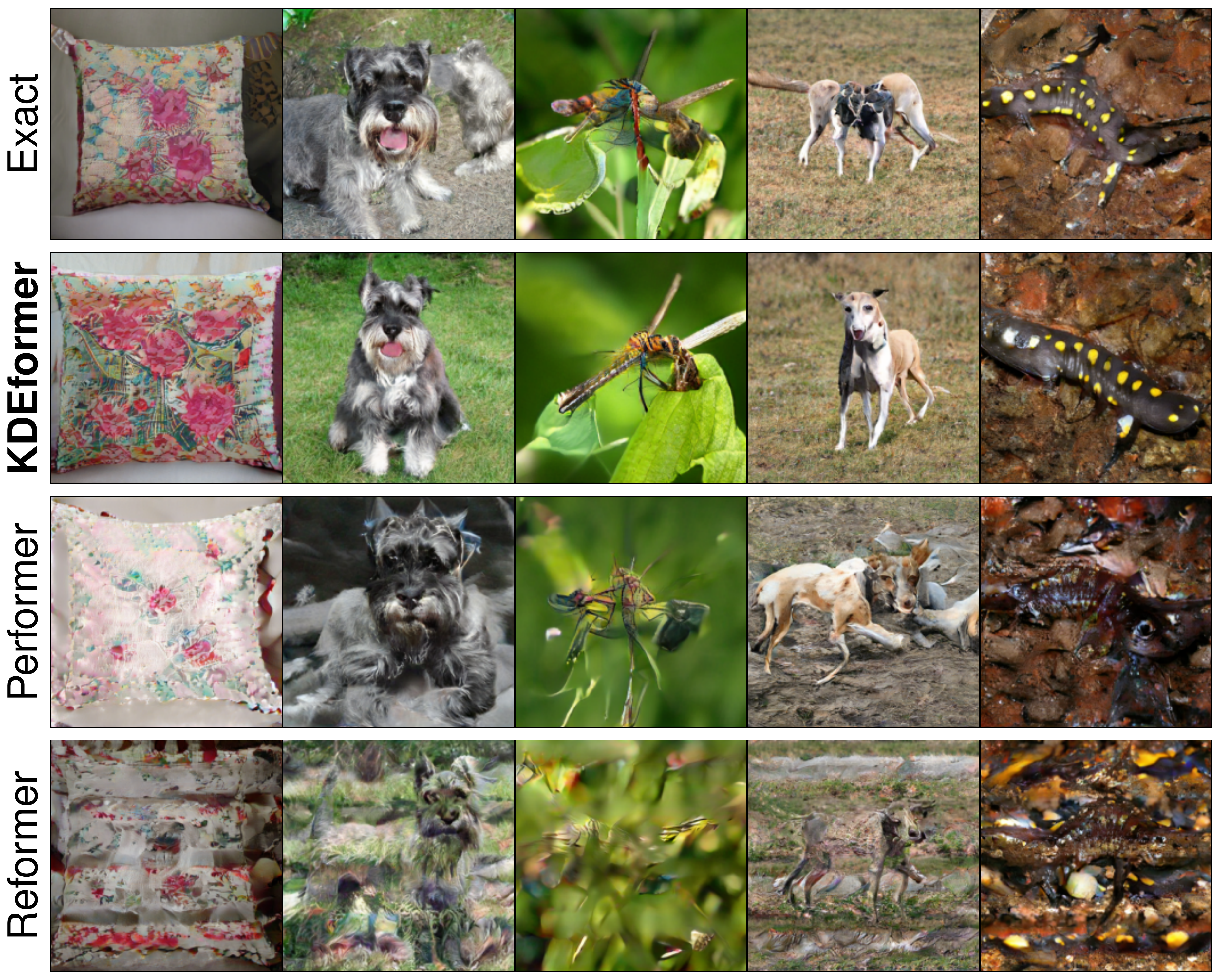}
\vspace{-0.1in}
\caption{Image generations by the pre-trained BigGAN using exact and approximate attention without fine-tuning.}\label{fig:biggan_intro}
% \vspace{-0.1in}
\end{figure}

\paragraph{Our Practical Results.}
Our necessary number $m$ of samples depends on the stable rank of the softmax matrix. To reduce $m$, we employ Locality Sensitive Hashing (LSH) to extract the heavy elements of $\D^{-1} \A$ and then show that, in practice, the residual has a significantly smaller stable rank than the original matrix (see~\cref{sec:practical_improvement}).
With this heuristic improvement, we verify that our proposed algorithm outperforms popular attention approximations. In particular, it can save memory space up to $19.06\times$ when the sequence length $n$ is $16{,}394$.
We apply our method to image generation with BigGAN~\cite{brock2018large} and observe that our images, shown in \cref{fig:biggan_intro}, look more natural than others and our generative score is even better than the exact attention. 
Furthermore, for ImageNet classification with Vision Transformer~\cite{yuan2021tokens}, KDEformer shows $18\times$ speedup and $82.08\%$ accuracy which is only $0.5\%$ lower than the exact attention (see~\cref{sec:exp}).
Finally, we demonstrate our method on end-to-end training under the Long Range Arena benchmark~\cite{tay2020long} and observe up to 8$\times$ speedup on wall-clock time than the exact attention (see~\cref{sec:end}).

\subsection{Prior Work}
Several popular methods try to approximate the \emph{heavy} entries of the attention matrix $\A$ by restricting the attention to local neighbors of queries using Locality Sensitive Hashing (LSH)~\cite{kitaev2019reformer, chen2020mongoose, sun2021sparse} or $k$-means clustering~\cite{daras2020smyrf, roy2021efficient}.
Such approaches, however, only provide error bounds on the attention matrix, e.g., guarantees of the form $\| \A - \wt{\A}\|_F < \varepsilon n$, and cannot provide any provable guarantees for the final output matrix $\mathrm{Att}(\Q,\K,\V)$. 
Remarkably, at the core of our algorithm, there are invocations of the Gaussian KDE primitive from \citet{charikar2020kernel}, which heavily employs LSH to estimate kernel densities. 
In contrast to previous works, our algorithm uses LSH in a more subtle way, that is for estimating the right sampling probabilities in order to generate $\bPi$ and also to approximate the scaling $\D$. This difference of approach allows us to approximate $\mathrm{Att}(\Q,\K,\V)$ with spectral norm guarantees.

Another recent line of work is based on approximating the attention matrix $\A$ via random feature maps of the Gaussian or exponential kernels~\cite{choromanski2020rethinking,katharopoulos2020transformers}. 
\citet{chen2021scatterbrain} has recently shown that using a combination of both LSH-based and random features based methods works better at approximating the attention matrix $\A$.
See \cite{tay2022efficient} for a survey.

\section{Preliminaries and Notations}
For any matrix $\A$, we let $a_i$ be its $i^{th}$ row vector and its \emph{stable rank} is defined as $\sr(\A) := \frac{\norm{\A}_F^2}{\norm{\A}_{\op}^2}$ which is always upper bounded by the algebraic rank. 
% For square matrices $\A, \B$ we use $\A \preceq \B$ iff $\B - \A$ is positive semi-definite.
We denote $e_1, e_2, \ldots e_n$ by the standard basis vectors in $\RR^n$ and $\mathbf{1}_n$ and $\mathbf{0}_n$ by the all-ones and all-zeros vectors in $\RR^n$. For vectors $x,y$ their \emph{direct sum} is denoted by $x\oplus y:= [ x^\top, y^\top]^\top$.

\paragraph{Gaussian KDE.} Our main algorithm is tightly related to the Gaussian KDE, where one is given a dataset $\X \in \RR^{n \times d}$ and wants to build a data-structure (DS) such that given this DS one can estimate the following kernel density value up to $(1 + \varepsilon)$ relative error for any query point $q \in \RR^d$:
\begin{equation}\label{eq:def-Gauss-kde}
\U_{\X}(q) := \frac{1}{n} \sum_{i\in [n]} \exp(-\norm{q - x_i}_2^2 / 2). \end{equation}
The na\"ive method without any DS requires $\Theta(nd)$ time and memory complexities.
The aim is to minimize the memory needed to store the DS and the query time, ultimately being sublinear in $n$.
The pre-processing time which is needed to construct the DS is also desired to be small. 
There have been significant advances on this problem and the current best result was proposed by \citet{charikar2020kernel} as follows:
%%%%% Please uncomment this/recover the original if the change is not satisfactory
% Throughout the paper we assume that Gaussian KDE admits an efficient algorithm:
% \begin{defn}[Fast Gaussian KDE]\label{thm-Gauss-kde-moses}
% The Gaussian KDE admits a $O\left( \varepsilon^{-2} d/\wt{\mu}^{\tau}\right)$ time algorithm if there exist procedures \textsc{PreprocessKDE} and \textsc{QueryKDE} such that, for any dataset $\X \in \RR^{n \times d}$ and any $\varepsilon, \wt{\mu} \in (0,1)$, these procedures satisfy the following properties,
% \begin{enumerate}
	%     \item \textsc{PreprocessKDE}$(\X,\varepsilon, \wt{\mu})$ constructs a data-structure named ${\tt DS_{kde}}$ in time $O\left( \varepsilon^{-2} d n / \wt{\mu}^{\tau} \right)$.
	%     \item Given ${\tt DS_{kde}}$ and any query $q \in \RR^d$, \textsc{QueryKDE}$({\tt DS_{kde}}, q)$ approximates the quantity $\U_{\X}(q) \cdot \mathbbm{1}_{\{ \wt{\mu} \le \U_{\X}(q) \}}$ up to $(1 + \varepsilon)$ relative error in time ${O}(\varepsilon^{-2} d \left( \wt{\mu} + \U_{\X}(q) \right)^{-\tau})$, where $\U_{\X}(q)$ is defined as in \cref{eq:def-Gauss-kde}.
	% \end{enumerate}
% \end{defn}
% There have been significant advances on this problem and by Theorem 2 of \cite{charikar2020kernel},
% there exists a fast Gaussian KDE as per \cref{thm-Gauss-kde-moses} with $\tau = 0.173+o(1)$.
\begin{theorem}[Fast Gaussian KDE, Theorem 2 in \cite{charikar2020kernel}]\label{thm-Gauss-kde-moses}
Let $\tau = 0.173+o(1)$. For any dataset $\X \in \RR^{n \times d}$ and any $\varepsilon, \wt{\mu} \in (0,1)$, there exist the following procedures:
\begin{enumerate}
	\item \textsc{PreprocessKDE}$(\X,\varepsilon, \wt{\mu})$ constructs a data-structure named ${\tt DS_{kde}}$ in time $O\left( \varepsilon^{-2} d n / \wt{\mu}^{\tau} \right)$.
	\item Given ${\tt DS_{kde}}$, any query $q \in \RR^d$, and $\U_{\X}(q)$ defined as in \cref{eq:def-Gauss-kde}, \textsc{QueryKDE}$({\tt DS_{kde}}, q)$ approximates the quantity $\U_{\X}(q) \cdot \mathbbm{1}_{\{ \wt{\mu} \le \U_{\X}(q) \}}$ up to $(1 + \varepsilon)$ relative error in ${O}(\varepsilon^{-2} d / \left( \wt{\mu} + \U_{\X}(q) \right)^{\tau})$ runtime.
\end{enumerate}
\end{theorem}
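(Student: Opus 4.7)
The plan is to prove the theorem via the Hashing-Based Estimator (HBE) framework of \citet{charikar2017hashing}, instantiated with the multi-scale LSH cover of \citet{charikar2020kernel}. First I would build a Locality-Sensitive Hashing family $\mathcal{H}$ on $\RR^d$ whose collision probability $p(x,y) := \Pr_{h \sim \mathcal{H}}[h(x) = h(y)]$ behaves roughly like $\exp(-\|x-y\|_2^2/2)^{\beta}$ for a carefully chosen $\beta \in (0,1)$. Given such a family, the single-draw HBE estimator picks a uniformly random data point $x$ from the hash bucket $B_q$ containing the query $q$ and returns $Z := k(q,x)\cdot |B_q|/(n\cdot p(q,x))$. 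A direct calculation shows $\EE[Z] = \U_\X(q)$, while the collision properties of $\mathcal{H}$ bound the second moment by $\U_\X(q)^{2-\tau}$ for the exponent $\tau$ controlled by $\mathcal{H}$. Averaging $L = O(\varepsilon^{-2}/\wt{\mu}^{\tau})$ independent copies and invoking Chebyshev then yields the desired $(1+\varepsilon)$ relative-error guarantee whenever $\U_\X(q) \ge \wt{\mu}$.

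On the algorithmic side, \textsc{PreprocessKDE} would sample $L$ independent hash functions, hash all $n$ data points into the $L$ tables at cost $O(d)$ per point per table, and store pointers to each bucket, giving the claimed $O(\varepsilon^{-2} d n / \wt{\mu}^{\tau})$ preprocessing cost. \textsc{QueryKDE} would hash $q$ in each of the $L$ tables, draw one uniform sample per non-empty bucket, evaluate $Z$, and return the average. To obtain the sharper query bound $O(\varepsilon^{-2}d/(\wt{\mu} + \U_\X(q))^{\tau})$, I would layer a doubling trick on top: run the estimator with geometrically increasing density guesses $\wt{\mu}, 2\wt{\mu}, 4\wt{\mu}, \ldots$, stopping as soon as the empirical mean is large enough to certify the current guess; this saves work when $\U_\X(q) \gg \wt{\mu}$. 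When $\U_\X(q) < \wt{\mu}$ the procedure is permitted to output $0$, which is absorbed by the indicator $\mathbbm{1}_{\wt{\mu} \le \U_\X(q)}$ appearing in the statement.

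The main obstacle, and the source of the constant $\tau = 0.173 + o(1)$, is the design of the LSH family itself. Plain $p$-stable Euclidean LSH only gives $\tau = 1/2$, which is too weak for the theorem. To improve this I would partition the dataset, relative to a query $q$, into dyadic shells indexed by $\|q-x_i\|$ and use a separate LSH family at each scale whose width is tuned to the shell's diameter; the overall estimator is then a weighted combination across shells. The variance is controlled by the worst shell, and the $0.173$ exponent emerges from a delicate saddle-point optimization of the LSH width against the Gaussian tail. This multi-scale variance analysis is the technical heart of \citet{charikar2020kernel} and is where I expect essentially all of the difficulty to lie. Once the exponent is in hand, the preprocessing and query bounds follow from a straightforward union bound together with median-of-means aggregation to boost the failure probability to $1/\poly(n)$.
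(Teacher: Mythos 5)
The paper does not prove \cref{thm-Gauss-kde-moses}; it imports it verbatim as Theorem~2 of \citet{charikar2020kernel} and treats it as a black box, so there is no internal proof against which to compare your attempt. Your sketch is a reasonable high-level reconstruction of the Hashing-Based-Estimator machinery behind fast Gaussian KDE: the unbiasedness calculation for $Z = k(q,x)\,|B_q|/(n\,p(q,x))$, the claimed second-moment bound of the form $\U_\X(q)^{2-\tau}$, the $L = O(\varepsilon^{-2}/\wt{\mu}^{\tau})$ repetitions with Chebyshev plus median-of-means boosting, and the doubling trick over density guesses to obtain the adaptive $(\wt{\mu} + \U_\X(q))^{-\tau}$ query time are all the right shape and consistent with the theorem as stated (including the indicator $\mathbbm{1}_{\{\wt{\mu}\le \U_\X(q)\}}$ allowing the procedure to report $0$ for low-density queries).

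Two caveats. First, as you acknowledge, you have not actually proved the theorem: the entire technical burden — establishing that the single-draw second moment is $O(\U_\X(q)^{2-\tau})$ with $\tau = 0.173 + o(1)$ — is deferred to the cited reference, so this is an outline of where the difficulty lives rather than a proof. Second, the route you describe (dyadic distance shells, a separate LSH family per shell tuned to the shell's diameter, a saddle-point optimization over hash widths) tracks the earlier HBE line of work more closely than the 2020 paper; the $0.173$ exponent in \citet{charikar2020kernel} is obtained via a reduction to a data-dependent primitive they call density-constrained near-neighbor search, whose analysis is organized quite differently from a per-shell LSH cover. This distinction is immaterial for the present paper, which only invokes the theorem as an oracle, but if you intend to actually rederive the $0.173$ bound, the shell-by-shell data-independent approach is unlikely to reach it.
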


The density lower bound $\wt{\mu}$ required by \cref{thm-Gauss-kde-moses} is unknown to us in advance and we learn this quantity adaptively in \cref{alg-w-exp-kde}. We show in \cref{sec:main_result} that for datasets with bounded diameter $\wt{\mu} = n^{-1-o(1)}$.

\section{Efficient Attention with Spectral Bounds}\label{sec:algorithm}
In this section, we design KDEformer which can efficiently compute a sampling matrix $\bPi$ and a diagonal scaling $\wt{\D}$ satisfying \cref{eq:def-spectral-erro-attention}.
We start by showing that this can be done very efficiently given access to a primitive for estimating the row-norms of the attention matrix $\A$ as well as the column-norms of the softmax matrix $\D^{-1}\A$. Next, in \cref{sec:expKDE_construction}, we present a reduction from norm estimators for $\A$ and $\D^{-1}\A$ to the Gaussian KDE problem which has an efficient solution.
Finally, we prove our main result in \cref{sec:main_result}

\subsection{High-level Architecture of the Algorithm}\label{sec:high-level-architecture}
Here, we assume that we have access to an oracle, which can estimate the \emph{weighted} linear combination of $n$ exponential kernels at arbitrary query points, and given this oracle, we design an algorithm that can output $\bPi$ and $\wt{\D}$ which satisfy \cref{eq:def-spectral-erro-attention}. 
In other words, we translate and reduce the problem of spectrally approximating $\mathrm{Att}(\Q, \K, \V)$ to a weighted KDE problem corresponding to the exponential dot-product kernel.
The precise interface and desired properties of this oracle are presented in the following definition,

\begin{defn}[Weighted Exponential KDE]\label{def:ExpKDE}
Let $\X , \Y \in \RR^{n \times d}$ be arbitrary datasets and let $v \in \RR_+^n$ be an arbitrary vector with positive coordinates. For any $\varepsilon > 0$, primitive \textsc{WExpKDE}$(\X, \Y, v, \varepsilon)$ outputs a non-negative vector $\alpha \in \RR_+^n$ such that:
\begin{equation}\label{eq:wexp-desired-error}
	\alpha_j \in (1 \pm \varepsilon) \cdot \sum_{i \in [n]} v_i \exp(\langle x_i, y_j \rangle) \quad \forall j \in [n].
\end{equation} 
\end{defn}
Now we show how to generate $\bPi$ and $\wt{\D}$ that satisfy \cref{eq:def-spectral-erro-attention}, given access to \textsc{WExpKDE} as per \cref{def:ExpKDE}.

\paragraph{Estimating $\D = \diag \left(\exp\left( \Q \K^\top / \sqrt{d} \right)\mathbf{1}_n \right)$.}
One can easily see that the $j^{th}$ diagonal entry of $\D$ equals:
\begin{align}
\D_{j,j} = \sum_{i \in [n]} \exp\left( {\langle k_i, q_j \rangle}/{\sqrt{d}} \right) \quad \forall j \in [n].
\end{align}
Therefore, if we let $\alpha = \textsc{WExpKDE}\left( \frac{\K}{d^{1/4}}, \frac{\Q}{d^{1/4}} , \mathbf{1}_n, \frac{\varepsilon}{3} \right)$ and define $\wt{\D} = \diag(\alpha)$, then by \cref{def:ExpKDE} and using the fact that entries of $\D$ are positive, we have $(1-\varepsilon/3) \D \preceq \wt{\D} \preceq (1+\varepsilon/3) \D$ 
where $\preceq$ is the Loewner order.
So,
\begin{equation}\label{eq:error-D-approx}
\norm{ \mathrm{Att}(\Q, \K, \V) - \wt{\D}^{-1} \A \V }_{\op} \le \frac{\varepsilon}{2} \cdot \norm{ \D^{-1}\A \V}_{\op}.
\end{equation}
Hence, we can estimate $\D$ to sufficient precision by invoking $\textsc{WExpKDE}\left( \frac{\K}{d^{1/4}}, \frac{\Q}{d^{1/4}} , \mathbf{1}_n, \frac{\varepsilon}{3} \right)$.

\paragraph{Generating the Sampling Matrix $\bPi$.}
Given a diagonal matrix $\wt{\D}$ which satisfies \cref{eq:error-D-approx}, by triangle inequality, in order to satisfy the spectral bound of \cref{eq:def-spectral-erro-attention}, it suffices to find a sampling matrix for which the following holds,
\begin{equation} \label{eq:error-bound-sampler-attn}
\norm{  \wt{\D}^{-1} \A \bPi^\top \cdot \bPi \V - \wt{\D}^{-1} \A \V }_{\op} \le \frac{\varepsilon}{2} \cdot \norm{ \D^{-1}\A}_{\op} \norm{ \V}_{\op} \end{equation}
So, our goal is to design a sampling matrix $\bPi \in \RR^{m \times n}$ with a small number $m$ of rows that satisfies \cref{eq:error-bound-sampler-attn}. This problem is in fact well studied in the randomized numerical linear algebra literature and is known as the \emph{Approximate Matrix Multiplication} (AMM) with respect to the
spectral norm. It is known how to achieve the above guarantee using a sampling matrix with $m = O\left( \varepsilon^{-2} \log n \cdot (\sr(\D^{-1} \A) + \sr(\V)) \right)$ i.i.d. rows.

More formally, we have the following result which is a slight modification of Theorem 2.1 from \cite{zouzias2013randomized} and is proved in \cref{appndx-amm-proof}.

\begin{restatable}[AMM]{lemma}{lemammoperatornorm}\label{lem:amm-operator-norm}
For any matrices $\X \in \RR^{n \times q}, \Y \in \RR^{n \times d}$ and any probability distribution $\{ p_i \}_{i \in [n]}$ satisfying $p_i \ge \frac{1}{4} \cdot \frac{\norm{x_i}_2^2 + \gamma \cdot \norm{y_i}_2^2}{\norm{\X}_F^2 + \gamma \cdot \norm{\Y}_F^2}$ for all $i \in [n]$ and $\gamma = \norm{\X}_\op^2 / \norm{\Y}_\op^2$, a sampling matrix $\bPi \in \RR^{m \times n}$ constructed by first generating $m$ i.i.d. samples $\ell_1, \ldots \ell_m \in [n]$ according to $\{p_\ell\}_{\ell\in[n]}$ and then letting the $r^{th}$ row of $\bPi$ be $\frac{1}{\sqrt{m\cdot p_{\ell_r}}} \cdot e_{\ell_r}^\top$, if $m = \Omega\left( \varepsilon^{-2} \log n \cdot (\sr(\X) + \sr(\Y)) \right)$ for some $\varepsilon>0$, the following holds,
\begin{align*}
	\Pr \left[ \normop{\X^\top \bPi^\top \bPi \Y - \X^\top\Y} > \varepsilon\normop{\X} \normop{\Y}\right] \le \frac{1}{\poly(n)}.
\end{align*}
\end{restatable}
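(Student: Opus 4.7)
The plan is to apply the matrix Bernstein inequality to the centered sum of independent rank-one matrices induced by the sampling. Specifically, I write
\begin{equation*}
\X^\top \bPi^\top \bPi \Y \;=\; \sum_{r=1}^m M_r, \qquad M_r := \tfrac{1}{m\, p_{\ell_r}}\, x_{\ell_r} y_{\ell_r}^\top,
\end{equation*}
and observe that $\E[M_r] = \tfrac{1}{m} \X^\top \Y$, so that $Z_r := M_r - \E[M_r]$ are i.i.d., mean-zero, and $\sum_r Z_r = \X^\top\bPi^\top\bPi\Y - \X^\top\Y$. The task reduces to bounding the per-summand operator norm $\max_r \normop{Z_r}$ and the matrix variance $\sigma^2 := \max\bigl(\normop{\sum_r \E[Z_r Z_r^\top]},\, \normop{\sum_r \E[Z_r^\top Z_r]}\bigr)$, and then invoking the standard matrix Bernstein bound.

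For the per-summand bound, since $\normop{M_r} \le \tfrac{1}{m\, p_{\ell_r}} \norm{x_{\ell_r}}_2 \norm{y_{\ell_r}}_2$, the key step is AM-GM: $\norm{x_i}_2 \norm{y_i}_2 \le \tfrac{1}{2\sqrt{\gamma}} (\norm{x_i}_2^2 + \gamma \norm{y_i}_2^2)$. Combining this with the hypothesis $p_i \ge \tfrac{1}{4} \cdot \tfrac{\norm{x_i}_2^2 + \gamma \norm{y_i}_2^2}{\norm{\X}_F^2 + \gamma \norm{\Y}_F^2}$ and substituting $\gamma = \normop{\X}^2/\normop{\Y}^2$ yields $\normop{M_r} \le \tfrac{2}{m} \normop{\X}\normop{\Y}\bigl(\sr(\X) + \sr(\Y)\bigr)$. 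Centering only costs a constant factor by the triangle inequality.

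For the variance, a direct computation gives $\E[M_r M_r^\top] = \tfrac{1}{m^2} \sum_i \tfrac{\norm{y_i}_2^2}{p_i}\, x_i x_i^\top$. Using only the component $p_i \ge \tfrac{1}{4}\cdot \tfrac{\gamma \norm{y_i}_2^2}{\norm{\X}_F^2 + \gamma \norm{\Y}_F^2}$ of the sampling probabilities, I bound $\tfrac{\norm{y_i}_2^2}{p_i}$ by a uniform constant, which pulls the dependence outside the sum and leaves $\tfrac{C}{m^2\gamma}(\norm{\X}_F^2 + \gamma\norm{\Y}_F^2)\normop{\X^\top\X}$. Plugging in $\gamma = \normop{\X}^2/\normop{\Y}^2$ reorganizes this into $\tfrac{C}{m^2}\normop{\X}^2\normop{\Y}^2(\sr(\X)+\sr(\Y))$. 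The symmetric bound on $\E[M_r^\top M_r]$ follows by repeating the same argument using the other component $p_i \ge \tfrac{1}{4}\cdot\tfrac{\norm{x_i}_2^2}{\norm{\X}_F^2 + \gamma \norm{\Y}_F^2}$ of the sampling probability. Summing the $m$ identical contributions yields $\sigma^2 = O\bigl(\tfrac{1}{m}\normop{\X}^2\normop{\Y}^2(\sr(\X)+\sr(\Y))\bigr)$, again with the centering costing only constants.

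Finally, I apply matrix Bernstein: to make the deviation at most $\varepsilon\,\normop{\X}\normop{\Y}$ with probability at least $1 - n^{-c}$, it suffices that $m = \Omega\bigl(\varepsilon^{-2} \log n \cdot (\sr(\X)+\sr(\Y))\bigr)$, matching both the variance-driven and the operator-norm-driven terms. The main obstacle is purely bookkeeping: one must track the $\gamma$-weighting carefully so that both terms in the sampling probability produce a clean bound scaling with $\sr(\X)+\sr(\Y)$ (rather than a product, or a mixed quantity involving $\normop{\X}_F\normop{\Y}_F$). The AM-GM step and the specific choice $\gamma = \normop{\X}^2/\normop{\Y}^2$ are what make the two disparate terms balance so that the final bound is driven by the \emph{sum} of stable ranks.
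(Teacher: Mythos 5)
Your proposal is correct and takes essentially the same route as the paper: decompose $\X^\top\bPi^\top\bPi\Y$ into $m$ i.i.d.\ rank-one summands, use AM-GM together with the lower bound on $p_i$ to control the per-summand operator norm, use the two individual components of the lower bound on $p_i$ to control the two second moments, plug in $\gamma = \normop{\X}^2/\normop{\Y}^2$ to land on a bound of the form $\normop{\X}\normop{\Y}(\sr(\X)+\sr(\Y))$, and conclude via matrix Bernstein. The paper invokes Tropp's Corollary 6.2.1 (the uncentered sampling-estimator form) while you write out the centering and apply the standard mean-zero matrix Bernstein, but these are the same argument up to a constant and the way the averaging is bookkept.
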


So, by invoking \cref{lem:amm-operator-norm} with $\X^\top = \wt{\D}^{-1} \A$ and $\Y = \V$ and error parameter $\varepsilon/2$, we can find a random sampling matrix $\bPi$ which satisfies \cref{eq:error-bound-sampler-attn} with high probability in $n$, as long as the number of samples is at least $m = \Omega\left( \varepsilon^{-2} \log n (\sr(\wt{\D}^{-1} \A) + \sr(\V)) \right)$. 
The only catch is that, to apply \cref{lem:amm-operator-norm}, we need to compute the distribution $\{ p_i \}_{i \in [n]}$ as per this lemma. 
In other words, we need to compute the row norms of $\V$ as well as the column norms of $\wt{\D}^{-1} \A$. All row norms of $\V$ can be computed in $O(nd)$ time. However, naively computing the column norms of $\wt{\D}^{-1} \A$ would require $\Theta(n^2d)$ operations.
Fortunately, the column norms of $\wt{\D}^{-1} \A$ can be approximated via the primitive \textsc{WExpKDE} from \cref{def:ExpKDE}.

\begin{algorithm}[t]
\caption{KDEformer} \label{alg-outer-loop}
% \setstretch{1.2}
\begin{algorithmic}[1]
	\STATE {\bf input}: matrices $\Q, \K, \V \in \RR^{n \times d}$, integer $m$, and $\varepsilon>0$
	\STATE $\gamma \gets \norm{\V}_\op^{-2}$ via power method  \label{line-gamma}
	\STATE $\alpha \gets \textsc{WExpKDE}\left( \frac{\K}{d^{1/4}} , \frac{\Q}{d^{1/4}} , \mathbf{1}_n, \frac{\varepsilon}{3} \right)$ in \cref{def:ExpKDE} \label{line-row-norm-attn-outerlopp}
	\STATE $\beta \gets \textsc{WExpKDE}\left( \frac{\sqrt{2} \cdot \Q}{d^{1/4}}, \frac{\sqrt{2} \cdot \K}{d^{1/4}}, u, 1 /3 \right)$, 
	where $u_i \gets 1/\alpha_i^2$ for every $i \in [n]$ \label{line-col-norm-attention-matrix}
	\STATE $p_i \gets \beta_i + \gamma \cdot \norm{v_i}_2^2$ for every $i \in [n]$ then 
	normalize $p_\ell \gets \frac{p_\ell}{\sum_{j\in [n]} p_j}$ for every $\ell \in [n]$ \label{line-dist-p}
	\STATE generate i.i.d. samples $\ell_1, \ell_2, \ldots \ell_m \in [n]$ from distribution $\{p_\ell\}_{\ell\in[n]}$\label{line-gen-samples-alg}
	\STATE let $r^{th}$ row of $\bPi$ be $\frac{1}{\sqrt{m\cdot p_{\ell_r}}} \cdot e_{\ell_r}^\top$ for every $r \in [m]$ \label{line-sample-matrix-construct}
	\STATE {\bf return} $\wt{\D} = \diag(\alpha)$ and $\bPi$
\end{algorithmic}
\end{algorithm}

The procedure for computing  $\wt{\D}$ and sampler $\bPi$ is presented in \cref{alg-outer-loop}.
We state the correctness of \cref{alg-outer-loop} in the following theorem and prove it in \cref{sec:proof-correctness-outer-loop}.

\begin{restatable}[Correctness of \cref{alg-outer-loop}]{theorem}{thrmouterloop}\label{thm-correctness-outerloop}
For any matrices $\Q, \K, \V \in \RR^{n \times d}$, any $\varepsilon>0$, and number of samples $m = \Omega\left( \varepsilon^{-2} \log n \cdot (\sr({\D}^{-1} \A) + \sr(\V)) \right)$, given access to a primitive \textsc{WExpKDE} as per \cref{def:ExpKDE}, \cref{alg-outer-loop} outputs a diagonal matrix $\wt{\D} \in \RR^{n\times n}$ and a sampling matrix $\bPi \in \RR^{m \times n}$ which satisfy \cref{eq:def-spectral-erro-attention} with probability at least $1 - \frac{1}{\poly(n)}$.
\end{restatable}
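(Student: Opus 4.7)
The plan is to decompose the error via the triangle inequality
\begin{align*}
\normop{\mathrm{Att}(\Q,\K,\V) - \wt{\D}^{-1}\A\bPi^\top\bPi\V}
 &\le \normop{\mathrm{Att}(\Q,\K,\V) - \wt{\D}^{-1}\A\V}\\
 &\quad+\normop{\wt{\D}^{-1}\A\V - \wt{\D}^{-1}\A\bPi^\top\bPi\V},
\end{align*}
and to bound each term by $\tfrac{\varepsilon}{2}\normop{\D^{-1}\A}\normop{\V}$. The first term is exactly the diagonal-scaling error already discussed above the algorithm: applying \cref{def:ExpKDE} to the \textsc{WExpKDE} call in \cref{line-row-norm-attn-outerlopp} gives $\alpha_j\in(1\pm\varepsilon/3)\D_{j,j}$, so $(1-\varepsilon/3)\D\preceq\wt{\D}\preceq(1+\varepsilon/3)\D$ and \eqref{eq:error-D-approx} follows. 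This same multiplicative closeness also lets me replace $\wt{\D}^{-1}\A$ by $\D^{-1}\A$ in both the operator norm and the stable rank up to $(1\pm O(\varepsilon))$ factors, so the sample count stated in the theorem is attainable.

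Next, I would show that $\beta$ produced in \cref{line-col-norm-attention-matrix} is a constant-factor multiplicative approximation to the squared column norms of $\wt{\D}^{-1}\A$. By the guarantee of \cref{def:ExpKDE}, $\beta_i$ approximates $\sum_{j\in[n]} \alpha_j^{-2}\exp(2\langle q_j,k_i\rangle/\sqrt{d})$ up to a factor $1\pm 1/3$; using $\A_{j,i}=\exp(\langle q_j,k_i\rangle/\sqrt{d})$ and $\alpha_j=\wt{\D}_{j,j}$ this sum rewrites as
\begin{equation*}
\sum_{j\in[n]} \big(\A_{j,i}/\alpha_j\big)^{2} = \|(\wt{\D}^{-1}\A)_{*,i}\|_2^{2}.
\end{equation*}
Combined with the exactly-computed row norms $\|v_i\|_2^2$ and $\normop{\V}$ from \cref{line-gamma}, the normalized probabilities $\{p_\ell\}$ from \cref{line-dist-p} therefore approximate, up to an absolute constant, the mixed leverage-type distribution required by \cref{lem:amm-operator-norm}.

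I would then invoke \cref{lem:amm-operator-norm} with $\X^\top=\wt{\D}^{-1}\A$, $\Y=\V$, and error parameter $\varepsilon/2$ to obtain $\normop{\wt{\D}^{-1}\A\bPi^\top\bPi\V-\wt{\D}^{-1}\A\V}\le\tfrac{\varepsilon}{2}\normop{\wt{\D}^{-1}\A}\normop{\V}$ with probability $\ge 1-1/\poly(n)$ as long as $m=\Omega(\varepsilon^{-2}\log n\cdot(\sr(\wt{\D}^{-1}\A)+\sr(\V)))$. Folding this into the triangle-inequality decomposition, replacing $\wt{\D}^{-1}\A$ by $\D^{-1}\A$ as above, and taking a union bound over the $O(1)$ failure events from the two \textsc{WExpKDE} calls and the AMM sampling yields the desired \eqref{eq:def-spectral-erro-attention}.

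The main obstacle I expect is verifying the precondition of \cref{lem:amm-operator-norm}. The lemma is stated with $\gamma=\normop{\wt{\D}^{-1}\A}^{2}/\normop{\V}^{2}$, while the algorithm opts for the simpler $\gamma=\normop{\V}^{-2}$ in \cref{line-gamma}. Since $\wt{\D}^{-1}\A$ is an $(1\pm\varepsilon/3)$-perturbation of a row-stochastic matrix, its operator norm is at least $1-O(\varepsilon)$, so the algorithm's $\gamma$ is a valid lower bound on the lemma's; the careful work is then to check that, together with the $(1\pm 1/3)$ multiplicative slack in $\beta$, the sampling probabilities still land in the factor-$1/4$ window demanded by the lemma. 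Once this verification is in place, the rest of the proof is routine bookkeeping.
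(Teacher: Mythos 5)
Your decomposition into the $\wt{\D}$-approximation error \eqref{eq:error-D-approx} plus the AMM error \eqref{eq:error-bound-sampler-attn}, the invocation of \cref{lem:amm-operator-norm} with $\X^\top=\wt{\D}^{-1}\A$ and $\Y=\V$, and the identification of $\beta_j$ as a $(1\pm 1/3)$-approximation to the squared $j$-th column norm of $\wt{\D}^{-1}\A$ all match the paper's proof. You also correctly flag the one delicate point: \cref{line-gamma} of \cref{alg-outer-loop} sets $\gamma=\normop{\V}^{-2}$, whereas \cref{lem:amm-operator-norm} wants $\gamma = \normop{\wt{\D}^{-1}\A}^{2}/\normop{\V}^{2}$.

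However, the argument you sketch to close this gap does not work, and it is a genuine gap rather than ``careful work'' left to routine bookkeeping. Observing that $\normop{\wt{\D}^{-1}\A}\ge 1-O(\varepsilon)$ only shows that the algorithm's $\gamma$ is \emph{no larger} than the lemma's, which is the wrong direction: \cref{lem:amm-operator-norm} requires the sampling distribution $\{p_\ell\}$ to \emph{dominate} the specific mixture determined by its own $\gamma$, and shrinking $\gamma$ shrinks the weight placed on the $\norm{v_\ell}_2^2$ terms. Concretely, for a row $\ell$ with $\norm{x_\ell}_2\approx 0$ but $\norm{v_\ell}_2>0$, the algorithm's $p_\ell$ would fall below the required threshold by roughly a factor $\normop{\wt{\D}^{-1}\A}^{-2}$, and the factor-$1/4$ slack cannot absorb this if the operator norm is not $\Theta(1)$. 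What the paper actually uses — and what is missing from your proposal — is the matching \emph{upper} bound. The paper's proof opens by establishing $\normop{\D^{-1}\A}=1$ (lower bound from $\D^{-1}\A\mathbf{1}_n=\mathbf{1}_n$; upper bound which it attributes to row-stochasticity), after which $\gamma=\normop{\V}^{-2}$ coincides \emph{exactly} with the lemma's parameter for the unperturbed softmax matrix. Only then do the $(1\pm\varepsilon/3)$ closeness of $\wt{\D}$ to $\D$ and the $(1\pm 1/3)$ slack in $\beta$ combine to verify the precondition for $\X^\top=\wt{\D}^{-1}\A$ comfortably inside the $1/4$ window, and to bound $\sr(\wt{\D}^{-1}\A)\le 2\,\sr(\D^{-1}\A)$ and $\normop{\wt{\D}^{-1}\A}\le 2\normop{\D^{-1}\A}$ for the final substitution. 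That equality $\normop{\D^{-1}\A}=1$ is the linchpin of the paper's verification of the AMM precondition; your proposal stops just short of it.
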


So, to spectrally approximate $\mathrm{Att}(\Q, \K, \V)$, it is enough to run \cref{alg-outer-loop}. 
This algorithm relies on the existence of primitive \textsc{WExpKDE} as per \cref{def:ExpKDE}, therefore, we focus on efficient implementation of \textsc{WExpKDE}.

\subsection{Weighted Exponential KDE}\label{sec:expKDE_construction}
Here, we devise an efficient algorithm that satisfies the desired properties of \textsc{WExpKDE} as per \cref{def:ExpKDE}.
We show that this procedure is tightly related to and can be translated to an instance of the Gaussian KDE. 
First note that if all data-points in dataset $\X$ were on a sphere, i.e., $\norm{x_i}_2 = r$ for all $i\in[n]$ and some $r>0$, then the weighted exponential kernel density corresponding to the weights $v = \frac{1}{n} \cdot \mathbf{1}_n$ would be equal to $e^{(\norm{q}_2^2 + r^2)/2}\cdot \U_{\X}(q)$, where $\U_{\X}(q)$ is defined as in \cref{eq:def-Gauss-kde}.

Our proposed \textsc{WExpKDE} primitive employs a fast Gaussian KDE method as per \cref{thm-Gauss-kde-moses}. The weighted exponential kernel density for a query point $q$ and weight vector $v \in \RR_+^{n}$ can be written as,
\begin{align}
\sum_{i \in [n]} v_i e^{\langle x_i, q \rangle} = e^{\frac{ \norm{q}_2^2}{2}} \sum_{i \in [n]} v_i e^{\frac{ \norm{x_i}_2^2}{2}} \cdot e^{-\frac{\norm{x_i - q}_2^2}{2} }. \label{eq:wexpkde-toGauss-kde-reduction-intermed-step}
% &\sum_{i \in [n]} v_i \exp\left({\langle x_i, q \rangle}\right) \\
% &= \exp\left(\frac{ \norm{q}_2^2}{2}\right) \sum_{i \in [n]} v_i \exp\left(\frac{ \norm{x_i}_2^2}{2}\right) \cdot \exp\left(-\frac{\norm{x_i - q}_2^2}{2}\right). \label{eq:wexpkde-toGauss-kde-reduction-intermed-step}
\end{align}
Let us define $w_i := \sqrt{ 2 \log \frac{\sum_{j \in [n]} v_j \exp(\norm{x_j}_2^2 /2 )}{v_i \cdot \exp( \norm{x_i}_2^2 / 2 )} }$ for every $i \in [n]$ and define the augmented dataset $\X' \in \RR^{n \times (d+1)}$ as $x_i' := x_i \oplus [w_i]$ for every $i \in [n]$. Also let the augmented query point be $q':= q \oplus [0]$. 
Then, the r.h.s. in \cref{eq:wexpkde-toGauss-kde-reduction-intermed-step} can be written as
% For the augmented dataset $\X'$ and augmented query $q'$, using \cref{eq:wexpkde-toGauss-kde-reduction-intermed-step}, we have,
\begin{align}
e^{\frac{ \norm{q}_2^2}{2}} \sum_{i \in [n]} v_i e^{\frac{ \norm{x_i}_2^2}{2}} \cdot \exp \left(-\frac{ \norm{x_i' - q'}_2^2}{2} + \frac{w_i^2}{2} \right) = n \cdot e^{\frac{\norm{q}_2^2}{2}}  \sum_{j \in [n]} v_j e^{ \frac{\norm{x_j}_2^2}{2} }  \cdot \U_{\X'}(q') \label{eq:expkde-reduced-gauss-kde}.
\end{align}
Therefore, the weighted exponential kernel density can be obtained from the Gaussian kernel density corresponding to the augmented dataset $\X'$ and augmented query $q'$, i.e., $\U_{\X'}(q')$. 
The augmented dataset can be constructed very efficiently in time $O(nd)$, so given a fast Gaussian KDE as per \cref{thm-Gauss-kde-moses}, \cref{eq:expkde-reduced-gauss-kde} shows us an efficient way to implement the \textsc{WExpKDE} procedure. 
Our proposed procedure is presented in \cref{alg-w-exp-kde}. Note that, fast Gaussian KDE requires a lower bound $\wt{\mu}$ on the kernel density value $\U_{\X'}(q')$, and we show how to adaptively learn $\wt{\mu}$ in \cref{alg-w-exp-kde} using the fact that if \textsc{QueryKDE}$({\tt DS_{kde}}, q')$ outputs zero we can infer that our lower bound was too high.
\begin{algorithm}[t]
\caption{Weighted Exponential KDE (\textsc{WExpKDE})} \label{alg-w-exp-kde}
% \setstretch{1.2}
\begin{algorithmic}[1]
	\STATE {\bf input}: matrices $\X, \Y \in \RR^{n \times d}$, vector $v \in \RR_+^n$, error parameter $\varepsilon>0$, and $\tau > 0$
	\STATE $\mu \gets 1/n$ and $S \gets [n]$ and $\alpha \gets \mathbf{0}_n$
	\STATE $N \gets \sum_{j \in [n]} v_j e^{\frac{\norm{x_j}_2^2}{2}}$ 
	\STATE $w_i \gets \sqrt{ 2 \log \frac{N}{v_i \cdot \exp( \norm{x_i}_2^2 / 2 )} }$ for every $i \in [n]$ \label{line-def-N}
	% \STATE let $\X' , \Y' \in \RR^{n \times (d+1)}$ be $\X' = [\X ; w]$, $\Y' = [\Y ; \mathbf{0}_n]$ \label{lin-def-XYprime}
	\STATE $\X' \gets [\X ; w] \in \RR^{n\times (d+1)}$, $\Y' \gets [\Y ; \mathbf{0}_n] \in \RR^{n \times (d+1)}$ \label{lin-def-XYprime}
	\WHILE{$ \mu^{-\tau} \le \varepsilon^2 \cdot |S| $}\label{line-while-loop}
	\STATE ${\tt DS_{kde}}\gets\textsc{PreprocessKDE}(\X',\varepsilon, \mu)$ \label{line-Gauss-kde-preprocess}
	% as per \cref{thm-Gauss-kde-moses} \label{line-Gauss-kde-preprocess}
	\STATE $\alpha_i \gets n \cdot N \cdot e^{\frac{\norm{y_i}_2^2}{2}} \cdot \textsc{QueryKDE}({\tt DS_{kde}}, y_i')$ for every $i \in S$ \label{line-Gauss-kde-query}
	% as per \cref{thm-Gauss-kde-moses} 
	\STATE $\mu \gets \mu/2$ and $S \gets \{ i \in [n] : \alpha_i = 0 \}$ \label{line-set-S}
	\ENDWHILE
	\STATE $\alpha_j \gets \sum_{i \in [n]} v_i \cdot \exp(\langle x_i, y_j \rangle)$ for every $j \in S$ \label{line-final-alpha-update}
	\STATE {\bf return} $\alpha$
\end{algorithmic}
\end{algorithm}
We analyze \cref{alg-w-exp-kde} in the following theorem.
\begin{theorem}[Analysis of \cref{alg-w-exp-kde}]\label{thm-corrctness-runtime-wexpkde-alg}
For every matrices $\X, \Y \in \RR^{n \times d}$, any non-negative vector $v \in \RR_+^n$, and any $\varepsilon \in (0,1)$, and given a fast Gaussian KDE as per \cref{thm-Gauss-kde-moses}, \cref{alg-w-exp-kde} outputs a vector $\alpha \in \RR^n$ which satisfies the desired conditions of \cref{def:ExpKDE} (i.e., \cref{eq:wexp-desired-error}). 
Furthermore, this procedure's runtime is $O\left( n d \cdot \Ccal_{\X, \Y, v, \varepsilon, \tau} \right)$, where 
\begin{align}
	\Ccal_{\X, \Y, v, \varepsilon, \tau} := \min_{\mu > 0} \, \frac{1}{\varepsilon^{2}  \mu^{\tau}} + \left| \left\{ i\in[n]:  \frac{\sum_{j=1}^n v_j e^{\langle x_j, y_i \rangle}}{ \sum_{j=1}^n v_j e^{\frac{\norm{x_j}_2^2 + \norm{y_i}_2^2}{2} } } < n \mu \right\} \right| \label{eq:runtime-bound-WExpKDE}
\end{align}
\end{theorem}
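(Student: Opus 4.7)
The plan is to decompose the proof into correctness and runtime, both anchored on the algebraic identity
$$\sum_{i \in [n]} v_i \exp(\langle x_i, y_j \rangle) = n \cdot N \cdot e^{\|y_j\|_2^2/2} \cdot \U_{\X'}(y_j'),$$
which was derived in \cref{eq:expkde-reduced-gauss-kde} via the choice $w_i = \sqrt{2\log(N / (v_i e^{\|x_i\|_2^2/2}))}$ and the augmentation $x_i' = x_i \oplus [w_i]$, $y_j' = y_j \oplus [0]$. This identity recasts the weighted exponential KDE as a scaled Gaussian KDE on the augmented dataset $\X'$ at query $y_j'$, so all guarantees can be pushed through \cref{thm-Gauss-kde-moses}.

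For correctness, fix any index $j \in [n]$ and any iteration of the while loop that uses density lower bound $\mu$. \cref{thm-Gauss-kde-moses} guarantees that \textsc{QueryKDE} returns a $(1\pm\varepsilon)$-approximation of $\U_{\X'}(y_j') \cdot \mathbbm{1}_{\{\mu \le \U_{\X'}(y_j')\}}$, so a nonzero return must itself be a $(1\pm\varepsilon)$-approximation of $\U_{\X'}(y_j')$. \cref{line-Gauss-kde-query} rescales by $n N e^{\|y_j\|_2^2/2}$, so by the identity above $\alpha_j$ becomes a $(1\pm\varepsilon)$-approximation of $\sum_i v_i e^{\langle x_i, y_j\rangle}$; thereafter $j \notin S$ and $\alpha_j$ is never overwritten. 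Any $j$ still in $S$ when the loop terminates is assigned the exact value on \cref{line-final-alpha-update}. This verifies \cref{eq:wexp-desired-error}.

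For the runtime, the one-time precomputations of $N$, the $w_i$'s, and $\X',\Y'$ cost $O(nd)$. Let $\mu_k = 2^{-(k-1)}/n$ denote the $\mu$-value used in iteration $k$ and let $\mu^*$ denote the last such value. Iteration $k$ costs $O(\varepsilon^{-2} dn /\mu_k^\tau)$ for \textsc{PreprocessKDE} plus at most as much for the $|S_{k-1}|$ queries; since $\{\mu_k^{-\tau}\}$ grows geometrically, the sum over all iterations is dominated by the last one, giving a total of $O(\varepsilon^{-2} dn /(\mu^*)^\tau)$. The residual set at loop exit is $S_{\text{final}} = \{i : \U_{\X'}(y_i') < \mu^*\}$, which by the identity is precisely $\{i : \sum_j v_j e^{\langle x_j, y_i\rangle} / \sum_j v_j e^{(\|x_j\|_2^2 + \|y_i\|_2^2)/2} < n \mu^*\}$, the set appearing in \cref{eq:runtime-bound-WExpKDE}; the naive pass on \cref{line-final-alpha-update} contributes $O(nd |S_{\text{final}}|)$.

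It remains to check that this adaptively chosen $\mu^*$ is competitive with the minimum over $\mu$ in \cref{eq:runtime-bound-WExpKDE}. The exit condition gives $(\mu^*/2)^{-\tau} > \varepsilon^2 |S_{\text{final}}|$, so $|S_{\text{final}}| < 2^\tau \varepsilon^{-2}/(\mu^*)^\tau$ and the algorithm's total runtime collapses to $O(nd\,\varepsilon^{-2}/(\mu^*)^\tau)$. For any competitor $\mu \le \mu^*$ the term $\varepsilon^{-2}/\mu^\tau$ already matches this. For any competitor $\mu \ge 2\mu^*$, the entry condition at the last iteration, $\varepsilon^{-2}/(\mu^*)^\tau \le |S_{2\mu^*}|$, combined with monotonicity of $\mu \mapsto |\{i : \U_{\X'}(y_i') < \mu\}|$, forces the set-size term to exceed $\varepsilon^{-2}/(\mu^*)^\tau$. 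The narrow intermediate band $\mu \in (\mu^*, 2\mu^*)$ is handled by $\varepsilon^{-2}/\mu^\tau \ge 2^{-\tau} \varepsilon^{-2}/(\mu^*)^\tau$. I expect this competitive-ratio argument to be the main technical hurdle; the algebraic reduction to Gaussian KDE is already in place before the theorem, so everything else is careful bookkeeping of the geometric $\mu$-schedule and translating between the Gaussian-density and weighted-exponential-density languages.
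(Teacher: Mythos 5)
Your proof follows essentially the same route as the paper's: correctness via the Gaussian--KDE reduction identity of \cref{eq:expkde-reduced-gauss-kde} together with the $(1\pm\varepsilon)$-guarantee of \textsc{QueryKDE}, and runtime via the geometric sum over the halving $\mu$-schedule, with the while-exit condition bounding the exact-computation residual on $S_{\mathrm{final}}$ by the last preprocessing cost. Your three-case competitive-ratio argument ($\mu\le\mu^*$, $\mu\ge 2\mu^*$, and the narrow band in between) is actually more explicit than the paper's, which merely asserts that the adaptive $\mu_T$ matches $\min_\mu$; for completeness one should note that the case $T=0$ (where $S_T=[n]$ rather than a strict density level set) is still covered because $\U_{\X'}(y_i')\le 1/n$ always, by AM--GM applied to $e^{\langle x_j,y_i\rangle}\le e^{(\|x_j\|^2+\|y_i\|^2)/2}$, so $\{i:\U_{\X'}(y_i')<2\mu_0\}=[n]$ and your invocation of the entry condition goes through.
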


\begin{proof}
First, we prove the correctness. 
Let us index the iterations of the algorithm's while loop by $t=0,1,2, \ldots $ and let $\mu_t$, $\alpha_t$, and $S_t$ denote the value of $\mu$, the vector $\alpha$, and set $S$ at $t^{th}$ iteration. 
We have $|S_t| \le n$ and $\mu_t = \frac{1}{n \cdot 2^t}$ for every $t$, thus, the algorithm must terminate in $T = O(\log n)$ iterations.
Also, by \cref{thm-Gauss-kde-moses}, the set $S_{t+1}$ computed in line~9 equals $S_{t+1} = \{ i\in[n]: \U_{\X'}(y_i') < \mu_t \}$, because the fast Gaussian KDE procedure outputs zero if and only if $\U_{\X'}(y_i') < \mu_t$.

Next, we show by induction that at every iteration $t$, $\alpha_t(i)$ is within $(1\pm \varepsilon)$ factor of $n N  e^{\frac{\norm{y_i}_2^2}{2}} \cdot \U_{\X'}(y_i')$ for all $i \in [n] \setminus S_t$.
{\bf Base of induction} is trivial because $S_0 = [n]$. 
For proving the {\bf inductive step}, note that in lines~7-8 $\alpha_{t+1}(i)$ is updated for every $i \in S_t$ by invoking the fast Gaussian KDE procedure and $\alpha_{t+1}(i) = \alpha_t(i)$ for $i \in [n] \setminus S_t$. 
Thus, by the inductive hypothesis and \cref{thm-Gauss-kde-moses} as well as definition of $S_{t+1}$ in line~9, $\alpha_{t+1}(i)$ is within $(1\pm \varepsilon)$ factor of $n N  e^{\frac{\norm{y_i}_2^2}{2}} \cdot \U_{\X'}(y_i')$ for all $i \in [n] \setminus S_{t+1}$, which completes the inductive proof.
Using the definition of $N$ in line~3 and definition of $\X', \Y'$ in line~5 along with \cref{eq:expkde-reduced-gauss-kde}, the invariant that we proved implies that for every $t = 0,1, \ldots T$, $\alpha_{t}(i)$ is within $(1\pm \varepsilon)$ factor of $\sum_{j \in [n]} v_j \cdot \exp(\langle x_j, y_i \rangle)$ for all $i \in [n] \setminus S_t$. 
After exiting the while loop, $\alpha(i)$ is updated at all $i \in S_{T+1}$ in line~\ref{line-final-alpha-update} as $\alpha(i) = \sum_{j \in [n]} v_j \cdot \exp(\langle x_j, y_i \rangle)$, and $\alpha(i) = \alpha_{T}(i)$ for every $i \in [n] \setminus S_T$.
This proves that the output vector $\alpha$ satisfies \cref{eq:wexp-desired-error}, which completes the correctness proof.

\paragraph{Runtime Analysis.}
The runtime has three components; 

\begin{enumerate}
	\item Time to run \textsc{PreprocessKDE} in line~7. The total time of running this primitive in all iterations $t=0,1, \ldots T$ is $O\left( \sum_{t=0}^T \frac{d \cdot n}{\varepsilon^2} \mu_t^{-\tau} \right)$, by \cref{thm-Gauss-kde-moses}. Since $\mu_t = \frac{1}{n \cdot 2^t}$, this runtime is bounded by $O\left( \frac{d \cdot n}{\varepsilon^2} \mu_T^{-\tau} \right)$.
	
	\item  Time to run \textsc{QueryKDE} in line~8. By \cref{thm-Gauss-kde-moses}, the total time to run this procedure in all iterations is ${O}\left( \frac{d}{\varepsilon^2} \cdot \sum_{t=0}^T \sum_{i \in S_t} \left( \mu_t + \U_{\X'}(y_i') \right)^{-\tau} \right)$. Because $|S_t| \le n$, this runtime complexity is completely dominated by {\bf (1)}.
	
	\item Time to exactly compute the weighted exponential densities of the points with very small $\U_{\X'}(y_i')$ value in line~10. This runtime is bounded by $O(nd \cdot |S_{T+1}|)$.
\end{enumerate}

Now we combine these bounds. Using the assumption that the algorithm terminated at iteration $t=T$, the while loop condition at iteration $T+1$ must fail. 
Therefore, $|S_{T+1}| < \mu_{T+1}^{-\tau} /\varepsilon^2 < 2 \mu_{T}^{-\tau} /\varepsilon^2$.
This shows that the first component of the runtime must dominate the third component. 
So the total time is bounded by $O\left( \frac{d \cdot n}{\varepsilon^2} \mu_T^{-\tau} \right)$. 

Recall that the while loop terminates at iteration $T$ meaning that $ \varepsilon^{-2} \mu_t^{-\tau} \le {|S_t|}$ for every $t = 0,1, \ldots T$ and $ \varepsilon^{-2} \mu_{T+1}^{-\tau} > {|S_{T+1}|}$. 
So, $T$ is the largest integer that satisfies $ \varepsilon^{-2} \mu_T^{-\tau} \le {|S_T|}$.
Also recall that $S_t = \{ i\in[n]: \U_{\X'}(y_i') < \mu_{t-1} \}$ and $\mu_t = \frac{1}{n \cdot 2^t}$. Thus, the runtime of the procedure can be expressed as,
\[ 
O( n d ) \cdot \min_{\mu > 0}  \varepsilon^{-2} \mu^{-\tau} + \left| \left\{ i\in[n]: \U_{\X'}(y_i') < \mu \right\} \right|.
\]
The definition of $\X', \Y'$ in line~5 along with \cref{eq:expkde-reduced-gauss-kde} gives the claimed runtime bound in \cref{eq:runtime-bound-WExpKDE}.
\end{proof}

To get a better understanding of the runtime bound in \cref{thm-corrctness-runtime-wexpkde-alg}, suppose that datasets $\X, \Y$ are such that cardinality of set $\left\{ i\in[n]:  \frac{\sum_{j \in [n]} v_j \exp(\langle x_j, y_i \rangle)}{ \sum_{j \in [n]} v_j \exp\left( \frac{\norm{x_j}_2^2 + \norm{y_i}_2^2}{2} \right)} \le n^{-o(1)} \right\}$
is upper bounded by $O\left( \varepsilon^{-2} \cdot n^{\tau} \right)$.
For such datasets, the runtime of \cref{thm-corrctness-runtime-wexpkde-alg} is bounded by $O\left( {\varepsilon^{-2}}{d} \cdot n^{1 + \tau + o(1)} \right)$, which is strongly sub-quadratic in $n$.

% \vspace{-0.12in}
\subsection{Main Result}\label{sec:main_result}
% \vspace{-0.08in}
Now we are in a position to prove our main result, i.e., an efficient algorithm that can approximate the attention mechanism with spectral guarantees as per \cref{eq:def-spectral-erro-attention}.

\begin{restatable}[Approximate Attention with Spectral Norm Bound]{theorem}{mainthrmspectralguarantee}\label{thm-main-attenstion-full-alg}
For any matrices $\Q, \K, \V \in \RR^{n \times d}$, any $\varepsilon>0$, and given a fast Gaussian KDE as per \cref{thm-Gauss-kde-moses}, there exists an algorithm that outputs a diagonal matrix $\wt{\D} \in \RR^{n\times n}$ and a sampling matrix $\bPi \in \RR^{m \times n}$ with $m = O\left( \varepsilon^{-2} \log n \cdot (\sr({\D}^{-1} \A) + \sr(\V)) \right)$ samples which satisfy \cref{eq:def-spectral-erro-attention} with probability at least $1 - \frac{1}{\poly(n)}$.
The runtime of this algorithm is $O\left( m + nd \cdot \left( \Ccal_{ \frac{\K}{d^{1/4}} , \frac{\Q}{d^{1/4}}, \mathbf{1}_n, \varepsilon, \tau} +  \Ccal_{\frac{\sqrt{2} \cdot \Q}{d^{1/4}}, \frac{\sqrt{2} \cdot \K}{d^{1/4}}, v, 1, \tau} \right) \right)$,
where $v_{j} = \left( \sum_{\ell \in [n]} \exp\left( \frac{1}{\sqrt{d}} \langle q_j, k_\ell \rangle \right) \right)^{-2}$ for $j \in [n]$ and $\Ccal_{ \frac{\K}{d^{1/4}} , \frac{\Q}{d^{1/4}}, \mathbf{1}_n, \varepsilon, \tau}, \Ccal_{\frac{\sqrt{2} \cdot \Q}{d^{1/4}}, \frac{\sqrt{2} \cdot \K}{d^{1/4}}, v, 1, \tau}$ are defined as in \cref{eq:runtime-bound-WExpKDE}.
\end{restatable}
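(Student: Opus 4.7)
The plan is to instantiate \cref{alg-outer-loop} (KDEformer) using \cref{alg-w-exp-kde} as the implementation of the \textsc{WExpKDE} primitive, and then combine the correctness guarantee of \cref{thm-correctness-outerloop} with the per-call runtime bound from \cref{thm-corrctness-runtime-wexpkde-alg}. Correctness is largely black-box: \cref{thm-corrctness-runtime-wexpkde-alg} says that \cref{alg-w-exp-kde} meets the interface of \cref{def:ExpKDE}, and then \cref{thm-correctness-outerloop} says that plugging any such primitive into \cref{alg-outer-loop} yields $\wt{\D}$ and $\bPi$ satisfying \cref{eq:def-spectral-erro-attention} with probability $1-1/\poly(n)$, provided $m = \Omega(\varepsilon^{-2}\log n\cdot(\sr(\D^{-1}\A)+\sr(\V)))$.

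\textbf{Runtime accounting.} I would walk line-by-line through \cref{alg-outer-loop}. Line~\ref{line-gamma} computes a constant-factor estimate of $\norm{\V}_\op$ via $O(\log n)$ power iterations of cost $O(nd)$ each. Line~\ref{line-row-norm-attn-outerlopp} is a single \textsc{WExpKDE} call of cost $O(nd\cdot\Ccal_{\K/d^{1/4},\Q/d^{1/4},\mathbf 1_n,\varepsilon,\tau})$ by \cref{thm-corrctness-runtime-wexpkde-alg}. Line~\ref{line-col-norm-attention-matrix} is a second \textsc{WExpKDE} call with weights $u_i=1/\alpha_i^2$; by correctness of the first call, $\alpha_i\in(1\pm\varepsilon/3)\D_{i,i}$, hence $u_i\in\Theta(1)\cdot v_i$ for $v_j=(\sum_\ell\exp(\langle q_j,k_\ell\rangle/\sqrt d))^{-2}$. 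Because the quantity $\Ccal$ in \cref{eq:runtime-bound-WExpKDE} is invariant under rescaling the weight vector by a constant (it appears only through ratios of weighted sums), this call costs $O(nd\cdot\Ccal_{\sqrt 2\Q/d^{1/4},\sqrt 2\K/d^{1/4},v,1,\tau})$. Lines~\ref{line-dist-p}--\ref{line-sample-matrix-construct} build the distribution and emit $\bPi$ in $O(n+m)$ time. Summing gives the runtime in the statement.

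\textbf{The one thing to check.} The nontrivial verification, which the proof of \cref{thm-correctness-outerloop} is essentially devoted to, is that the distribution $\{p_\ell\}$ produced in line~\ref{line-dist-p} meets the sampling-probability lower bound in \cref{lem:amm-operator-norm} applied with $\X^\top=\wt{\D}^{-1}\A$ and $\Y=\V$. The true squared $i$-th column norm of $\D^{-1}\A$ is exactly $\sum_j\D_{j,j}^{-2}\exp(2\langle q_j,k_i\rangle/\sqrt d)$, i.e.\ a weighted exponential kernel density at $\sqrt 2 k_i/d^{1/4}$ with weights $\D_{j,j}^{-2}$; the approximate weights $u_j=\alpha_j^{-2}$ only perturb this by $(1\pm\varepsilon/3)^2$, and the accuracy target $1/3$ given to the second \textsc{WExpKDE} call then leaves us within the constant $1/4$ slack required by \cref{lem:amm-operator-norm}. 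The same $(1\pm\varepsilon/3)$ sandwich $\wt{\D}\preceq(1+\varepsilon/3)\D$ also shows $\sr(\wt{\D}^{-1}\A)=\Theta(\sr(\D^{-1}\A))$, so the hypothesis $m=\Omega(\varepsilon^{-2}\log n\cdot(\sr(\wt{\D}^{-1}\A)+\sr(\V)))$ of \cref{lem:amm-operator-norm} follows from the assumption in the theorem. Combining this with \cref{eq:error-D-approx} and a triangle inequality gives \cref{eq:def-spectral-erro-attention}, completing the proof.
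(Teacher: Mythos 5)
Your proposal tracks the paper's proof essentially line for line: instantiate \cref{alg-outer-loop} with \cref{alg-w-exp-kde} as the \textsc{WExpKDE} oracle, cite \cref{thm-correctness-outerloop} and \cref{thm-corrctness-runtime-wexpkde-alg} for correctness via a union bound, and sum the per-line costs including the $O(n+m)$ sampler. One small imprecision in the runtime step: you justify replacing $\Ccal_{\cdot,\cdot,u,1,\tau}$ by $\Ccal_{\cdot,\cdot,v,1,\tau}$ by asserting that $\Ccal$ is ``invariant under rescaling the weight vector by a constant,'' but $u$ is not a global constant rescaling of $v$ — rather $u_j \in (1\pm\varepsilon/3)^{-2} v_j$ with per-coordinate factors — and what is actually used (and what the paper also uses, citing $\tfrac12 v_j \le u_j \le \tfrac32 v_j$) is that $\Ccal$ changes by at most a constant factor under bounded coordinate-wise multiplicative perturbation of the weights, which holds because the threshold condition in \cref{eq:runtime-bound-WExpKDE} depends only on ratios of weighted sums.
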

We prove this theorem in \cref{appendix_proof_main_thrm}.
The runtime bound in \cref{thm-main-attenstion-full-alg} can be simplified for datasets $\Q,\K$ with bounded diameter as follows,

\begin{restatable}[Simplified Runtime for Bounded Diameter Datasets]{corr}{corrsimplifiedruntimr}\label{corr-simplified-runtime}
For any datasets $\Q, \K$ with diameter $\max_{i,j \in [n]} \norm{k_i - q_j}_2^2 = \gamma\sqrt{d}\log n $ for some $\gamma > 0$,
the runtime of \cref{thm-main-attenstion-full-alg} is upper bounded by $O\left(m + nd \cdot \left( n^{\tau(1+\gamma)} + \varepsilon^{-2} n^{\tau(1+\gamma/2)} \right) \right)$, which is strongly sub-quadratic in $n$.
In particular, if $\gamma = o(1)$, the runtime is bounded by $O\left(m + \varepsilon^{-2} d \cdot  n^{1+\tau + o(1)} \right)$.
\end{restatable}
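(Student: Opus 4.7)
The plan is to start from the general runtime bound $O(m + nd(\Ccal_1 + \Ccal_2))$ in \cref{thm-main-attenstion-full-alg}, where $\Ccal_1 := \Ccal_{\K/d^{1/4},\Q/d^{1/4},\mathbf{1}_n,\varepsilon,\tau}$ and $\Ccal_2 := \Ccal_{\sqrt{2}\Q/d^{1/4},\sqrt{2}\K/d^{1/4},v,1,\tau}$, and to bound each $\Ccal$ separately by exploiting the diameter assumption. The core observation is that the ratio appearing inside the set in \cref{eq:runtime-bound-WExpKDE} can be rewritten as a convex combination of Gaussian kernel values via the polarization identity $\langle x_j, y_i\rangle = \tfrac{1}{2}(\norm{x_j}_2^2 + \norm{y_i}_2^2 - \norm{x_j - y_i}_2^2)$. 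Specifically, for any weights $v_j \ge 0$,
\begin{equation*}
\frac{\sum_j v_j e^{\langle x_j, y_i\rangle}}{\sum_j v_j e^{(\norm{x_j}_2^2 + \norm{y_i}_2^2)/2}}
= \sum_j \frac{v_j e^{(\norm{x_j}_2^2 + \norm{y_i}_2^2)/2}}{\sum_\ell v_\ell e^{(\norm{x_\ell}_2^2 + \norm{y_i}_2^2)/2}} \cdot e^{-\norm{x_j - y_i}_2^2/2},
\end{equation*}
which is a convex combination of Gaussian kernel values and is therefore bounded below by $\min_j e^{-\norm{x_j - y_i}_2^2/2}$, independent of the weights.

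Next I would plug in the diameter bound for each of the two instances. For $\Ccal_1$, the rescaling by $d^{-1/4}$ gives $\norm{x_j - y_i}_2^2 = \norm{k_j - q_i}_2^2/\sqrt{d} \le \gamma \log n$ under the hypothesis, so the ratio is at least $n^{-\gamma/2}$. Choosing $\mu = n^{-1-\gamma/2}$ makes the set $\{i : \text{ratio} < n\mu\}$ empty, leaving only the first term $\varepsilon^{-2} \mu^{-\tau} = \varepsilon^{-2} n^{\tau(1+\gamma/2)}$. For $\Ccal_2$, the extra $\sqrt{2}$ factor doubles the exponent in the distance, giving $\norm{x_j - y_i}_2^2 \le 2\gamma \log n$ and a ratio bounded below by $n^{-\gamma}$; choosing $\mu = n^{-1-\gamma}$ again kills the set term, leaving $1^{-2}\cdot \mu^{-\tau} = n^{\tau(1+\gamma)}$ (note the error parameter passed to the second \textsc{WExpKDE} call in \cref{alg-outer-loop} is a constant $1/3$, so no $\varepsilon^{-2}$ appears). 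Summing and substituting into the runtime gives the claimed $O(m + nd(n^{\tau(1+\gamma)} + \varepsilon^{-2} n^{\tau(1+\gamma/2)}))$ bound.

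Finally, to get the $\gamma = o(1)$ specialization, I would observe that $n^{\tau(1+\gamma)} = n^{\tau + o(1)}$ and $n^{\tau(1+\gamma/2)} = n^{\tau + o(1)}$, and since we may always assume $\varepsilon^{-2} \ge 1$ the second summand dominates, yielding $O(m + \varepsilon^{-2} d\cdot n^{1+\tau+o(1)})$. Since $\tau = 0.173 + o(1) < 1$, this is strongly sub-quadratic, establishing the corollary. I anticipate no substantive obstacle: the only slightly subtle point is verifying that the convex-combination lower bound applies uniformly regardless of the weight vector $v$ (including the data-dependent $v_j = 1/\alpha_j^2$ that shows up in $\Ccal_2$), which is immediate once we recognize the ratio as a weighted average with nonnegative weights.
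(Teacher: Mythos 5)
Your proposal is correct and follows essentially the same approach as the paper: both use the polarization identity to rewrite the exponential ratio in terms of Gaussian factors, bound it below by $n^{-\gamma/2}$ (resp.\ $n^{-\gamma}$) via the diameter assumption, and choose $\mu = n^{-1-\gamma/2}$ (resp.\ $n^{-1-\gamma}$) so that the set-cardinality term in $\Ccal_{\X,\Y,v,\varepsilon,\tau}$ vanishes, leaving only $\varepsilon^{-2}\mu^{-\tau}$. Your ``convex combination'' framing is a slightly more compact packaging of the paper's term-by-term inequality, but the underlying calculation and the final bounds are identical.
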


We prove \cref{corr-simplified-runtime} in \cref{appndx-proof-corr}. The current best value for $\tau$ is $\tau = 0.173+o(1)$ due to \citet{charikar2020kernel}, thus, for any datasets of queries $\Q$ and keys $\K$ with diameter $\max_{i,j \in [n]} \norm{k_i - q_j}_2^2 = o(\sqrt{d} \log n)$, our algorithm's runtime is $O\left( m + \varepsilon^{-2} d \cdot n^{1.173+o(1)} \right)$. 

% \vspace{-0.08in}
\subsection{Practical Improvements by Exploiting Sparsity}\label{sec:practical_improvement}
% \vspace{-0.06in}
Our method relies on a sampling-based AMM (\cref{lem:amm-operator-norm}) and the number of samples $m$ is proportional to $\sr(\D^{-1} \A)$ by \cref{thm-main-attenstion-full-alg}.
Here, 
% we introduce a variant of \cref{alg-outer-loop} which first finds and subtracts the \emph{heavy} elements of the softmax matrix in order to reduce its stable rank, 
% This is practically appealing as our method relies on a sampling-based AMM and the number of rows of the sampling matrix $\bPi$ needs to be proportional to $\sr(\D^{-1} \A)$, by \cref{thm-main-attenstion-full-alg}, 
% thus reducing this quantity improves both our memory and runtime.
we propose a practical technique for reducing the stable rank of $\D^{-1} \A$ by finding and subtracting off its ``heavy'' elements. 
Specifically, recall that $\sr(\D^{-1} \A) = \frac{\|{\D^{-1} \A}\|_F^2}{\norm{\D^{-1} \A}_\op^2}$
% , so, reducing the Frobenius norm can lead to a reduction in the stable rank of $\D^{-1} \A$. 
% Additionally, note that 
and the softmax matrix $\D^{-1} \A$ is dominated by its largest elements which correspond to the nearest pairs of queries $q_i$ and keys $k_j$.
Therefore, subtracting off the heavy elements of $\D^{-1} \A$ reduces $\norm{\D^{-1} \A}_F^2$ which in turn can reduce $\sr(\D^{-1} \A)$.

Similar to Reformer~\cite{kitaev2019reformer}, we employ a Locality Sensitive Hashing (LSH) scheme to find dominant entries of the attention matrix $\A$. 
Specifically, let $\Hcal:\RR^d \to [B]$ be an LSH function with $B$ buckets such that the collision probability $\Pr[\Hcal(q_i) = \Hcal(k_j)]$ is ``roughly'' proportional to $\langle q_i , k_j \rangle$.
Given such LSH function, we define the sparse approximation to $\A$ as well as the residual attention matrix as:
\begin{align}
\forall i,j \in [n]:~~~ [\A_{\tt spar}]_{i,j} &:= e^{\frac{\langle q_i , k_j \rangle }{\sqrt{d}} } \cdot \mathbbm{1}_{\{ \Hcal(q_i) = \Hcal(k_j) \}}\nonumber \\ 
\A_{\tt res} &:= \A - \A_{\tt spar}. \label{eq:A_sparse}
\end{align}
Intuitively, the stable rank of $\D^{-1} \A_{\tt res}$ is expected to be smaller than that of $\D^{-1} \A$ because the former has a considerably smaller Frobenius norm. 
We verify this intuition by plotting the singular values distributions of the softmax matrix $\D^{-1} \A$ and the residual $\D^{-1} \A_{\tt res}$ for two real-world instances in \cref{fig:stable_rank}. 
\cref{fig_sing_val_glove} corresponds to when keys and queries are the first $n=2{,}048$ vectors from GloVe word embedding dataset~\cite{pennington2014glove}.
In \cref{fig_sing_val_t2t}, we focused on the first attention layer in Tokens-to-token Vision Transformer (T2T-ViT)~\cite{yuan2021tokens} and an arbitrary batch of images from ImageNet dataset.
In both instances, the singular values of the residual $\D^{-1} \A_{\tt res}$ decay faster than that of $\D^{-1} \A$ while the largest singular value (spectral norm) of both matrices are equal to one.
Thus, as shown in \cref{fig:stable_rank}, subtracting off the sparse component $\D^{-1} \A_{\tt spar}$ reduces the stable rank significantly. 

\begin{figure}[t]
% \vspace{-10pt}
\centering
\subfigure[GloVe dataset]{\label{fig_sing_val_glove}
	\includegraphics[width=0.35\textwidth]{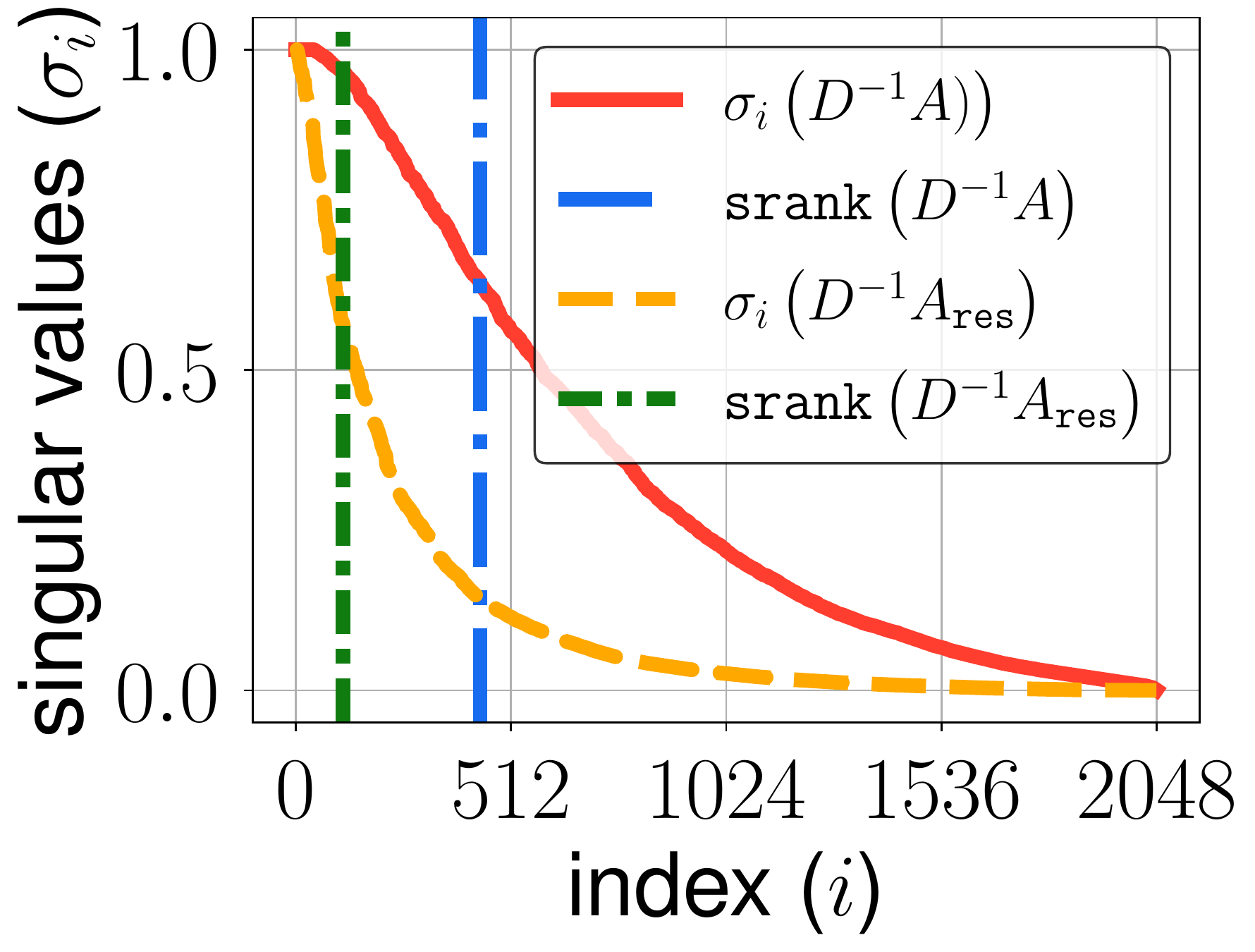}}
 \hspace{0.2in}
\subfigure[T2T-ViT on ImageNet]{\label{fig_sing_val_t2t}
	\includegraphics[width=0.35\textwidth]{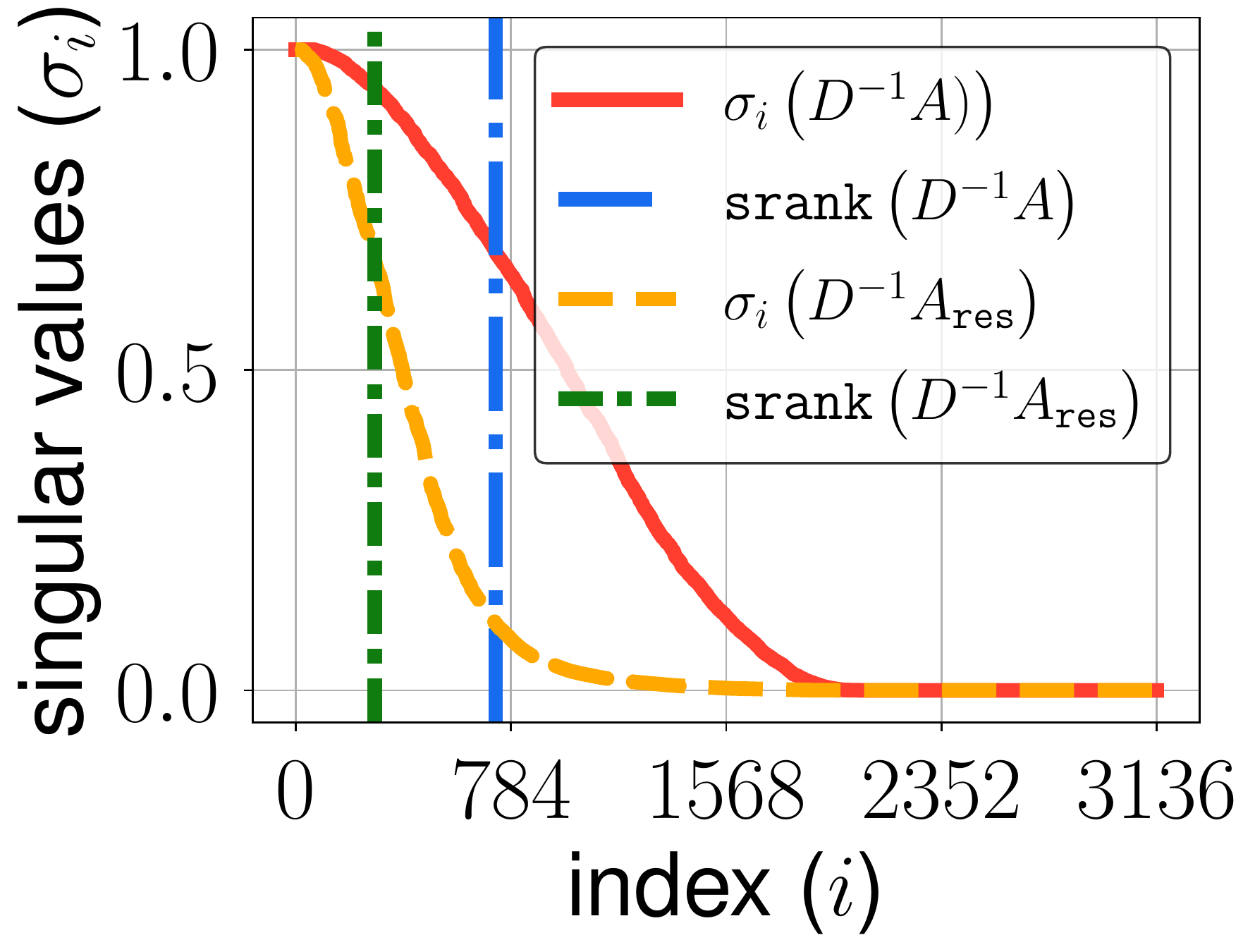}}
% \vspace{-15pt}
\caption{Singular values distribution and stable rank of the softmax matrix $\D^{-1} \A$ versus those of the residual $\D^{-1} \A_{\tt res}$. The stable rank of the residual matrix is significantly smaller.}
\label{fig:stable_rank}
% \vspace{-10pt}
\end{figure}

\begin{algorithm}[t]
\caption{Practical Improvement of KDEformer} \label{alg-practical-adapted}
% \setstretch{1.2}
\begin{algorithmic}[1]
	\STATE {\bf input}: matrices $\Q, \K, \V \in \RR^{n \times d}$, integer $m$, $\varepsilon>0$, and LSH function $\Hcal:\RR^d \to [B]$
	\STATE compute $\alpha, \beta, \gamma$ as per lines~2-4 of \cref{alg-outer-loop}
	\STATE $p_j \gets \beta_j - \sum_{i =1}^n \alpha_{j}^{-2} e^{\frac{2 \langle q_i , k_j \rangle }{\sqrt{d}} } \cdot \mathbbm{1}_{\{ \Hcal(q_i) = \Hcal(k_j) \}} + \gamma  \norm{v_j}_2^2$ for every $j \in [n]$ then normalize $p_\ell \gets \frac{p_\ell}{\sum_{j\in [n]} p_j}$ for every $\ell \in [n]$
	\STATE generate the sampling matrix $\bPi_{\tt res}$ as per lines~6-7 of \cref{alg-outer-loop} using distribution $\{ p_j \}_{j\in[n]}$ computed above	
	\STATE {\bf return} $\wt{\D} = \diag(\alpha)$ and $\bPi_{\tt res}$
\end{algorithmic}
\end{algorithm}

Building upon this observation, we propose a new version of \cref{alg-outer-loop} with improved practical performance.
We start by using \cref{eq:A_sparse} to write:
\begin{equation}\label{eq:attention_sparse_decompose}
\mathrm{Att}(\Q, \K, \V) = \D^{-1} \A_{\tt spar} \V + \D^{-1} \A_{\tt res} \V.
\end{equation}
Given $\D$, the first term above can be computed in time $O(d \cdot {\tt nnz}(\A_{\tt spar}))$, where ${\tt nnz}(\cdot)$ denotes the number of nonzero entries of a matrix. By choosing an appropriate LSH we can ensure that ${\tt nnz}(\A_{\tt spar})$ is almost linear in $n$.

The second term in \cref{eq:attention_sparse_decompose} can be approximated via AMM, similar to what was done in \cref{alg-outer-loop}, however, we need to be able to estimate the column norms of $\D^{-1} \A_{\tt res}$. 
Fortunately, by \cref{eq:A_sparse}, we have $\norm{\D^{-1} \A^j_{\tt res}}_2^2 = \norm{\D^{-1} \A^j}_2^2 - \norm{\D^{-1} \A^j_{\tt sparse}}_2^2$, where $\A^j_{\tt res}, \A^j, \A^j_{\tt sparse}$ denote the $j^{th}$ columns of $\A_{\tt res}, \A, \A_{\tt spar}$, respectively.
Since we can estimate the column norms of $\D^{-1} \A$ efficiently using \textsc{WExpKDE} and all column norms of $\D^{-1} \A_{\tt spar}$ can be computed in total ${\tt nnz}(\A_{\tt spar})$ time, the AMM sampling matrix for residual $\bPi_{\tt res}$ can be generated quickly.

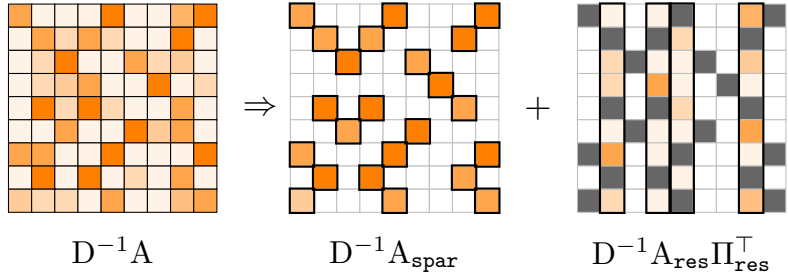
\begin{figure}[t]
% \vspace{-8pt}
\centering
\begin{tikzpicture}[
	every node/.style={scale=1.2}
	]
	\matrix (A) [black2darr]{
		|[fill=orange!70]|\phantom{} & |[fill=orange!10]|\phantom{} &|[fill=orange!10]|\phantom{} & |[fill=orange!10]|\phantom{} & |[fill=orange!100]|\phantom{} & |[fill=orange!10]|\phantom{} & |[fill=orange!10]|\phantom{}& |[fill=orange!60]|\phantom{}& |[fill=orange!100]|\phantom{}\\ %380
		|[fill=orange!10]|\phantom{} & |[fill=orange!70]|\phantom{} & |[fill=orange!30]|\phantom{} & |[fill=orange!70]|\phantom{} & |[fill=orange!30]|\phantom{} & |[fill=orange!10]|\phantom{} & |[fill=orange!10]|\phantom{} & |[fill=orange!100]|\phantom{}& |[fill=orange!10]|\phantom{}\\ %340
		|[fill=orange!10]|\phantom{} & |[fill=orange!30]|\phantom{} & |[fill=orange!100]|\phantom{} &|[fill=orange!10]|\phantom{} & |[fill=orange!10]|\phantom{} & |[fill=orange!70]|\phantom{} & |[fill=orange!30]|\phantom{} & |[fill=orange!40]|\phantom{}& |[fill=orange!10]|\phantom{}\\ %310
		|[fill=orange!10]|\phantom{} & |[fill=orange!30]|\phantom{} & |[fill=orange!40]|\phantom{} & |[fill=orange!70]|\phantom{} & |[fill=orange!10]|\phantom{} & |[fill=orange!10]|\phantom{} & |[fill=orange!100]|\phantom{} & |[fill=orange!10]|\phantom{}& |[fill=orange!30]|\phantom{}\\ %320
		|[fill=orange!10]|\phantom{} & |[fill=orange!100]| \phantom{} & |[fill=orange!30]|\phantom{} & |[fill=orange!100]| \phantom{} & |[fill=orange!30]|\phantom{} & |[fill=orange!10]|\phantom{} & |[fill=orange!10]|\phantom{} & |[fill=orange!70]|\phantom{}& |[fill=orange!10]|\phantom{}\\ %370
		|[fill=orange!10]|\phantom{} & |[fill=orange!10]|\phantom{} & |[fill=orange!70]|\phantom{} & |[fill=orange!10]|\phantom{} & |[fill=orange!10]|\phantom{} & |[fill=orange!100]|\phantom{} & |[fill=orange!40]|\phantom{}& |[fill=orange!70]|\phantom{}& |[fill=orange!10]|\phantom{}\\ %330
		|[fill=orange!70]|\phantom{} & |[fill=orange!70]|\phantom{} & |[fill=orange!10]|\phantom{} & |[fill=orange!10]|\phantom{} & |[fill=orange!100]|\phantom{} & |[fill=orange!10]|\phantom{} & |[fill=orange!10]|\phantom{}& |[fill=orange!10]|\phantom{}& |[fill=orange!100]|\phantom{}\\ %390
		|[fill=orange!10]|\phantom{} & |[fill=orange!100]|\phantom{} & |[fill=orange!10]|\phantom{} & |[fill=orange!100]|\phantom{} & |[fill=orange!10]|\phantom{} & |[fill=orange!30]|\phantom{} & |[fill=orange!10]|\phantom{} &  |[fill=orange!70]|\phantom{}& |[fill=orange!10]|\phantom{}\\ %350
		|[fill=orange!40]|\phantom{} & |[fill=orange!30]|\phantom{} & |[fill=orange!30]|\phantom{} & |[fill=orange!10]|\phantom{} & |[fill=orange!70]|\phantom{} & |[fill=orange!10]|\phantom{} & |[fill=orange!10]|\phantom{}& |[fill=orange!50]|\phantom{}& |[fill=orange!70]|\phantom{}\\ %320
	};
	\node[below=2.8em of A-6-5] {\small $\D^{-1} \A$};
	
	\node[right=0.1em of A] (rightarrow) {$\Rightarrow$};
	
	\matrix (A_sparse) [gray2darr, right=1.8em of A]{
		|[fill=orange!70]|\phantom{} & |[fill=orange!0]|\phantom{} &|[fill=orange!0]|\phantom{} & |[fill=orange!0]|\phantom{} & |[fill=orange!100]|\phantom{} & |[fill=orange!0]|\phantom{} & |[fill=orange!0]|\phantom{}& |[fill=orange!0]|\phantom{}& |[fill=orange!100]|\phantom{}\\
		|[fill=orange!0]|\phantom{} & |[fill=orange!70]|\phantom{} & |[fill=orange!0]|\phantom{} & |[fill=orange!70]|\phantom{} & |[fill=orange!0]|\phantom{} & |[fill=orange!0]|\phantom{} & |[fill=orange!0]|\phantom{} & |[fill=orange!100]|\phantom{}& |[fill=orange!0]|\phantom{}\\
		|[fill=orange!0]|\phantom{} & |[fill=orange!0]|\phantom{} & |[fill=orange!100]|\phantom{} &|[fill=orange!0]|\phantom{} & |[fill=orange!0]|\phantom{} & |[fill=orange!70]|\phantom{} & |[fill=orange!0]|\phantom{} & |[fill=orange!0]|\phantom{}& |[fill=orange!0]|\phantom{}\\
		|[fill=orange!0]|\phantom{} & |[fill=orange!0]|\phantom{} & |[fill=orange!0]|\phantom{} & |[fill=orange!0]|\phantom{} & |[fill=orange!0]|\phantom{} & |[fill=orange!0]|\phantom{} & |[fill=orange!100]|\phantom{} & |[fill=orange!0]|\phantom{}& |[fill=orange!0]|\phantom{}\\
		|[fill=orange!0]|\phantom{} & |[fill=orange!100]| \phantom{} & |[fill=orange!0]|\phantom{} & |[fill=orange!100]| \phantom{} & |[fill=orange!0]|\phantom{} & |[fill=orange!0]|\phantom{} & |[fill=orange!0]|\phantom{} & |[fill=orange!70]|\phantom{}& |[fill=orange!0]|\phantom{}\\
		|[fill=orange!0]|\phantom{} & |[fill=orange!0]|\phantom{} & |[fill=orange!70]|\phantom{} & |[fill=orange!0]|\phantom{} & |[fill=orange!0]|\phantom{} & |[fill=orange!100]|\phantom{} & |[fill=orange!0]|\phantom{}& |[fill=orange!0]|\phantom{}& |[fill=orange!0]|\phantom{}\\
		|[fill=orange!70]|\phantom{} & |[fill=orange!0]|\phantom{} & |[fill=orange!0]|\phantom{} & |[fill=orange!0]|\phantom{} & |[fill=orange!100]|\phantom{} & |[fill=orange!0]|\phantom{} & |[fill=orange!0]|\phantom{}& |[fill=orange!0]|\phantom{}& |[fill=orange!100]|\phantom{}\\
		|[fill=orange!0]|\phantom{} & |[fill=orange!100]|\phantom{} & |[fill=orange!0]|\phantom{} & |[fill=orange!100]|\phantom{} & |[fill=orange!0]|\phantom{} & |[fill=orange!0]|\phantom{} & |[fill=orange!0]|\phantom{} &  |[fill=orange!70]|\phantom{}& |[fill=orange!0]|\phantom{}\\
		|[fill=orange!40]|\phantom{} & |[fill=orange!0]|\phantom{} & |[fill=orange!0]|\phantom{} & |[fill=orange!0]|\phantom{} & |[fill=orange!70]|\phantom{} & |[fill=orange!0]|\phantom{} & |[fill=orange!0]|\phantom{}& |[fill=orange!0]|\phantom{}& |[fill=orange!70]|\phantom{}\\
	};
	\node[below=2.8em of A_sparse-6-5] {\small $\mathbf \D^{-1} \A_{\tt spar}$};

	\node[right=0.1em of A_sparse] (plus) {$+$};
	
	\matrix (A_res) [gray2darr, right=0.1em of plus] {
		|[fill=black!60]|\phantom{} & |[fill=orange!10]|\phantom{} &|[fill=black!0]|\phantom{} & |[fill=orange!10]|\phantom{} & |[fill=black!60]|\phantom{} & |[fill=black!0]|\phantom{} &\phantom{}&  |[fill=orange!60]|\phantom{}& |[fill=black!60]|\phantom{}\\
		|[fill=black!0]|\phantom{} & |[fill=black!60]|\phantom{} & |[fill=black!0]|\phantom{} & |[fill=black!60]|\phantom{} &  |[fill=orange!30]|\phantom{} & |[fill=black!0]|\phantom{} & |[fill=black!0]|\phantom{} & |[fill=black!60]|\phantom{}& |[fill=black!0]|\phantom{}\\
		|[fill=black!0]|\phantom{} &  |[fill=orange!30]|\phantom{} & |[fill=black!60]|\phantom{} & |[fill=orange!10]|\phantom{} &  |[fill=orange!10]|\phantom{} & |[fill=black!60]|\phantom{} & |[fill=black!0]|\phantom{} &  |[fill=orange!40]|\phantom{}& |[fill=black!0]|\phantom{}\\
		|[fill=black!0]|\phantom{} &  |[fill=orange!30]|\phantom{} & |[fill=black!0]|\phantom{} &  |[fill=orange!70]|\phantom{} &  |[fill=orange!10]|\phantom{} & |[fill=black!0]|\phantom{} & |[fill=black!60]|\phantom{} &  |[fill=orange!10]|\phantom{}& |[fill=black!0]|\phantom{}\\
		|[fill=black!0]|\phantom{} & |[fill=black!60]| \phantom{} & |[fill=black!0]|\phantom{} & |[fill=black!60]| \phantom{} &  |[fill=orange!30]|\phantom{} & |[fill=black!0]|\phantom{} & |[fill=black!0]|\phantom{} & |[fill=black!60]|\phantom{}& |[fill=black!0]|\phantom{}\\
		|[fill=black!0]|\phantom{} &  |[fill=orange!10]|\phantom{} & |[fill=black!60]|\phantom{} &  |[fill=orange!10]|\phantom{} &  |[fill=orange!10]|\phantom{} & |[fill=black!60]|\phantom{} & |[fill=black!0]|\phantom{}&  |[fill=orange!70]|\phantom{}& |[fill=black!0]|\phantom{}\\
		|[fill=black!60]|\phantom{} &  |[fill=orange!70]|\phantom{} & |[fill=black!0]|\phantom{} &  |[fill=orange!10]|\phantom{} & |[fill=black!60]|\phantom{} & |[fill=black!0]|\phantom{} & |[fill=black!0]|\phantom{}&  |[fill=orange!10]|\phantom{}& |[fill=black!60]|\phantom{}\\
		|[fill=black!0]|\phantom{} & |[fill=black!60]|\phantom{} & |[fill=black!0]|\phantom{} & |[fill=black!60]|\phantom{} &  |[fill=orange!10]|\phantom{} & |[fill=black!0]|\phantom{} & |[fill=black!0]|\phantom{} &  |[fill=black!60]|\phantom{}& |[fill=black!0]|\phantom{}\\
		|[fill=black!60]|\phantom{} &  |[fill=orange!30]|\phantom{} & |[fill=black!0]|\phantom{} &  |[fill=orange!10]|\phantom{} & |[fill=black!60]|\phantom{} & |[fill=black!0]|\phantom{} & |[fill=black!0]|\phantom{}&  |[fill=orange!50]|\phantom{}& |[fill=black!60]|\phantom{}\\
	};
	\node[below=2.8em of A_res-6-5] {\small $\D^{-1} \A_{\tt res} \Pi_{\tt res}^\top$};
	
	\blackSquare{1-1}{1-1}
	\blackSquare{1-5}{1-5}
	\blackSquare{1-9}{1-9}
	\blackSquare{2-2}{2-2}
	\blackSquare{2-4}{2-4}
	\blackSquare{2-8}{2-8}
	\blackSquare{3-3}{3-3}
	\blackSquare{3-6}{3-6}
	\blackSquare{4-7}{4-7}
	\blackSquare{5-2}{5-2}
	\blackSquare{5-4}{5-4}
	\blackSquare{5-8}{5-8}
	\blackSquare{6-3}{6-3}  
	\blackSquare{6-6}{6-6}
	\blackSquare{7-1}{7-1}
	\blackSquare{7-5}{7-5}  \blackSquare{7-9}{7-9}
	\blackSquare{8-2}{8-2}
	\blackSquare{8-4}{8-4}  \blackSquare{8-8}{8-8}
	\blackSquare{9-1}{9-1}
	\blackSquare{9-5}{9-5}
	\blackSquare{9-9}{9-9}
	
	\blackRectangle{9-8}{1-8}
	\blackRectangle{9-5}{1-5}
	\blackRectangle{9-4}{1-4}  \blackRectangle{9-2}{1-2}
\end{tikzpicture}
\vspace{-5pt}
\caption{The softmax matrix $\D^{-1} \A$ decomposes into its sparse approximation $\D^{-1} \A_{\tt spar}$, which captures large entries (coded with darker colors), and the residual $\D^{-1} \A_{\tt res}$, where black cells represent entries captured by $\D^{-1} \A_{\tt spar}$. Blank colors in $\D^{-1} \A_{\tt res}$ represent columns {\bf not} sampled by AMM sampling matrix $\bPi_{\tt res}$.}
\label{fig:architecture_srank_reduction}
% \vspace{-6pt}
\end{figure}

\begin{figure*}[t]
% \vspace{-10pt}
% \vspace{-5pt}
\centering
\includegraphics[width=0.99\textwidth]{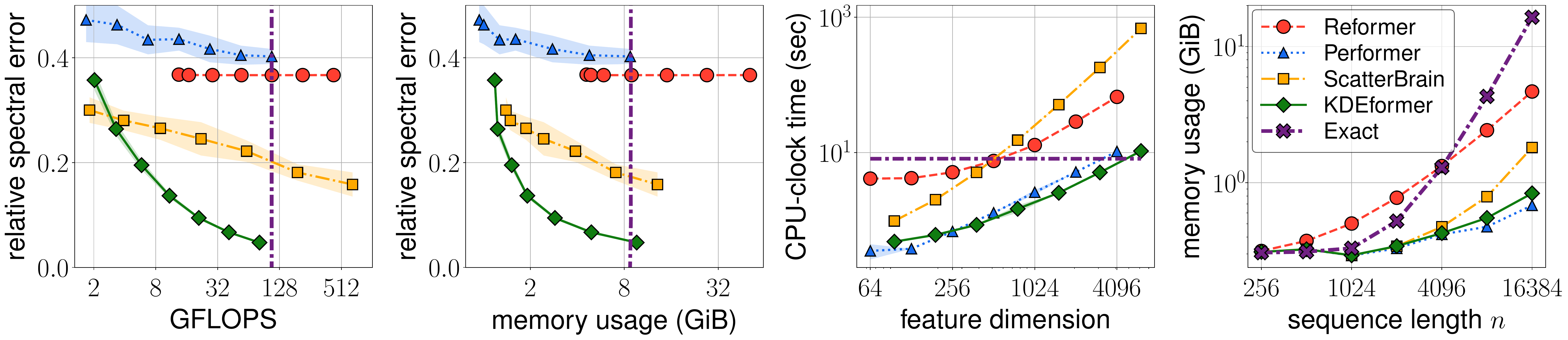}
\vspace{-5pt}
\caption{Performance evaluations of various self-attention approximations on approximating under the GloVe word embeddings.} \label{fig:glove}
% \vspace{-10pt}
\end{figure*}

Putting everything together, we first choose an appropriate LSH function $\Hcal$ and compute the sparse approximation to the attention matrix as per \cref{eq:A_sparse}. 
We show how to design a GPU-friendly LSH whose collision probability $\Pr[\Hcal(q_i) = \Hcal(k_j)]$ is roughly proportional to $\langle q_i , k_j \rangle$ in \cref{appendix_lsh_design}.
Next, we compute a spectral proxy $\wt{\D}$ for $\D$, as was done efficiently in \cref{alg-outer-loop}. 
Finally, we perform AMM on matrices $\wt{\D}^{-1} \A_{\tt res}$ and $\V$ via a sampling matrix $\bPi_{\tt res}$.
The resulting estimator is:
\[
\wt{\mathrm{Att}} = \wt{\D}^{-1} \A_{\tt spar} \V + \wt{\D}^{-1} \A_{\tt res} \bPi_{\tt res}^\top \cdot \bPi_{\tt res} \V.
\]
We illustrate this procedure in \cref{fig:architecture_srank_reduction} and present the pseudocode for computing $\wt{\D}$ and $\bPi_{\tt res}$ in \cref{alg-practical-adapted}.
By an analysis similar to \cref{corr-simplified-runtime}, we find that the runtime of \cref{alg-practical-adapted} is $O ( m + \varepsilon^{-2} d n^{1 + \tau +o(1)} + {\tt nnz}(\A_{\tt spar}) )$ with some $m = O\left( \varepsilon^{-2} \log n \cdot \sr({\D}^{-1} \A_{\tt res})  \right)$.

% \vspace{-0.1in}
\section{Experiments} \label{sec:exp}
% \vspace{-0.05in}

\subsection{Single Self-attention Layer Approximation} \label{sec:single}
We first benchmark our algorithm on approximating a single self-attention layer, i.e., $\mathrm{Att}(\Q,\K,\V)$. We randomly select a pair of matrices $\Q,\V \in \RR^{n\times d}$ from the GloVe word embeddings~\cite{pennington2014glove} with sequence length $n=8{,}192$ and dimension $d=100$ and set $\K=\Q$. We compare our KDEformer to other attention approximations including Reformer~\cite{kitaev2019reformer}, Performer~\cite{choromanski2020rethinking}, and  ScatterBrain~\cite{chen2021scatterbrain}. 
We compute the relative error under the operator norm, i.e., $\frac{\|\att(\Q,\K,\V) - \wt{\att} \|_{\op}}{\normop{\att(\Q,\K,\V)}}$ where $ \wt{\att} \in\RR^{n \times d}$ is an approximate attention, and measure the peak memory usage, FLOP count and CPU-clock time while varying hyperparameters of algorithms which affect both the runtime and memory space.

In \cref{fig:glove}, we observe that our proposed algorithm achieves the lowest error with minimal FLOP count and memory usage. 
In particular, our approximation error can be about 9\% with 3.06$\times$ memory reduction and 5.11$\times$ lower FLOPS.
% In \cref{fig:glove:featdim},
In addition, 
we plot CPU-clock time for various choices of hyperparameters that determine peak memory usage. Specifically, if the approximation requires at most $nk$ memory space for computing $\wt{\att}$ and we call $k$ as the feature dimension. Given the same feature dimension, our algorithm and Performer are the fastest methods, but Performer has significantly larger errors than the others. We fix the feature dimension $k=128$ and measure the peak memory usage while the sequence length $n$ is changing from $256$ and $16{,}384$. For $n=16{,}384$, our method can save up to $19.62\times$ memory space compared to the exact computation. 

% \vspace{-0.08in}
\subsection{Image generation with BigGAN}
We next apply above-mentioned attention approximations to generate synthetic images with BigGAN~\cite{brock2018large}. The model contains a single attention layer where the corresponding inputs have different dimensions: $\Q\in\RR^{4{,}096\times64}, \K\in\RR^{1{,}024\times64}$ and $\V\in\RR^{1{,}024\times256}$. 
Following the experiments in \cite{chen2021scatterbrain}, we use the pre-trained BigGAN\footnote{\url{https://github.com/huggingface/pytorch-pretrained-BigGAN}} on ImageNet at $512\times512$ resolution and replace the exact attention with its approximations. We generate $5{,}000$ fake images and compute the Frechet Inception Distance (FID) with ImageNet validation set as ground truth and Inception Scores (IS)~\cite{salimans2016improved}. Note that lower FID and higher IS values imply better generation quality. 
We also calculate FLOPS for operations in the attention layer. 
We set the hyperparameters (i.e., feature dimensions) so that all approximation methods have the same peak memory usage.  The results are reported in \cref{tab:biggan}. 
Interestingly, our algorithm shows a lower FID value than the exact attention with $4.14 \times$ fewer FLOPs.  Although Performer is the fastest algorithm, its generated images are unnatural compared while our attention can generate more realistic images. A number of generated images by various methods can be found in the \cref{sec:appendix_biggan}.

\begin{table}[t]
% \vspace{-5pt}
\centering
\caption{Results on image generation using BigGAN with the exact attention and its approximations. Bold values indicate the best within the standard deviation.}\label{tab:biggan}
\vspace{0.1in}
% \scalebox{0.93}{
\begin{tabular}{@{}lcccc@{}}
    \toprule
    Method       & FID ($\downarrow$) & IS ($\uparrow$) & \multicolumn{2}{c}{GFLOPS} \\ \midrule
    Exact        & 32.17 & {\bf 58.38} $\pm$ 4.23 & 10.738 & $-$\\
    Reformer     & 72.39 & 19.04 $\pm$ 2.32 & 10.872 & (0.99$\times$)\\
    Performer    & 33.39 & 37.32 $\pm$ 2.91 & {\bf 1.682} & (6.38$\times$) \\
    ScatterBrain & 38.55 & 36.43 $\pm$ 3.34 & 2.891 & (3.71$\times$) \\
    KDEformer & {\bf  31.41} & {\bf 58.16} $\pm$ 4.04 & 2.596 & (4.14$\times$) \\ \bottomrule
\end{tabular}
% }
% \vspace{-0.2in}
\end{table}

\begin{table}[t]
\centering
\caption{Results on ImageNet classification using T2T-ViT with the exact attention and its approximations.}\label{tab:t2tvit}
%% Let's not say we took the accuracy of scatterbrain from their paper. It would just raise questions from the reviewers. The reviewers will not ask questions about this as long as what we report conforms with what was in scatterbrain's paper.
\vspace{0.1in}
% \scalebox{0.93}{
\begin{tabular}{@{}lccc@{}}
    \toprule
    Method       & Top-1 Accuracy (\%) & \multicolumn{2}{c}{GFLOPS}  \\ \midrule
    Exact        & {\bf 82.55} & 161.10 & $-$\\
    Reformer     & 81.44 & 11.71 & (13.75 $\times$)\\
    Performer    & 80.50 & {\bf 5.06} & (31.87 $\times$)\\
    ScatterBrain & 81.95 & 7.18 & (22.43 $\times$) \\
    KDEformer         & 82.08 & 8.80 & (18.30 $\times$) \\ \bottomrule
\end{tabular}
% }
% \vspace{-0.2in}
\end{table}

% \vspace{-0.08in}
\vspace{-0.08in}
\subsection{ImageNet classification with Vision Transformer} \label{sec:vit}
Finally, we evaluate the attention approximations on image classification with Tokens-to-Token Vision Transformer\footnote{\url{https://github.com/yitu-opensource/T2T-ViT}}~\cite{yuan2021tokens}. 
The model consists of Tokens-to-Token (T2T) module and the Vision Transformer (ViT) backbone where the computational bottleneck comes from the T2T module.
Again, we use the pre-trained model with 24 layers in ViT backbone and apply our method to 2 attention layers in the T2T module as a drop-in replacement. 
The dimensions of $\Q,\K,\V$ are all the same, $n=3{,}136,d=64$ in the first layer and $n=784,d=64$ in the second layer. We compute top-1 accuracy on ImageNet validation dataset and measure FLOPS in the first attention layer, which requires the most resources. The results are shown in \cref{tab:t2tvit}. Observe that our method is the best among all approximate methods with $82.08\%$ test accuracy. In particular, it leads to less than $1\%$ performance drop compared to the exact computation but the required operations are $18.3\times$ fewer. Such performance gains would increase when token sequence lengths are larger.

\begin{table*}[t]
    \centering
    \vspace{-0.1in}
    \caption{Results on end-to-end training on $5$ Long Range Arena (LRA) benchmark datasets.}\label{tab:lra}
    \vspace{0.05in}
    \centering
    \scalebox{0.93}{
    \begin{tabular}{@{}lcccccc@{}}
	\toprule
	& ListOps  & Text     & Image    & Retrieval & Pathfinder & Average \\ \midrule 
	Exact     & 33.32  & 60.22 & 37.41 & 81.07 & 70.25 & 56.45 \\
	Reformer  & 36.74  & 61.39 & 43.59 & 78.15 & 66.25 & {\bf 57.22} \\
	Performer & 37.75  & 58.81 & 35.74 & 80.39 & 62.84 & 55.11 \\
	KDEformer & 36.64  & 62.00 & 45.45 & 73.52 & 68.13 & 57.15 \\
	\bottomrule
    \end{tabular}
    }
    \vspace{-0.025in}
    \begin{center}
       (a) Test accuracy (\%) 
    \end{center}
    \scalebox{0.93}{
    \begin{tabular}{@{}lcccccc@{}}
        \toprule
                  & ListOps  & Text     & Image    & Retrieval & Pathfinder & Average  \\ \midrule 
        Exact     & 6.53 & 16.71 & 9.41 & 8.72 & 4.70 & 9.21 \\
        Reformer  & 1.59 & 3.18 & 6.36 & 2.94 & 3.18 & 3.45\\
        Performer & 1.07 & 2.13 & 4.28 & 2.15 & 2.14 & 2.35 \\
        KDEformer & 1.02 & 2.03 & 4.08 & 2.38 & 1.87 & {\bf 2.28} \\
        \bottomrule
    \end{tabular}
    }
    \vspace{-0.025in}
    \begin{center}
        (b) Peak memory (GB)
    \end{center}
    \scalebox{0.93}{
    \begin{tabular}{@{}lcccccc@{}}
        \toprule
                  & ListOps  & Text     & Image    & Retrieval & Pathfinder & Average  \\ \midrule 
        Exact     & 0.133 & 0.479 & 0.276 & 0.478 & 0.141 & 0.301 \\
        Reformer  & 0.041 & 0.081 & 0.155 & 0.092 & 0.082 & 0.090 \\
        Performer & 0.036 & 0.067 & 0.127 & 0.074 & 0.068 & 0.074 \\
        KDEformer & 0.034 & 0.058 & 0.110 & 0.073 & 0.063 & {\bf 0.068} \\
        \bottomrule
    \end{tabular}
    }
    \vspace{-0.025in}
    \begin{center}
        (c) Wall-clock time (sec) per batch
    \end{center}
	% 	\vspace{-0.02in}
	% 	\caption{Wall-clock time (sec)} \label{tab:lra-memory}
 %     \end{subtable}
\end{table*}

\subsection{End-to-end Training with Long Range Arena Benckmark} \label{sec:end}
Finally, to demonstrate the power of our method in reducing the training time of transformer models, we run end-to-end training on the Long Range Arena benchmark~\cite{tay2020long}, which contains $5$ classification datasets, i.e., ListOps, Text, Image, Retrieval and Pathfinder. The maximum sequence lengths of these datasets are $2{,}048$, $4{,}096$, $1{,}024$, $4{,}096$ and $1{,}024$, respectively.
We follow the same settings from \cite{chen2021skyformer}; model is a $2$-layer transformer with $64$ embedding dimension, $128$ hidden dimension, $2$ attention heads, and mean pooling is used for the classification task. Learning rate is set to $10^{-4}$ for Text, ListOps, Image and $2\times 10^{-4}$ for the rest. All models are trained for $50{,}000$ steps. Similar to \cref{sec:single}, we choose hyperparameters of all methods having equal feature dimensions to $128$.

In \cref{tab:lra}, we provide results on (a) test accuracy, (b) peak memory and (c) wall-clock time per batch of single training step (including forward and backward propagations). As a result, we observe that the proposed KDEformer achieves the second-best test accuracy in average followed by Reformer, but it requires much less memory as well as faster wall-clock time than other competitors. For example, KDEformer with Text dataset runs about 8$\times$ faster than the exact attention.

\section{Conclusion}

We propose a fast attention approximation based on recent advances in KDE solvers.
The proposed algorithm can run in strongly sub-quadratic time in sequence length and provide an error bound under the spectral norm. 
It shows promising performances under various practical applications involving long-sequence attention. We believe this can have a significant impact on other practical problems as well.

\section{Acknowledgement}
We would like to thank Navid Nouri for his helpful ideas and discussions about new advancements in kernel density estimation and their potential application.
Amir Zandieh was supported by the Swiss NSF grant No. P2ELP2\textunderscore 195140. Amin Karbasi acknowledges funding in direct support of this work from NSF (IIS-1845032), ONR (N00014- 19-1-2406), and the AI Institute for Learning-Enabled Optimization at Scale (TILOS).

\bibliographystyle{plainnat}
\bibliography{references}

\begin{thebibliography}{36}
\providecommand{\natexlab}[1]{#1}
\providecommand{\url}[1]{\texttt{#1}}
\expandafter\ifx\csname urlstyle\endcsname\relax
  \providecommand{\doi}[1]{doi: #1}\else
  \providecommand{\doi}{doi: \begingroup \urlstyle{rm}\Url}\fi

\bibitem[Backurs et~al.(2018)Backurs, Charikar, Indyk, and
  Siminelakis]{backurs2018efficient}
Arturs Backurs, Moses Charikar, Piotr Indyk, and Paris Siminelakis.
\newblock
  {\href{https://ieeexplore.ieee.org/stamp/stamp.jsp?tp=&arnumber=8555143}{Efficient
  density evaluation for smooth kernels}}.
\newblock In \emph{Foundations of Computer Science (FOCS)}, 2018.

\bibitem[Backurs et~al.(2019)Backurs, Indyk, and Wagner]{backurs2019space}
Arturs Backurs, Piotr Indyk, and Tal Wagner.
\newblock
  {\href{https://www.mit.edu/~talw/publications/kde_space_full.pdf}{Space and
  time efficient kernel density estimation in high dimensions}}.
\newblock \emph{Neural Information Processing Systems (NeurIPS)}, 2019.

\bibitem[Brock et~al.(2019)Brock, Donahue, and Simonyan]{brock2018large}
Andrew Brock, Jeff Donahue, and Karen Simonyan.
\newblock {\href{https://arxiv.org/pdf/1809.11096.pdf}{Large Scale GAN Training
  for High Fidelity Natural Image Synthesis}}.
\newblock In \emph{International Conference on Learning Representations
  (ICLR)}, 2019.

\bibitem[Brown et~al.(2020)Brown, Mann, Ryder, Subbiah, Kaplan, Dhariwal,
  Neelakantan, Shyam, Sastry, Askell, et~al.]{brown2020language}
Tom Brown, Benjamin Mann, Nick Ryder, Melanie Subbiah, Jared~D Kaplan, Prafulla
  Dhariwal, Arvind Neelakantan, Pranav Shyam, Girish Sastry, Amanda Askell,
  et~al.
\newblock {\href{https://arxiv.org/pdf/2005.14165.pdf}{Language models are
  few-shot learners}}.
\newblock \emph{Neural Information Processing Systems (NeurIPS)}, 2020.

\bibitem[Carion et~al.(2020)Carion, Massa, Synnaeve, Usunier, Kirillov, and
  Zagoruyko]{carion2020end}
Nicolas Carion, Francisco Massa, Gabriel Synnaeve, Nicolas Usunier, Alexander
  Kirillov, and Sergey Zagoruyko.
\newblock {\href{https://arxiv.org/pdf/2005.12872.pdf}{End-to-end object
  detection with transformers}}.
\newblock In \emph{Proceedings of the European Conference on Computer
  Vision(ECCV)}, 2020.

\bibitem[Charikar and Siminelakis(2017)]{charikar2017hashing}
Moses Charikar and Paris Siminelakis.
\newblock {\href{https://arxiv.org/pdf/1808.10530.pdf}{Hashing-based-estimators
  for kernel density in high dimensions}}.
\newblock In \emph{Foundations of Computer Science (FOCS)}, 2017.

\bibitem[Charikar et~al.(2020)Charikar, Kapralov, Nouri, and
  Siminelakis]{charikar2020kernel}
Moses Charikar, Michael Kapralov, Navid Nouri, and Paris Siminelakis.
\newblock {\href{https://arxiv.org/pdf/2011.06997.pdf}{Kernel density
  estimation through density constrained near neighbor search}}.
\newblock In \emph{Foundations of Computer Science (FOCS)}, 2020.

\bibitem[Chen et~al.(2020)Chen, Liu, Peng, Xu, Li, Dao, Song, Shrivastava, and
  Re]{chen2020mongoose}
Beidi Chen, Zichang Liu, Binghui Peng, Zhaozhuo Xu, Jonathan~Lingjie Li, Tri
  Dao, Zhao Song, Anshumali Shrivastava, and Christopher Re.
\newblock {\href{https://openreview.net/pdf?id=wWK7yXkULyh}{MONGOOSE: A
  learnable LSH framework for efficient neural network training}}.
\newblock In \emph{International Conference on Learning Representations
  (ICLR)}, 2020.

\bibitem[Chen et~al.(2021{\natexlab{a}})Chen, Dao, Liang, Yang, Song, Rudra,
  and Re]{chen2021pixelated}
Beidi Chen, Tri Dao, Kaizhao Liang, Jiaming Yang, Zhao Song, Atri Rudra, and
  Christopher Re.
\newblock {\href{https://arxiv.org/pdf/2112.00029.pdf}{Pixelated Butterfly:
  Simple and Efficient Sparse training for Neural Network Models}}.
\newblock In \emph{International Conference on Learning Representations
  (ICLR)}, 2021{\natexlab{a}}.

\bibitem[Chen et~al.(2021{\natexlab{b}})Chen, Dao, Winsor, Song, Rudra, and
  Re]{chen2021scatterbrain}
Beidi Chen, Tri Dao, Eric Winsor, Zhao Song, Atri Rudra, and Christopher Re.
\newblock {\href{https://openreview.net/pdf?id=SehIKudiIo1}{Scatterbrain:
  Unifying sparse and low-rank attention}}.
\newblock \emph{Neural Information Processing Systems (NeurIPS)},
  2021{\natexlab{b}}.

\bibitem[Chen et~al.(2021{\natexlab{c}})Chen, Zeng, Ji, and
  Yang]{chen2021skyformer}
Yifan Chen, Qi~Zeng, Heng Ji, and Yun Yang.
\newblock {\href{https://arxiv.org/pdf/2111.00035.pdf}{Skyformer: Remodel
  self-attention with gaussian kernel and nystr$\backslash$" om method}}.
\newblock \emph{Neural Information Processing Systems (NeurIPS)},
  2021{\natexlab{c}}.

\bibitem[Choromanski et~al.(2021)Choromanski, Likhosherstov, Dohan, Song, Gane,
  Sarlos, Hawkins, Davis, Mohiuddin, Kaiser, et~al.]{choromanski2020rethinking}
Krzysztof~Marcin Choromanski, Valerii Likhosherstov, David Dohan, Xingyou Song,
  Andreea Gane, Tamas Sarlos, Peter Hawkins, Jared~Quincy Davis, Afroz
  Mohiuddin, Lukasz Kaiser, et~al.
\newblock {\href{https://arxiv.org/pdf/2009.14794.pdf}{Rethinking Attention
  with Performers}}.
\newblock In \emph{International Conference on Learning Representations
  (ICLR)}, 2021.

\bibitem[Daras et~al.(2020)Daras, Kitaev, Odena, and Dimakis]{daras2020smyrf}
Giannis Daras, Nikita Kitaev, Augustus Odena, and Alexandros~G Dimakis.
\newblock {\href{https://arxiv.org/pdf/2010.05315.pdf}{Smyrf-efficient
  attention using asymmetric clustering}}.
\newblock \emph{Neural Information Processing Systems (NeurIPS)}, 2020.

\bibitem[Dass et~al.(2022)Dass, Wu, Shi, Li, Ye, Wang, and
  Lin]{dass2022vitality}
Jyotikrishna Dass, Shang Wu, Huihong Shi, Chaojian Li, Zhifan Ye, Zhongfeng
  Wang, and Yingyan Lin.
\newblock {\href{https://arxiv.org/pdf/2211.05109.pdf}{Vitality: Unifying
  low-rank and sparse approximation for vision transformer acceleration with a
  linear taylor attention}}.
\newblock \emph{arXiv preprint arXiv:2211.05109}, 2022.

\bibitem[Devlin et~al.(2018)Devlin, Chang, Lee, and Toutanova]{devlin2018bert}
Jacob Devlin, Ming-Wei Chang, Kenton Lee, and Kristina Toutanova.
\newblock {\href{https://aclanthology.org/N19-1423.pdf}{Bert: Pre-training of
  deep bidirectional transformers for language understanding}}.
\newblock In \emph{Conference of the North American Association for
  Computational Linguistics (NAACL)}, 2018.

\bibitem[Dosovitskiy et~al.(2021)Dosovitskiy, Beyer, Kolesnikov, Weissenborn,
  Zhai, Unterthiner, Dehghani, Minderer, Heigold, Gelly, Uszkoreit, and
  Houlsby]{dosovitskiy2021an}
Alexey Dosovitskiy, Lucas Beyer, Alexander Kolesnikov, Dirk Weissenborn,
  Xiaohua Zhai, Thomas Unterthiner, Mostafa Dehghani, Matthias Minderer, Georg
  Heigold, Sylvain Gelly, Jakob Uszkoreit, and Neil Houlsby.
\newblock {\href{https://arxiv.org/pdf/2010.11929.pdf}{An Image is Worth 16x16
  Words: Transformers for Image Recognition at Scale}}.
\newblock In \emph{International Conference on Learning Representations
  (ICLR)}, 2021.

\bibitem[Hagerup et~al.(1993)Hagerup, Mehlhorn, and
  Munro]{hagerup1993maintaining}
Torben Hagerup, Kurt Mehlhorn, and J~Ian Munro.
\newblock
  {\href{https://link.springer.com/content/pdf/10.1007/3-540-56939-1_77.pdfx}{Maintaining
  discrete probability distributions optimally}}.
\newblock In \emph{International Colloquium on Automata, Languages, and
  Programming}, 1993.

\bibitem[Joshi et~al.(2011)Joshi, Kommaraji, Phillips, and
  Venkatasubramanian]{joshi2011comparing}
Sarang Joshi, Raj~Varma Kommaraji, Jeff~M Phillips, and Suresh
  Venkatasubramanian.
\newblock {\href{https://arxiv.org/pdf/1001.0591.pdf}{Comparing distributions
  and shapes using the kernel distance}}.
\newblock In \emph{Symposium on Computational Geometry (SOCG)}, 2011.

\bibitem[Katharopoulos et~al.(2020)Katharopoulos, Vyas, Pappas, and
  Fleuret]{katharopoulos2020transformers}
Angelos Katharopoulos, Apoorv Vyas, Nikolaos Pappas, and Francois Fleuret.
\newblock {\href{https://arxiv.org/pdf/2006.16236.pdf}{Transformers are rnns:
  Fast autoregressive transformers with linear attention}}.
\newblock In \emph{International Conference on Machine Learning (ICML)}, 2020.

\bibitem[Kitaev et~al.(2020)Kitaev, Kaiser, and Levskaya]{kitaev2019reformer}
Nikita Kitaev, Lukasz Kaiser, and Anselm Levskaya.
\newblock {\href{https://arxiv.org/pdf/2001.04451.pdf}{Reformer: The Efficient
  Transformer}}.
\newblock In \emph{International Conference on Learning Representations
  (ICLR)}, 2020.

\bibitem[Pennington et~al.(2014)Pennington, Socher, and
  Manning]{pennington2014glove}
Jeffrey Pennington, Richard Socher, and Christopher~D Manning.
\newblock {\href{https://nlp.stanford.edu/pubs/glove.pdf}{Glove: Global vectors
  for word representation}}.
\newblock In \emph{Empirical Methods in Natural Language Processing (EMNLP)},
  2014.

\bibitem[Raffel et~al.(2020)Raffel, Shazeer, Roberts, Lee, Narang, Matena,
  Zhou, Li, and Liu]{raffel2020exploring}
Colin Raffel, Noam Shazeer, Adam Roberts, Katherine Lee, Sharan Narang, Michael
  Matena, Yanqi Zhou, Wei Li, and Peter~J Liu.
\newblock {\href{https://arxiv.org/pdf/1910.10683.pdf}{Exploring the Limits of
  Transfer Learning with a Unified Text-to-Text Transformer}}.
\newblock \emph{Journal of Machine Learning Research (JMLR)}, 2020.

\bibitem[Roy et~al.(2021)Roy, Saffar, Vaswani, and Grangier]{roy2021efficient}
Aurko Roy, Mohammad Saffar, Ashish Vaswani, and David Grangier.
\newblock {\href{https://arxiv.org/pdf/2003.05997.pdf}{Efficient content-based
  sparse attention with routing transformers}}.
\newblock \emph{Transactions of the Association for Computational Linguistics
  (ACL)}, 2021.

\bibitem[Salimans et~al.(2016)Salimans, Goodfellow, Zaremba, Cheung, Radford,
  and Chen]{salimans2016improved}
Tim Salimans, Ian Goodfellow, Wojciech Zaremba, Vicki Cheung, Alec Radford, and
  Xi~Chen.
\newblock {\href{https://arxiv.org/pdf/1606.03498.pdf}{Improved techniques for
  training gans}}.
\newblock In \emph{Neural Information Processing Systems (NeurIPS)}, 2016.

\bibitem[Sch{\"o}lkopf et~al.(2002)Sch{\"o}lkopf, Smola, Bach,
  et~al.]{scholkopf2002learning}
Bernhard Sch{\"o}lkopf, Alexander~J Smola, Francis Bach, et~al.
\newblock
  \emph{{\href{https://mcube.lab.nycu.edu.tw/~cfung/docs/books/scholkopf2002learning_with_kernels.pdf}{Learning
  with kernels: support vector machines, regularization, optimization, and
  beyond}}}.
\newblock MIT press, 2002.

\bibitem[Siminelakis et~al.(2019)Siminelakis, Rong, Bailis, Charikar, and
  Levis]{siminelakis2019rehashing}
Paris Siminelakis, Kexin Rong, Peter Bailis, Moses Charikar, and Philip Levis.
\newblock
  {\href{http://proceedings.mlr.press/v97/siminelakis19a/siminelakis19a.pdf}{Rehashing
  kernel evaluation in high dimensions}}.
\newblock In \emph{International Conference on Machine Learning (ICML)}, 2019.

\bibitem[Sun et~al.(2021)Sun, Yang, and Yoo]{sun2021sparse}
Zhiqing Sun, Yiming Yang, and Shinjae Yoo.
\newblock {\href{https://openreview.net/pdf?id=VGnOJhd5Q1q}{Sparse Attention
  with Learning to Hash}}.
\newblock In \emph{International Conference on Learning Representations
  (ICLR)}, 2021.

\bibitem[Tay et~al.(2021)Tay, Dehghani, Abnar, Shen, Bahri, Pham, Rao, Yang,
  Ruder, and Metzler]{tay2020long}
Yi~Tay, Mostafa Dehghani, Samira Abnar, Yikang Shen, Dara Bahri, Philip Pham,
  Jinfeng Rao, Liu Yang, Sebastian Ruder, and Donald Metzler.
\newblock {\href{https://openreview.net/pdf?id=qVyeW-grC2k}{Long range arena: A
  benchmark for efficient transformers}}.
\newblock \emph{International Conference on Learning Representations (ICLR)},
  2021.

\bibitem[Tay et~al.(2022)Tay, Dehghani, Bahri, and Metzler]{tay2022efficient}
Yi~Tay, Mostafa Dehghani, Dara Bahri, and Donald Metzler.
\newblock {\href{https://arxiv.org/pdf/2009.06732.pdf}{Efficient transformers:
  A survey}}.
\newblock \emph{ACM Computing Surveys}, 2022.

\bibitem[Tropp(2015)]{tropp2015introduction}
Joel~A Tropp.
\newblock {\href{https://arxiv.org/pdf/1501.01571.pdf}{An introduction to
  matrix concentration inequalities}}.
\newblock \emph{Foundations and Trends{\textregistered} in Machine Learning},
  2015.

\bibitem[Vaswani et~al.(2017)Vaswani, Shazeer, Parmar, Uszkoreit, Jones, Gomez,
  Kaiser, and Polosukhin]{vaswani2017attention}
Ashish Vaswani, Noam Shazeer, Niki Parmar, Jakob Uszkoreit, Llion Jones,
  Aidan~N Gomez, Lukasz Kaiser, and Illia Polosukhin.
\newblock {\href{https://arxiv.org/pdf/1706.03762.pdf}{Attention is all you
  need}}.
\newblock \emph{Neural Information Processing Systems (NeurIPS)}, 2017.

\bibitem[Yang et~al.(2019)Yang, Dai, Yang, Carbonell, Salakhutdinov, and
  Le]{yang2019xlnet}
Zhilin Yang, Zihang Dai, Yiming Yang, Jaime Carbonell, Russ~R Salakhutdinov,
  and Quoc~V Le.
\newblock {\href{https://arxiv.org/pdf/1906.08237.pdf}{Xlnet: Generalized
  autoregressive pretraining for language understanding}}.
\newblock \emph{Neural Information Processing Systems (NeurIPS)}, 2019.

\bibitem[Yuan et~al.(2021)Yuan, Chen, Wang, Yu, Shi, Jiang, Tay, Feng, and
  Yan]{yuan2021tokens}
Li~Yuan, Yunpeng Chen, Tao Wang, Weihao Yu, Yujun Shi, Zi-Hang Jiang,
  Francis~EH Tay, Jiashi Feng, and Shuicheng Yan.
\newblock {\href{https://arxiv.org/pdf/2101.11986.pdf}{Tokens-to-token vit:
  Training vision transformers from scratch on imagenet}}.
\newblock In \emph{International Conference on Computer Vision (ICCV)}, 2021.

\bibitem[Zaheer et~al.(2020)Zaheer, Guruganesh, Dubey, Ainslie, Alberti,
  Ontanon, Pham, Ravula, Wang, Yang, et~al.]{zaheer2020big}
Manzil Zaheer, Guru Guruganesh, Kumar~Avinava Dubey, Joshua Ainslie, Chris
  Alberti, Santiago Ontanon, Philip Pham, Anirudh Ravula, Qifan Wang, Li~Yang,
  et~al.
\newblock {\href{https://arxiv.org/pdf/2007.14062.pdf}{Big bird: Transformers
  for longer sequences}}.
\newblock \emph{Neural Information Processing Systems (NeurIPS)}, 2020.

\bibitem[Zhou et~al.(2021)Zhou, Zhang, Peng, Zhang, Li, Xiong, and
  Zhang]{zhou2021informer}
Haoyi Zhou, Shanghang Zhang, Jieqi Peng, Shuai Zhang, Jianxin Li, Hui Xiong,
  and Wancai Zhang.
\newblock {\href{https://arxiv.org/pdf/2012.07436.pdf}{Informer: Beyond
  efficient transformer for long sequence time-series forecasting}}.
\newblock In \emph{Conference on Artificial Intelligence (AAAI)}, 2021.

\bibitem[Zouzias(2013)]{zouzias2013randomized}
Anastasios Zouzias.
\newblock
  \emph{{\href{https://central.bac-lac.gc.ca/.item?id=TC-OTU-36082&op=pdf&app=Library&oclc_number=1032956250}{Randomized
  primitives for linear algebra and applications}}}.
\newblock University of Toronto, 2013.

\end{thebibliography}

\section{Practical Angular LSH with Fixed Bucket Sizes}\label{appendix_lsh_design}
The practical version of our algorithm that we presented in \cref{sec:practical_improvement} requires a locality sensitive hashing $\Hcal: \RR^d \to [B]$ for identifying the dominant entries of the attention matrix $\A$, which correspond to pairs of keys and queries whose ``angular distances'' are small. 
In this section, we develop a simple yet effective and practical LSH function whose collision probability is related to the angular distance between hashed points.
	
While the lsh allows computing a very sparse approximation to the attention matrix, uneven bucket sizes hinder batching of the computations across lsh buckets. In fact, if we parallelize the computation across buckets, the largest bucket determines the runtime~\cite{kitaev2019reformer}.
Our proposed lsh function has equal-sized buckets, thus, it aligns with modern hardware's block-memory access and can be efficiently parallelized by batching across buckets.
	
We start by defining a simple LSH function whose collision probability is \emph{roughly} proportional to the angle between the hashed points.
	
\begin{defn}[Angular LSH]\label{def:angular_lsh}
    For positive integers $d, r$, let $w_1, w_2, \ldots w_r$ be i.i.d. random samples from the tropical Gaussian distribution $\Ncal(0, I_d)$. We define the \emph{rank-$r$ angular LSH} $h: \RR^d \to \{0, 1\}^r$ as follows:
    \[
    h(x) := \left( \mathbbm{1}_{ \{w_1^\top x\} }, \mathbbm{1}_{ \{w_2^\top x\} }, \ldots \mathbbm{1}_{ \{w_r^\top x\} } \right) ~~~~~\text{ for any } x \in \RR^d.
    \]
    Note that the buckets are labeled by $r$-bit binary numbers and if $r \le d$ then almost surely the total number of buckets is $2^r$.
\end{defn}

It is easy to calculate the collision probability of the angular lsh defined in \cref{def:angular_lsh}.
\begin{claim}
    For positive integers $r, d$ let $h(\cdot)$ be an instance of rank-$r$ angular LSH as per \cref{def:angular_lsh}. For any $x,y \in \RR^d$ the collision probability of $h(x)$ and $h(y)$ is:
    \[
    \Pr[h(x) = h(y)] = \left( 1 - \frac{\theta_{x,y}}{\pi} \right)^r,
    \]
    where $\theta_{x,y} = \cos^{-1} \left( \frac{x^\top y}{\norm{x} \cdot \norm{y}} \right)$ denotes the angle between $x$ and $y$.
\end{claim}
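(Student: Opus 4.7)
The plan is to decouple the $r$ hash coordinates by independence and then compute the one‐coordinate collision probability via the classical random hyperplane (Goemans--Williamson) argument. Since $w_1,\dots,w_r$ are drawn i.i.d.\ from $\Ncal(0,I_d)$, the random bits $b_i(x,y):=\mathbbm{1}_{\{w_i^\top x\ge 0\}}\oplus \mathbbm{1}_{\{w_i^\top y\ge 0\}}$ are i.i.d., and $h(x)=h(y)$ iff all $b_i(x,y)=0$. Therefore
\[
\Pr[h(x)=h(y)] \;=\; \Pr\!\left[\mathbbm{1}_{\{w^\top x\ge 0\}}=\mathbbm{1}_{\{w^\top y\ge 0\}}\right]^{\,r},
\]
and it suffices to show the single-bit collision probability equals $1-\theta_{x,y}/\pi$.

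For the single-bit step, I would use rotational invariance of the standard Gaussian. Without loss of generality $x,y\ne 0$, and the event depends only on the signs of $w^\top x$ and $w^\top y$, which in turn depend only on the projection of $w$ onto the 2-dimensional subspace $V=\mathrm{span}(x,y)$. Write $w=w_V+w_{V^\perp}$; then $w_V\sim \Ncal(0,I_V)$ is rotationally invariant in $V$. In particular, the normalized direction $\hat w_V:=w_V/\|w_V\|$ is uniform on the unit circle in $V$ (and $w_V\ne 0$ almost surely).

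Now the two signs $\mathrm{sign}(w^\top x)$ and $\mathrm{sign}(w^\top y)$ disagree exactly when $\hat w_V$ lies in one of the two open arcs of the circle separated by the lines perpendicular to $x$ and $y$ in $V$. These two arcs each have angular measure $\theta_{x,y}$, so by uniformity
\[
\Pr[\mathrm{sign}(w^\top x)\ne \mathrm{sign}(w^\top y)] \;=\; \frac{2\theta_{x,y}}{2\pi} \;=\; \frac{\theta_{x,y}}{\pi},
\]
and hence the single-bit collision probability is $1-\theta_{x,y}/\pi$. Raising to the $r$-th power yields the claimed formula.

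No step here is truly delicate; the only place that needs a small amount of care is the reduction to 2D and the geometric count of the two "disagreement" arcs, but this is standard. The ties $w^\top x=0$ or $w^\top y=0$ occur with probability zero (as $w^\top x$ is a nondegenerate Gaussian), so the choice of convention in the indicator $\mathbbm{1}_{\{w^\top x\}}$ (whether weak or strict inequality) is immaterial and does not affect the probability.
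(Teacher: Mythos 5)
Your proof is correct, and it is exactly the standard Goemans--Williamson random hyperplane calculation that the paper alludes to but does not write out (the paper merely states ``it is easy to calculate the collision probability'' and gives the claim without proof). The two steps --- reducing to a single bit by independence of $w_1,\dots,w_r$, and computing the one-bit collision probability by projecting onto $\mathrm{span}(x,y)$ and using rotational invariance of the Gaussian to see that the disagreement region is a pair of arcs of total angular measure $2\theta_{x,y}$ --- are both carried out cleanly, and your observation that $\{w^\top x = 0\}$ has probability zero correctly dismisses the ambiguity in the paper's indicator notation $\mathbbm{1}_{\{w_i^\top x\}}$.
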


Therefore, the points with small angular distances are likely to be hashed to the same buckets while points with large angular distances are unlikely to be hashed to the same buckets. 

So, if we hash keys $k_j$ and queries $q_i$ using the angular lsh given in \cref{def:angular_lsh} then the entries of the attention matrix $\A$ which correspond to colliding pairs of keys and queries will likely have very large values.
As we mentioned earlier, the main efficiency bottleneck in this lsh-based approach for computing the dominant entries of the attention matrix is the unevenness of hash bucket sizes. 
If we try to compute the sparse approximation to $\A$, as defined in \cref{eq:A_sparse}, using the lsh function from \cref{def:angular_lsh} by parallelizing the computation across buckets, the runtime will be dominated by the time to compute entries in the largest bucket.

One solution for increasing efficiency, which was proposed in \cite{kitaev2019reformer}, is to truncate the lsh buckets and force them to contain equal number of keys and queries. 
However, truncation can degrade the quality of approximation drastically because there will be \emph{spillover} from one bucket to another, and some points can be forced into far-away buckets.
The reason for this spillover effect is the fact that consecutive buckets in a hash table do not necessarily represent areas of the $\RR^d$ space which are geometrically close to each other.

We show that in fact, it is possible to sort the buckets of the angular lsh from \cref{def:angular_lsh} such that the order of buckets reflects their geometrical position, thus, consecutive buckets actually represent neighboring partitions of $\RR^d$. 
It turns out that the geometric distance between two buckets of this lsh function translates into the Hamming distance between their binary labels.

To be precise, for any binary numbers $b_1, b_2 \in \{0,1\}^r$ let $d_H(b_1, b_2) \in [r+1]$ represent the \emph{Hamming distance} between the two, i.e., the number of bits where $b_1$ and $b_2$ differ.
Now note that the lsh buckets in \cref{def:angular_lsh} are labeled with $r$-bit binary numbers. 
Each bit in the binary representations of buckets corresponds to a partitioning of the $\RR^d$ into two sides of a random hyperplane whose normal vector is sampled from a tropical Gaussian.
Therefore, if we have two buckets $b_1$ and $b_2$ with hamming distance $d_H(b_1, b_2) = 1$ then these buckets are positioned on the same sides of all random hyperplanes except for one, thus, they represent neighboring regions in $\RR^d$ and the hyperplanes corresponding to the differing bit of $b_1$ and $b_2$ is the boundary between two regions.

We show this fact in \cref{fig:lsh_partition}, which illustrates the space partitions corresponding to the buckets of a rank-$2$ angular lsh in dimension $d=2$. It is clearly visible that the bucket labels of neighboring partitions have unit Hamming distance.
In \cref{fig:lsh_init_hashing} we hash an example dataset using this LSH function and as can be seen, the buckets have uneven sizes.
Because of the relationship between the Hamming distance of bucket labels and the distance between space partitions, if we order the dataset according to the Hamming ordering of their buckets and then truncate them we get new buckets with even sizes and minimal spillover effect. 
In particular, in \cref{fig:lsh_example_truncation} we order the dataset such that the points from buckets $00, 01, 11, 10$ come in this specific order and then we bin the data points by partitioning the ordered dataset into equal-sized parts. 
The resulting bins show no spillover effect.

\begin{figure}[t]
    \centering
    \subfigure[Space partitions by angular LSH]{\label{fig:lsh_partition}
        \begin{tikzpicture}[scale=0.63]
            \begin{axis}[
                xmin=-1, xmax=1,
                ymin=-1, ymax=1]
                
                \addplot[name path = A, -latex, samples = 1000]{-1*x} node [very near end, right] {$y=x/3$};
                
                \addplot[name path = B, -latex, samples = 1000]{x/3} node [very near end, right] {$y=-x$};
                
                \fill[violet!10](-1,-1) -- (-1,-0.33333) -- (0,0) -- (1,-1);
                \fill[violet!40](1,-1) -- (0,0) -- (1,0.3333);
                \fill[violet!70](-1,-0.33333) -- (0,0) -- (-1,1);
                \fill[violet!100](-1,1) -- (0,0) -- (1,0.33333) -- (1, 1);
                \node[] at (0,-0.5) {\LARGE{$00$}};
                \node at (0,0.5) {\LARGE{$11$}};
                \node at (0.8,0) {\LARGE{$01$}};
                \node at (-0.8,0) {\LARGE{$10$}};
                
            \end{axis}
            
        \end{tikzpicture}
    }
    \subfigure[Hashing an example dataset]{\label{fig:lsh_init_hashing}
        \begin{tikzpicture}[scale=0.63]
            \begin{axis}[
                xmin=-1, xmax=1,
                ymin=-1, ymax=1]
                
                \addplot[name path = A, -latex, samples = 1000, thick]{-1*x} node [very near end, right] {$y=x/3$};
                
                \addplot[name path = B, -latex, samples = 1000, thick]{x/3} node [very near end, right] {$y=-x$};
                
                \node at (-0.72,-0.37) [place_b1] {};
                \node at (-0.8,-0.41) [place_b1] {};
                \node at (-0.39, -0.8) [place_b1] {};
                \node at (-0.38, -0.67) [place_b1] {};
                \node at (-0.0, -0.63) [place_b1] {};
                \node at (0.3,-0.67) [place_b1] {};
                \node at (0.4, -0.72) [place_b1] {};
                \node at (0.52, -0.67) [place_b1] {};
                \node at (0.6,0.08) [place_b2] {};
                \node at (0.69, -0.43) [place_b2] {};
                \node at (0.75, -0.243) [place_b2] {};
                \node at (0.6, -0.2) [place_b2] {};
                \node at (0.55, -0.43) [place_b2] {};
                \node at (-0.72, 0.33) [place_b3] {};
                \node at (-0.8, -0.1) [place_b3] {};
                \node at (-0.8, 0.6) [place_b3] {};
                \node at (0.5,0.6) [place_b4] {};
                \node at (-0.32,0.8) [place_b4] {};
                \node at (0.73,0.412) [place_b4] {};
                \node at (0.2,0.543) [place_b4] {};
                
            \end{axis}
            
        \end{tikzpicture}
    }
    \subfigure[Buckets truncation in Hamming distance order]{\label{fig:lsh_example_truncation}
        \begin{tikzpicture}[scale=0.63]
            \begin{axis}[
                xmin=-1, xmax=1,
                ymin=-1, ymax=1]
                
                \addplot[name path = A, -latex, samples = 1000, dotted]{-1*x} node [very near end, right] {$y=x/3$};
                
                \addplot[name path = B, -latex, samples = 1000, dotted]{x/3} node [very near end, right] {$y=-x$};
                
                \node at (-0.72,-0.37) [place_b1] {};
                \node at (-0.8,-0.41) [place_b1] {};
                \node at (-0.39, -0.8) [place_b1] {};
                \node at (-0.38, -0.67) [place_b1] {};
                \node at (-0.0, -0.63) [place_b1] {};
                \node at (0.3,-0.67) [place_b2] {};
                \node at (0.4, -0.72) [place_b2] {};
                \node at (0.52, -0.67) [place_b2] {};
                \node at (0.69, -0.43) [place_b2] {};
                \node at (0.55, -0.43) [place_b2] {};
                \node at (0.6,0.08) [place_b4] {};
                \node at (0.75, -0.243) [place_b4] {};
                \node at (0.6, -0.2) [place_b4] {};
                \node at (-0.72, 0.33) [place_b3] {};
                \node at (-0.8, -0.1) [place_b3] {};
                \node at (-0.8, 0.6) [place_b3] {};
                \node at (0.5,0.6) [place_b4] {};
                \node at (-0.32,0.8) [place_b3] {};
                \node at (0.73,0.412) [place_b4] {};
                \node at (0.2,0.543) [place_b3] {};
                
                \addplot[black,thick] coordinates { (0,0) (0.1,-1) };
                \addplot[black,thick] coordinates { (0,0) (1,-0.45) };
                \addplot[black,thick] coordinates { (0,0) (0.5,1) };
                \addplot[black,thick] coordinates { (0,0) (-1,-0.3333) };
                
            \end{axis}
            
        \end{tikzpicture}
    }
    \caption{Rank-$2$ Angular LSH in action (in dimension $d=2$). The space partitions corresponding to buckets with unit Hamming distance are neighbors in $\RR^d$. In \cref{fig:lsh_init_hashing} we hash an example dataset and we get uneven buckets. \cref{fig:lsh_example_truncation} show that if we order the dataset according to the Hamming distance of their buckets and then truncate the buckets we get new equal-sized buckets with minimal spillover effect.}
    \label{fig:lsh_examples}
\end{figure}
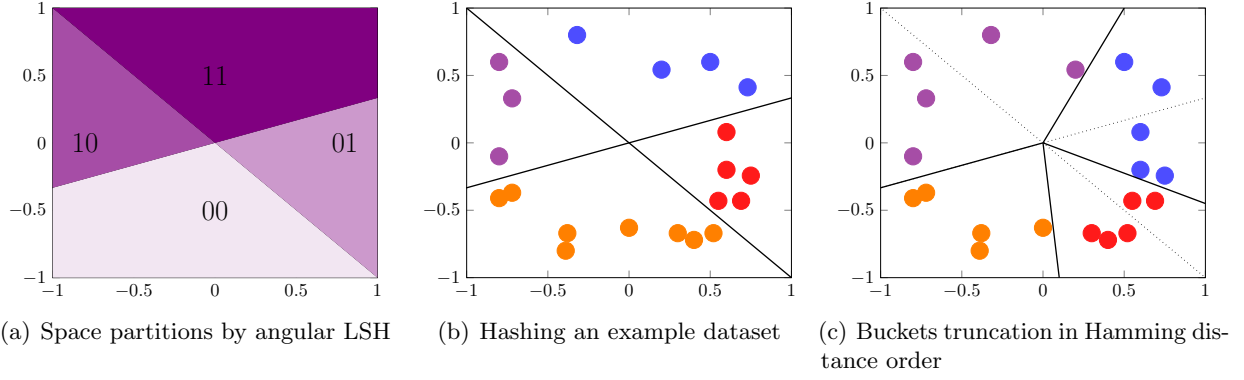

In the following lemma we show how to order $r$-bit binary numbers $\{ 0, 1 \}^r$ such that all consecutive numbers have unit Hamming distance:
\begin{lemma}[Ordering of binary numbers according to their Hamming distance]\label{lem_binary_ordering_hamming}
    For any positive integer $r$ it is possible to order the set of binary numbers $\{ 0, 1 \}^r$ as a sequence $b_1, b_2, \ldots b_{2^r}$ such that for any $j \in [2^r-1]$:
    \[
    d_H(b_j, b_{j+1}) = 1.
    \]
\end{lemma}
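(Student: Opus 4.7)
The plan is to construct such an ordering explicitly by induction on $r$, which recovers the classical reflected binary Gray code. The key insight is that if we have a valid Hamming-$1$ ordering on $r$-bit strings, we can lift it to $(r+1)$-bit strings by prepending a $0$ to the sequence, then appending the same sequence in reverse with a $1$ prepended. The reflection is what guarantees that the ``seam'' where we switch the leading bit from $0$ to $1$ occurs between two strings whose remaining $r$ bits are identical.

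More concretely, I would proceed as follows. For the base case $r=1$, take the sequence $b_1 = 0, b_2 = 1$, which trivially satisfies $d_H(b_1, b_2) = 1$. For the inductive step, assume the statement holds at level $r$, so there exists a sequence $b_1, b_2, \ldots, b_{2^r} \in \{0,1\}^r$ with $d_H(b_j, b_{j+1}) = 1$ for all $j \in [2^r - 1]$. Define the length-$2^{r+1}$ sequence of $(r+1)$-bit strings by
\begin{equation*}
c_j := \begin{cases} 0 \, b_j & \text{if } 1 \le j \le 2^r, \\ 1 \, b_{2^{r+1} + 1 - j} & \text{if } 2^r + 1 \le j \le 2^{r+1}, \end{cases}
\end{equation*}
where juxtaposition denotes concatenation. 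I would then verify the Hamming-$1$ property in three regimes: for $1 \le j < 2^r$, consecutive elements share the leading bit $0$ and their suffixes satisfy $d_H(b_j, b_{j+1}) = 1$ by the inductive hypothesis; for $j = 2^r$, the two strings are $0 \, b_{2^r}$ and $1 \, b_{2^r}$, which differ only in the leading bit; and for $2^r < j < 2^{r+1}$, both strings share the leading bit $1$ and their suffixes $b_{2^{r+1}+1-j}$ and $b_{2^r - j}$ are consecutive (in reverse order) in the inductive sequence, so again differ in exactly one bit.

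The remaining check is that all $2^{r+1}$ strings in the sequence are distinct, which follows because the first half all begin with $0$ and the second half all begin with $1$, while within each half the suffixes are a permutation of $\{0,1\}^r$ by the inductive hypothesis (the original sequence enumerates $\{0,1\}^r$ since it has length $2^r$ and each step changes exactly one bit, so no string can repeat without violating parity of the bit-sum parity walk argument; alternatively, this distinctness is itself maintained as an inductive invariant).

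I do not anticipate a real obstacle here, as this is a standard Gray code construction. The only minor subtlety worth stating carefully is that the inductive hypothesis should be strengthened to include the fact that the sequence enumerates each element of $\{0,1\}^r$ exactly once; otherwise, the lifted sequence might only be a valid Hamming-$1$ walk rather than a Hamming-$1$ enumeration. Carrying this strengthened invariant through the induction is routine.
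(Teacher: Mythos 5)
Your proof is correct and uses essentially the same reflected Gray code construction as the paper; the only cosmetic difference is that you prepend the new bit while the paper appends it. Your extra care in verifying that the sequence actually enumerates all of $\{0,1\}^{r}$ (rather than merely being a Hamming-$1$ walk) is a genuine, if minor, tightening of the argument that the paper leaves implicit.
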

\begin{proof}
    The proof is by induction. For $r=1$ the base of induction follows trivially.
    Now suppose that we have the sequence of $(r-1)$-bit numbers $b'_1, b'_2, \ldots b'_{2^{r-1}}$ such that $d_H(b'_j, b'_{j+1}) = 1$ for any $j \in [2^{r-1}-1]$. Then the sequence of $r$-bit numbers will be as follows:
    \[
    b_j := \begin{cases}
        (b'_j , ~0) & \text{ if } j \le 2^{r-1}\\
        (b'_{2^r+1-j}, ~1) & \text{ if } j > 2^{r-1}
    \end{cases} ~~~~~~~\text{ for } j \in [2^r].
    \]
    One can verify that this sequence satisfies the desired property and the proof is complete.
\end{proof}

Therefore, we can use the angular LSH together with the ordering of binary numbers from \cref{lem_binary_ordering_hamming} to construct an effective hash function with equal-sized buckets. 
	
\begin{defn}[Equal-sized LSH with Minimal Spillover]\label{def_lsh_truncation_no_spillover}
    Suppose that we want to hash a dataset $x_1, x_2, \ldots x_n \in \RR^d$.
    \begin{enumerate}
        \item Hash these points using a rank-$r$ Angular LSH $h(\cdot)$ as per \cref{def:angular_lsh}. 
        \item Then, using \cref{lem_binary_ordering_hamming}, produce an ordering of $r$-bit binary numbers such that consecutive numbers have unit Hamming distance; let $b_1, b_2, \ldots b_{2^r}$ be such ordering.
        \item Next, define a permutation $\Pcal \in {\tt Sym}(n)$ which orders the dataset according to the Hamming ordering of their buckets. 
        More specifically, $\Pcal$ satisfies:
        \[
        \Pcal(i) < \Pcal(j) ~~~\text{ iff } h(x_i) \le_{*} h(x_j) \text{, where the inequality $\le_{*}$ acts with respect to the ordering } b_1, b_2, \ldots b_{2^r}.
        \]
        \item Permute $x_1, x_2, \ldots x_n$ according to $\Pcal$ and then partition the sequence into equal-sized chunks. These chunks are the buckets.
    \end{enumerate}
\end{defn}

\begin{figure}[t]
    \centering
    \begin{tikzpicture}[every node/.style={scale=0.8}, text depth=.5ex,text height=1ex,text width=1.5ex]
        
        \matrix (A_before) [black2darr]{
            |[fill=orange!70]|\phantom{} & |[fill=orange!10]|\phantom{} &|[fill=orange!10]|\phantom{} & |[fill=orange!10]|\phantom{} & |[fill=orange!100]|\phantom{} & |[fill=orange!10]|\phantom{} & |[fill=orange!10]|\phantom{}& |[fill=orange!60]|\phantom{}& |[fill=orange!100]|\phantom{}\\ %380
            |[fill=orange!10]|\phantom{} & |[fill=orange!70]|\phantom{} & |[fill=orange!10]|\phantom{} & |[fill=orange!70]|\phantom{} & |[fill=orange!30]|\phantom{} & |[fill=orange!10]|\phantom{} & |[fill=orange!30]|\phantom{} & |[fill=orange!100]|\phantom{}& |[fill=orange!10]|\phantom{}\\ %340
            |[fill=orange!10]|\phantom{} & |[fill=orange!30]|\phantom{} & |[fill=orange!100]|\phantom{} &|[fill=orange!10]|\phantom{} & |[fill=orange!10]|\phantom{} & |[fill=orange!70]|\phantom{} & |[fill=orange!30]|\phantom{} & |[fill=orange!10]|\phantom{}& |[fill=orange!40]|\phantom{}\\ %310
            |[fill=orange!10]|\phantom{} & |[fill=orange!30]|\phantom{} & |[fill=orange!40]|\phantom{} & |[fill=orange!10]|\phantom{} & |[fill=orange!10]|\phantom{} & |[fill=orange!70]|\phantom{} & |[fill=orange!100]|\phantom{} & |[fill=orange!30]|\phantom{}& |[fill=orange!10]|\phantom{}\\ %320
            |[fill=orange!10]|\phantom{} & |[fill=orange!100]| \phantom{} & |[fill=orange!30]|\phantom{} & |[fill=orange!100]| \phantom{} & |[fill=orange!10]|\phantom{} & |[fill=orange!10]|\phantom{} & |[fill=orange!30]|\phantom{} & |[fill=orange!70]|\phantom{}& |[fill=orange!10]|\phantom{}\\ %370
            |[fill=orange!10]|\phantom{} & |[fill=orange!10]|\phantom{} & |[fill=orange!70]|\phantom{} & |[fill=orange!10]|\phantom{} & |[fill=orange!10]|\phantom{} & |[fill=orange!100]|\phantom{} & |[fill=orange!70]|\phantom{}& |[fill=orange!40]|\phantom{}& |[fill=orange!10]|\phantom{}\\ %330
            |[fill=orange!70]|\phantom{} & |[fill=orange!70]|\phantom{} & |[fill=orange!10]|\phantom{} & |[fill=orange!10]|\phantom{} & |[fill=orange!100]|\phantom{} & |[fill=orange!10]|\phantom{} & |[fill=orange!10]|\phantom{}& |[fill=orange!10]|\phantom{}& |[fill=orange!100]|\phantom{}\\ %390
            |[fill=orange!10]|\phantom{} & |[fill=orange!100]|\phantom{} & |[fill=orange!10]|\phantom{} & |[fill=orange!100]|\phantom{} & |[fill=orange!10]|\phantom{} & |[fill=orange!10]|\phantom{} & |[fill=orange!30]|\phantom{} &  |[fill=orange!70]|\phantom{}& |[fill=orange!10]|\phantom{}\\ %350
            |[fill=orange!40]|\phantom{} & |[fill=orange!30]|\phantom{} & |[fill=orange!30]|\phantom{} & |[fill=orange!10]|\phantom{} & |[fill=orange!70]|\phantom{} & |[fill=orange!10]|\phantom{} & |[fill=orange!10]|\phantom{}& |[fill=orange!50]|\phantom{}& |[fill=orange!70]|\phantom{}\\ %320
        };
        \node[below=3.5em of A_before-7-4] {\Large $~~\A$};
        
        \matrix (unsorted_left_before) [black2darr, left=0.2em of A_before] {
            |[fill=violet!100]|\phantom{} \\ 
            |[fill=violet!10]|\phantom{} \\ |[fill=violet!70]|\phantom{} \\ |[fill=violet!40]|\phantom{} \\ |[fill=violet!10]|\phantom{} \\ |[fill=violet!70]|\phantom{} \\ |[fill=violet!100]|\phantom{}\\
            |[fill=violet!10]|\phantom{}\\
            |[fill=violet!100]|\phantom{}\\
        };
        \node[right=-0.3em of unsorted_left_before-1-1] {\small{$q_1$}};
        \node[right=-0.3em of unsorted_left_before-2-1] {\small{$q_2$}};
        \node[right=-0.3em of unsorted_left_before-3-1] {\small{$q_3$}};
        \node[right=-0.3em of unsorted_left_before-4-1] {\small{$q_4$}};
        \node[right=-0.3em of unsorted_left_before-5-1] {\small{$q_5$}};
        \node[right=-0.3em of unsorted_left_before-6-1] {\small{$q_6$}};
        \node[right=-0.3em of unsorted_left_before-7-1] {\small{$q_7$}};
        \node[right=-0.3em of unsorted_left_before-8-1] {\small{$q_8$}};
        \node[right=-0.3em of unsorted_left_before-9-1] {\small{$q_9$}};
        
        \matrix (unsorted_up_before) [black2darr, above=0.2em of A_before] {
            |[fill=violet!100]|\phantom{} & |[fill=violet!10]|\phantom{} & |[fill=violet!70]|\phantom{} & |[fill=violet!10]|\phantom{} & |[fill=violet!100]|\phantom{} & |[fill=violet!70]|\phantom{} & |[fill=violet!40]|\phantom{} & |[fill=violet!10]|\phantom{} & |[fill=violet!100]|\phantom{}\\
        };
        \node[below=-0.12em of unsorted_up_before-1-1] {\small{$k_1$}};
        \node[below=-0.12em of unsorted_up_before-1-2] {\small{$k_2$}};
        \node[below=-0.12em of unsorted_up_before-1-3] {\small{$k_3$}};
        \node[below=-0.12em of unsorted_up_before-1-4] {\small{$k_4$}};
        \node[below=-0.12em of unsorted_up_before-1-5] {\small{$k_5$}};
        \node[below=-0.12em of unsorted_up_before-1-6] {\small{$k_6$}};
        \node[below=-0.12em of unsorted_up_before-1-7] {\small{$k_7$}};
        \node[below=-0.12em of unsorted_up_before-1-8] {\small{$k_8$}};
        \node[below=-0.12em of unsorted_up_before-1-9] {\small{$k_9$}};
        
        \node[right=0.2em of A_before] (plus) {\LARGE $\Rightarrow$};\node[above=0.2em of plus] (plus) {\LARGE $\mathcal{P}$};
        
        \matrix (A_after) [black2darr, right=4.5em of A_before] {
            |[fill=orange!70]|\phantom{} & |[fill=orange!100]|\phantom{} & |[fill=orange!70]|\phantom{} & |[fill=orange!30]|\phantom{} & |[fill=orange!10]|\phantom{} & |[fill=orange!10]|\phantom{} & |[fill=orange!10]|\phantom{} & |[fill=orange!10]|\phantom{} & |[fill=orange!30]|\phantom{}\\|[fill=orange!100]|\phantom{} & |[fill=orange!70]|\phantom{} & |[fill=orange!100]|\phantom{} & |[fill=orange!30]|\phantom{} & |[fill=orange!10]|\phantom{} & |[fill=orange!10]|\phantom{} & |[fill=orange!10]|\phantom{} & |[fill=orange!10]|\phantom{} & |[fill=orange!10]|\phantom{}\\|[fill=orange!100]|\phantom{} & |[fill=orange!70]|\phantom{} & |[fill=orange!100]|\phantom{} & |[fill=orange!30]|\phantom{} & |[fill=orange!30]|\phantom{} & |[fill=orange!10]|\phantom{} & |[fill=orange!10]|\phantom{} & |[fill=orange!10]|\phantom{} & |[fill=orange!10]|\phantom{}\\|[fill=orange!30]|\phantom{} & |[fill=orange!30]|\phantom{} & |[fill=orange!10]|\phantom{} & |[fill=orange!100]|\phantom{} & |[fill=orange!40]|\phantom{} & |[fill=orange!70]|\phantom{} & |[fill=orange!10]|\phantom{} & |[fill=orange!10]|\phantom{} & |[fill=orange!10]|\phantom{}\\|[fill=orange!30]|\phantom{} & |[fill=orange!10]|\phantom{} & |[fill=orange!10]|\phantom{} & |[fill=orange!30]|\phantom{} & |[fill=orange!100]|\phantom{} & |[fill=orange!70]|\phantom{} & |[fill=orange!40]|\phantom{} & |[fill=orange!10]|\phantom{} & |[fill=orange!10]|\phantom{}\\|[fill=orange!10]|\phantom{} & |[fill=orange!40]|\phantom{} & |[fill=orange!10]|\phantom{} & |[fill=orange!70]|\phantom{} & |[fill=orange!70]|\phantom{} & |[fill=orange!100]|\phantom{} & |[fill=orange!10]|\phantom{} & |[fill=orange!10]|\phantom{} & |[fill=orange!10]|\phantom{}\\|[fill=orange!70]|\phantom{} & |[fill=orange!10]|\phantom{} & |[fill=orange!10]|\phantom{} & |[fill=orange!10]|\phantom{} & |[fill=orange!10]|\phantom{} & |[fill=orange!10]|\phantom{} & |[fill=orange!100]|\phantom{} & |[fill=orange!70]|\phantom{} & |[fill=orange!100]|\phantom{}\\|[fill=orange!30]|\phantom{} & |[fill=orange!50]|\phantom{} & |[fill=orange!10]|\phantom{} & |[fill=orange!10]|\phantom{} & |[fill=orange!30]|\phantom{} & |[fill=orange!10]|\phantom{} & |[fill=orange!70]|\phantom{} & |[fill=orange!40]|\phantom{} & |[fill=orange!70]|\phantom{}\\|[fill=orange!10]|\phantom{} & |[fill=orange!60]|\phantom{} & |[fill=orange!10]|\phantom{} & |[fill=orange!10]|\phantom{} & |[fill=orange!10]|\phantom{} & |[fill=orange!10]|\phantom{} & |[fill=orange!100]|\phantom{} & |[fill=orange!70]|\phantom{} & |[fill=orange!100]|\phantom{}\\
        };
        \node[below=3.5em of A_after-7-4] {\Large $~~~\A_\Pcal$};
        
        \matrix (sorted_left_after) [black2darr, left=0.2em of A_after] {
            |[fill=violet!10]|\phantom{} \\ 
            |[fill=violet!10]|\phantom{} \\ |[fill=violet!10]|\phantom{} \\ |[fill=violet!40]|\phantom{} \\ |[fill=violet!40]|\phantom{} \\ |[fill=violet!70]|\phantom{} \\ |[fill=violet!100]|\phantom{}\\
            |[fill=violet!100]|\phantom{}\\
            |[fill=violet!100]|\phantom{}\\
        };
        
        \node[right=-0.3em of sorted_left_after-1-1] {\small{$q_2$}};
        \node[right=-0.3em of sorted_left_after-2-1] {\small{$q_8$}};
        \node[right=-0.3em of sorted_left_after-3-1] {\small{$q_5$}};
        \node[right=-0.3em of sorted_left_after-4-1] {\small{$q_4$}};
        \node[right=-0.3em of sorted_left_after-5-1] {\small{$q_3$}};
        \node[right=-0.3em of sorted_left_after-6-1] {\small{$q_6$}};
        \node[right=-0.3em of sorted_left_after-7-1] {\small{$q_7$}};
        \node[right=-0.3em of sorted_left_after-8-1] {\small{$q_9$}};
        \node[right=-0.3em of sorted_left_after-9-1] {\small{$q_1$}};
        
        \matrix (sorted_up_after) [black2darr, above=0.2em of A_after] {
            |[fill=violet!10]|\phantom{} & |[fill=violet!10]|\phantom{} & |[fill=violet!10]|\phantom{} & |[fill=violet!40]|\phantom{} & |[fill=violet!40]|\phantom{} & |[fill=violet!70]|\phantom{} & |[fill=violet!100]|\phantom{} &|[fill=violet!100]|\phantom{} & |[fill=violet!100]|\phantom{}\\
        };
        
        \node[below=-0.12em of sorted_up_after-1-1] {\small{$k_2$}};
        \node[below=-0.12em of sorted_up_after-1-2] {\small{$k_8$}};
        \node[below=-0.12em of sorted_up_after-1-3] {\small{$k_4$}};
        \node[below=-0.12em of sorted_up_after-1-4] {\small{$k_7$}};
        \node[below=-0.12em of sorted_up_after-1-5] {\small{$k_3$}};
        \node[below=-0.12em of sorted_up_after-1-6] {\small{$k_6$}};
        \node[below=-0.12em of sorted_up_after-1-7] {\small{$k_9$}};
        \node[below=-0.12em of sorted_up_after-1-8] {\small{$k_1$}};
        \node[below=-0.12em of sorted_up_after-1-9] {\small{$k_5$}};
        
        \node[right=0.2em of A_after] (right_arrow) {\LARGE $\Rightarrow$}; \node[above=0.2em of right_arrow] (plus) {\LARGE $\mathcal{P}^{-1}$};
        
        \matrix (A_primary) [black2darr, right=3.5em of A_after]{
            |[fill=orange!70]|\phantom{} & |[fill=orange!10]|\phantom{} &|[fill=orange!10]|\phantom{} & |[fill=orange!10]|\phantom{} & |[fill=orange!100]|\phantom{} & |[fill=orange!10]|\phantom{} & |[fill=orange!10]|\phantom{}& |[fill=orange!60]|\phantom{}& |[fill=orange!100]|\phantom{}\\ %380
            |[fill=orange!10]|\phantom{} & |[fill=orange!70]|\phantom{} & |[fill=orange!10]|\phantom{} & |[fill=orange!70]|\phantom{} & |[fill=orange!30]|\phantom{} & |[fill=orange!10]|\phantom{} & |[fill=orange!30]|\phantom{} & |[fill=orange!100]|\phantom{}& |[fill=orange!10]|\phantom{}\\ %340
            |[fill=orange!10]|\phantom{} & |[fill=orange!30]|\phantom{} & |[fill=orange!100]|\phantom{} &|[fill=orange!10]|\phantom{} & |[fill=orange!10]|\phantom{} & |[fill=orange!70]|\phantom{} & |[fill=orange!30]|\phantom{} & |[fill=orange!10]|\phantom{}& |[fill=orange!40]|\phantom{}\\ %310
            |[fill=orange!10]|\phantom{} & |[fill=orange!30]|\phantom{} & |[fill=orange!40]|\phantom{} & |[fill=orange!10]|\phantom{} & |[fill=orange!10]|\phantom{} & |[fill=orange!70]|\phantom{} & |[fill=orange!100]|\phantom{} & |[fill=orange!30]|\phantom{}& |[fill=orange!10]|\phantom{}\\ %320
            |[fill=orange!10]|\phantom{} & |[fill=orange!100]| \phantom{} & |[fill=orange!30]|\phantom{} & |[fill=orange!100]| \phantom{} & |[fill=orange!10]|\phantom{} & |[fill=orange!10]|\phantom{} & |[fill=orange!30]|\phantom{} & |[fill=orange!70]|\phantom{}& |[fill=orange!10]|\phantom{}\\ %370
            |[fill=orange!10]|\phantom{} & |[fill=orange!10]|\phantom{} & |[fill=orange!70]|\phantom{} & |[fill=orange!10]|\phantom{} & |[fill=orange!10]|\phantom{} & |[fill=orange!100]|\phantom{} & |[fill=orange!70]|\phantom{}& |[fill=orange!40]|\phantom{}& |[fill=orange!10]|\phantom{}\\ %330
            |[fill=orange!70]|\phantom{} & |[fill=orange!70]|\phantom{} & |[fill=orange!10]|\phantom{} & |[fill=orange!10]|\phantom{} & |[fill=orange!100]|\phantom{} & |[fill=orange!10]|\phantom{} & |[fill=orange!10]|\phantom{}& |[fill=orange!10]|\phantom{}& |[fill=orange!100]|\phantom{}\\ %390
            |[fill=orange!10]|\phantom{} & |[fill=orange!100]|\phantom{} & |[fill=orange!10]|\phantom{} & |[fill=orange!100]|\phantom{} & |[fill=orange!10]|\phantom{} & |[fill=orange!10]|\phantom{} & |[fill=orange!30]|\phantom{} &  |[fill=orange!70]|\phantom{}& |[fill=orange!10]|\phantom{}\\ %350
            |[fill=orange!40]|\phantom{} & |[fill=orange!30]|\phantom{} & |[fill=orange!30]|\phantom{} & |[fill=orange!10]|\phantom{} & |[fill=orange!70]|\phantom{} & |[fill=orange!10]|\phantom{} & |[fill=orange!10]|\phantom{}& |[fill=orange!50]|\phantom{}& |[fill=orange!70]|\phantom{}\\ %320
        };
        \node[below=3.5em of A_primary-7-4] {\Large $~~~\A_{\tt spar}$};
        
        \newcommand{\blackSquareA}[2]{\draw[teal, black, very thick] (A_primary-#2.south west) rectangle (A_primary-#2.north east);}
        \blackSquareA{1-1}{1-1}
        \blackSquareA{1-5}{1-5}
        \blackSquareA{1-9}{1-9}
        \blackSquareA{2-2}{2-2}
        \blackSquareA{2-4}{2-4}
        \blackSquareA{2-8}{2-8}
        \blackSquareA{3-3}{3-3}
        \blackSquareA{3-6}{3-6}
        \blackSquareA{4-7}{4-7}
        \blackSquareA{5-2}{5-2}
        \blackSquareA{5-4}{5-4}
        \blackSquareA{5-8}{5-8}
        \blackSquareA{6-3}{6-3}  
        \blackSquareA{6-6}{6-6}
        \blackSquareA{7-1}{7-1}
        \blackSquareA{7-5}{7-5}  \blackSquareA{7-9}{7-9}
        \blackSquareA{8-2}{8-2}
        \blackSquareA{8-4}{8-4}  \blackSquareA{8-8}{8-8}
        \blackSquareA{9-1}{9-1}
        \blackSquareA{9-5}{9-5}
        \blackSquareA{9-9}{9-9}
        
        \newcommand{\blackSquareABlock}[2]{\draw[teal, black, very thick] (A_after-#1.north west) rectangle (A_after-#2.south east);}
        \blackSquareABlock{1-1}{3-3}
        \blackSquareABlock{4-4}{6-6}
        \blackSquareABlock{7-7}{9-9}
        
        \matrix (b_1) [black2darr, below=1.5em of A_before] {
            |[fill=violet!10]|\phantom{}\\
        }; \node[right=0.2em of b_1-1-1] {$\mathbf b_1$};
        \matrix (b_2) [black2darr, right=2.5em of b_1] {
            |[fill=violet!40]|\phantom{}\\
        };\node[right=0.2em of b_2-1-1] {$\mathbf b_2$};
        \matrix (b_3) [black2darr, right=2.5em of b_2] {
            |[fill=violet!70]|\phantom{}\\
        };\node[right=0.2em of b_3-1-1] {$\mathbf b_3$};
        \matrix (b_4) [black2darr, right=2.5em of b_3] {
            |[fill=violet!100]|\phantom{}\\
        };\node[right=0.2em of b_4-1-1] {$\mathbf b_4$};
        
    \end{tikzpicture}
    \caption{An example of how $\A_{\tt spar}$ can be computed efficiently. (Left) keys and queries are hashed using the angular lsh function. buckets are represented by shades of violet. (Middle) keys and queries are permuted such that their buckets are sorted according to the Hamming distance ordering. Large entries of the permuted attention matrix $\A_\Pcal$ are concentrated around the diagonal blocks, so we compute the diagonal blocks. (Right) the block diagonal approximation to $\A_\Pcal$ is reverse permuted to obtain $\A_{\tt spar}$.}\label{fig:asparse_computation_lsh_truncation}
\end{figure}

Now we explain how we can use the lsh procedure given in \cref{def_lsh_truncation_no_spillover} to compute $\A_{\tt spar}$ as per \cref{eq:A_sparse} through an example shown in \cref{fig:asparse_computation_lsh_truncation}. We first hash keys $k_j$ and queries $q_i$ via the angular lsh. We represent the buckets of this hashing via different shades of violet in \cref{fig:asparse_computation_lsh_truncation}. 
Clearly, the bucket sizes are uneven. 
Then we permute keys and queries via $\Pcal$ which orders the points such that their buckets are sorted according to the ordering $b_1, b_2, b_3, b_4$ obtained from \cref{lem_binary_ordering_hamming}. 
Then we truncate the sorted points which is in fact equivalent to selecting blocks along the diagonal of the permuted attention matrix. The selected diagonal blocks in \cref{fig:asparse_computation_lsh_truncation} illustrate this. Finally, we can reverse the permutation on the rows and columns of the block diagonal attention which gives us the final $\A_{\tt spar}$.

\section{Omitted Proofs}

\subsection{Proof of \cref{lem:amm-operator-norm}: Approximate Matrix Multiplication via Sampling}\label{appndx-amm-proof}
In this section, we analyze the random sampling method for approximately computing the product of two rectangular matrices, presented in \cref{lem:amm-operator-norm}.
The proof of this lemma is based on the following version of the matrix Bernstein inequality.

\begin{lemma}[Matrix Approximation by Random Sampling, Corollary 6.2.1 from \cite{tropp2015introduction}]\label{lem-matrix-bernstein}
    Let $\B$ be a fixed $q \times d$ matrix.
    Construct a $q \times d$ random matrix $\R$ that satisfies
    \[ \EE[\R] = \B , ~~~~\text{ and }~~~ \norm{\R}_\op \le L. \]
    Compute the per-sample second moment:
    \[ m_2(\R) = \max\{ \norm{\EE[\R^*\R]}_\op, \norm{\EE[\R\R^*]}_\op \}. \]
    Form the matrix sampling estimator
    \[ \overline{\R}_m = \frac{1}{m} \sum_{i=1}^m \R_i ~~~~~\text{ where each $\R_i$ is an independent copy of $\R$}. \]
    Then for every $t>0$, the estimator satisfies
    \[ \Pr\left[ \norm{\overline{\R}_m - \B}_\op \ge t \right] \le (q+d) \cdot \exp\left( \frac{-mt^2/2}{m_2(\R) + 2Lt/3} \right). \]
\end{lemma}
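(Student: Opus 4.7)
The plan is to reduce this statement to the standard matrix Bernstein inequality for sums of independent, bounded, mean-zero random matrices (Theorem 6.1.1 of \cite{tropp2015introduction}), of which the stated lemma is the sampling corollary. The work is therefore purely the bookkeeping required to feed the Bernstein bound with the correct parameters.

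First I would rewrite the deviation as a sum of i.i.d.\ centered matrices. Setting $\X_i := (\R_i - \B)/m$ for $i = 1,\ldots,m$, each $\X_i$ has mean zero, the $\X_i$ are i.i.d.\ copies of $(\R - \B)/m$, and $\overline{\R}_m - \B = \sum_{i=1}^m \X_i$. This places us exactly in the regime where matrix Bernstein applies.

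Next I would compute the two summary statistics demanded by matrix Bernstein: a uniform operator-norm bound on the summands and the matrix variance of the sum. For the uniform bound, Jensen's inequality gives $\norm{\B}_\op = \norm{\EE[\R]}_\op \le \EE[\norm{\R}_\op] \le L$, hence $\norm{\X_i}_\op \le (\norm{\R_i}_\op + \norm{\B}_\op)/m \le 2L/m$ almost surely. For the variance, the key observation is that $\EE[(\R-\B)(\R-\B)^*] = \EE[\R\R^*] - \B\B^*$ is a covariance matrix and therefore PSD, while $\B\B^*$ is itself PSD; monotonicity of $\norm{\cdot}_\op$ on the PSD cone then yields $\norm{\EE[\R\R^*] - \B\B^*}_\op \le \norm{\EE[\R\R^*]}_\op \le m_2(\R)$, and symmetrically for the other factor. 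By independence of the $\X_i$, the matrix variance of the sum satisfies
\[ v := \max\!\left\{ \norm{\textstyle\sum_i \EE[\X_i \X_i^*]}_\op,\; \norm{\textstyle\sum_i \EE[\X_i^* \X_i]}_\op \right\} \le m_2(\R)/m. \]

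Finally I would invoke the matrix Bernstein bound with $L' = 2L/m$ and variance proxy $v \le m_2(\R)/m$, namely $\Pr[\norm{\overline{\R}_m - \B}_\op \ge t] \le (q+d)\exp(-(t^2/2)/(v + L't/3))$, and clear the common factor $1/m$ from numerator and denominator of the exponent to recover the stated bound. The only real obstacle is the matrix Bernstein inequality itself, whose proof combines Lieb's concavity theorem for the matrix moment-generating function with a Laplace-transform / Markov argument on the largest eigenvalue (extended to rectangular matrices via Hermitian dilation); since the paper cites this statement as Corollary 6.2.1 of Tropp's monograph, I would treat Theorem 6.1.1 of the same source as a black box and restrict the write-up to the elementary reduction above.
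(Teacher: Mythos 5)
Your reduction is correct: the paper does not actually prove this lemma but imports it verbatim as Corollary 6.2.1 of Tropp's monograph, and your derivation from the rectangular matrix Bernstein inequality (Theorem 6.1.1 there) --- centering via $\X_i = (\R_i - \B)/m$, the Jensen bound $\norm{\B}_\op \le L$ giving the uniform bound $2L/m$, and the PSD-monotonicity argument $\EE[(\R-\B)(\R-\B)^*] \preceq \EE[\R\R^*]$ giving the variance proxy $m_2(\R)/m$ --- is exactly how the corollary is obtained in the cited source. No gaps; nothing further is needed.
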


Now we prove \cref{lem:amm-operator-norm} by invoking the above matrix Bernstein inequality.
% \lemammoperatornorm*

{\bf Lemma 3.2}~~(Approximate Matrix Multiplication~(AMM)){\bf.}~
{\it 
    For any matrices $\X \in \RR^{n \times q}, \Y \in \RR^{n \times d}$ and any probability distribution $\{ p_i \}_{i \in [n]}$ which satisfies $p_i \ge \frac{1}{4} \cdot \frac{\norm{x_i}_2^2 + \gamma \cdot \norm{y_i}_2^2}{\norm{\X}_F^2 + \gamma \cdot \norm{\Y}_F^2}$ for all $i \in [n]$ and $\gamma = \norm{\X}_\op^2 / \norm{\Y}_\op^2$, a sampling matrix $\bPi \in \RR^{m \times n}$ constructed by first generating $m$ i.i.d. samples $\ell_1, \ell_2, \ldots \ell_m \in [n]$ according to $\{p_\ell\}_{\ell\in[n]}$ and then letting the $r^{th}$ row of $\bPi$ be $\frac{1}{\sqrt{m\cdot p_{\ell_r}}} \cdot e_{\ell_r}^\top$, if $m = \Omega\left( \varepsilon^{-2} \log n \cdot (\sr(\X) + \sr(\Y)) \right)$ for some $\varepsilon>0$, the following holds,
}
\begin{align*}
    \Pr \left[ \normop{\X^\top \bPi^\top \bPi \Y - \X^\top\Y} > \varepsilon\normop{\X} \normop{\Y}\right] \le \frac{1}{\poly(n)}.
\end{align*}

\begin{proof}
    First we let $\B := \X^\top \Y$. Then we let the random matrix $\R$ have the following distribution
    \[ \Pr\left[ \R = \frac{x_i^\top \cdot y_i}{p_i} \right] = p_i ~~~\text{ for }i \in [n] \]
    where $x_i$ and $y_i$ are $i^{th}$ row vector in $\X$ and $\Y$, respectively.
    With this definition we have,
    \[ \EE[\R] = \sum_{i \in [n]} \frac{x_i^\top \cdot y_i}{p_i} \cdot p_i = \sum_{i \in [n]} x_i^\top \cdot y_i = \X^\top\Y = \B. \]
    Furthermore, we can bound the operator norm of $\R$ as follows,
    \begin{align*}
        \norm{\R}_\op &\le \max_{i \in [n]} \frac{\norm{ x_i^\top \cdot y_i}_\op }{p_i} \\
        &= \max_{i \in [n]} \frac{\norm{x_i}_2 \norm{y_i}_2 }{p_i}\\
        &\le 4 \cdot \max_{i \in [n]} \frac{\norm{x_i}_2 \norm{y_i}_2 \cdot \left( \norm{\X}_F^2 + \gamma \cdot \norm{\Y}_F^2 \right)}{ \norm{x_i}_2^2 + \gamma \cdot \norm{y_i}_2^2 }\\
        &\le 2\cdot \max_{i \in [n]} \frac{1}{\sqrt{\gamma}} \cdot \norm{\X}_F^2 + \sqrt{\gamma} \cdot \norm{\Y}_F^2 \\
        &= 2\norm{\X}_\op \cdot \norm{\Y}_\op \cdot\left( \sr(\X) + \sr(\Y) \right) \equiv L,
    \end{align*}
    where the third line above follows from the precondition of \cref{lem:amm-operator-norm} about the distribution $\{ p_i\}_{i\in [n]}$ and the fourth line follows from AM-GM inequality. The last line follows from the definition of $\gamma$ and definition of stable rank.
    Next, we will compute the per-sample second moment as follows,
    \begin{align*}
        \EE[\R^*\R] &= \sum_{i \in [n]} \norm{x_i}_2^2 \cdot \frac{y_i^\top \cdot y_i}{p_i^2} \cdot p_i =  \sum_{i \in [n]} \norm{x_i}_2^2 \cdot \frac{y_i^\top \cdot y_i}{p_i}\\
        &\preceq 4 \cdot \left( \norm{\X}_F^2 + \gamma \cdot \norm{\Y}_F^2 \right) \cdot \sum_{i \in [n]} \frac{\norm{x_i}_2^2}{ \norm{x_i}_2^2 + \gamma \cdot \norm{y_i}_2^2} \cdot y_i^\top y_i \\
        &\preceq 4 \cdot \left( \norm{\X}_F^2 + \gamma \cdot \norm{\Y}_F^2 \right) \cdot \sum_{i \in [n]} y_i^\top y_i = 4 \cdot \left( \norm{\X}_F^2 + \gamma \cdot \norm{\Y}_F^2 \right) \cdot \Y^\top \Y.
    \end{align*}
    Similarly,
    \[ \EE[\R \R^*] \preceq 4 \cdot\left( \norm{\X}_F^2/ \gamma + \norm{\Y}_F^2 \right) \cdot \X^\top \X. \] 
    In summary,
    \begin{align*}
        m_2(\R) &= \max\{ \norm{\EE[\R^*\R]}_\op, \norm{\EE[\R\R^*]}_\op \} \\
        &\le 4 \cdot \max \left\{ \left(\norm{\X}_F^2 + \gamma \cdot \norm{\Y}_F^2 \right) \cdot \norm{ \Y^\top \Y }_\op, \left( \norm{\X}_F^2/ \gamma + \norm{\Y}_F^2 \right) \cdot \norm{\X \X^\top}_\op \right\} \\
        &= 4 \cdot \norm{\X}_\op^2 \norm{\Y}_\op^2 \cdot \left( \sr(\X) + \sr(\Y) \right).
    \end{align*}
    Finally, we note that, from the way the sampling matrix was constructed we have $\X^\top \bPi^\top \bPi \Y = \frac{1}{m} \sum_{r\in [m]} \frac{x_{\ell_r} \cdot y_{\ell_r}}{p_{i_r}} = \overline{\R}_m$.
    Thus, by invoking \cref{lem-matrix-bernstein} we find that for $t = \varepsilon \cdot \norm{\X}_\op \norm{\Y}_\op$ we have,
    \[ \Pr\left[ \norm{\overline{\R}_m - \B}_\op \ge \varepsilon \cdot \norm{\X}_\op \norm{\Y}_\op \right] \le (q+d) \cdot \exp\left( \frac{-mt^2/2}{m_2(\R) + 2Lt/3} \right) \le \frac{1}{\poly(n)}.\]
    This completes the proof of \cref{lem:amm-operator-norm}.
\end{proof}

% \section{Omitted Proofs from \cref{sec:algorithm}}

\subsection{Proof of \cref{thm-correctness-outerloop}}\label{sec:proof-correctness-outer-loop}
% Let us start by restating \cref{thm-correctness-outerloop}.

\thrmouterloop*

\begin{proof}
    First, note that all entries of $\D^{-1} \A$ are positive and the sum of entries of each row of this matrix equals 1, so by the Gershgorin circle theorem $\norm{\D^{-1} \A}_\op \le 1$. On the other hand, $\D^{-1} \A \cdot \mathbf{1}_n = \mathbf{1}_n$, so we have $\norm{\D^{-1} \A}_\op =1$. We will use this fact in the rest of the proof.
    
    Now note that \cref{alg-outer-loop} computes $\alpha = \textsc{WExpKDE}\left( \frac{\K}{d^{1/4}} , \frac{\Q}{d^{1/4}} , \mathbf{1}_n, \frac{\varepsilon}{3} \right)$ in line~3 and lets $\wt{\D} = \diag(\alpha)$.
    Thus, as we showed earlier, by \cref{def:ExpKDE} and using the fact that entries of $\D$ are positive, we have $(1-\varepsilon/3) \D \preceq \wt{\D} \preceq (1+\varepsilon/3) \D$. 
    So, using this inequality along with the fact that $\norm{\D^{-1}\A }_{\op}=1$, the diagonal matrix $\wt{\D}$ satisfies \cref{eq:error-D-approx}.

    Next, let us consider the vector $\beta = \textsc{WExpKDE}\left( \frac{\sqrt{2} \cdot \Q}{d^{1/4}}, \frac{\sqrt{2} \cdot \K}{d^{1/4}}, u, 1 /3 \right)$ computed in line~4. For ease of notation, let $\X^\top := \wt{\D}^{-1} \A$.
    By \cref{def:ExpKDE} and using the definition of $u_i = 1/\alpha_i^2$ in line~3, we have,
    \[ \beta_j \in (1 \pm 1/3) \cdot \sum_{i \in [n]} u_i \cdot \exp\left( \frac{2}{\sqrt{d}} \langle q_i, k_j \rangle \right) = (1 \pm 1/3) \cdot \norm{x_j}_2^2 ~~~\text{ for any } j \in [n]. \]
    
    Also, note that $\gamma$ which is computed in line~2 of the algorithm is equal to $\gamma = \frac{\norm{\D^{-1} \A}_\op^2}{\norm{\V}_\op^2}$. 
    Because $(1-\varepsilon/3) \D \preceq \wt{\D} \preceq (1+\varepsilon/3) \D$, we have $\gamma \in (1 \pm \varepsilon/3)^{-1} \cdot \tilde{\gamma}$, where $\tilde{\gamma} := \norm{\wt{\D}^{-1} \A}_\op^2 / \norm{\V}_\op^2 $.
    Therefore, the distribution $\{p_i\}_{i \in [n]}$ computed in line~5 satisfies,
    \[ p_\ell = \frac{\beta_\ell + \gamma \cdot \norm{v_\ell}_2^2}{\sum_{j \in [n]} \beta_j + \gamma \cdot \norm{\V}_F^2} \ge \frac{1}{4} \cdot \frac{\norm{x_\ell}_2^2 + \tilde{\gamma} \cdot \norm{v_\ell}_2^2}{\norm{\X}_F^2 + \tilde{\gamma} \cdot \norm{\V}_F^2}. \]
    
    Furthermore, note that $\sr(\wt{\D}^{-1} \A) \le 2\cdot \sr({\D}^{-1} \A)$.
    Therefore, we can invoke the AMM result from \cref{lem:amm-operator-norm} with matrices $\X^\top = \wt{\D}^{-1} \A$ and $\Y = \V$ and use the precondition of \cref{thm-correctness-outerloop} about the number of samples $m = \Omega\left( \varepsilon^{-2} \log n \cdot (\sr({\D}^{-1} \A) + \sr(\V)) \right) = \Omega\left( \varepsilon^{-2} \log n \cdot (\sr(\wt{\D}^{-1} \A) + \sr(\V)) \right)$ to conclude that the sampling matrix $\bPi$ computed in lines~6-7 satisfies the following with high probability in $n$:
    \[
    \norm{  \wt{\D}^{-1} \A \bPi^\top \cdot \bPi \V - \wt{\D}^{-1} \A \V }_{\op} \le \frac{\varepsilon}{4} \norm{\wt{\D}^{-1}\A }_{\op} \norm{ \V}_{\op} \le \frac{\varepsilon}{2} \norm{ \D^{-1}\A }_{\op} \norm{ \V}_{\op},
    % = \frac{\varepsilon}{2} \norm{\V}_{\op},
    \]
    where the second inequality above follows from the fact that $\norm{\wt{\D}^{-1}\A }_{\op} \le 2 \cdot \norm{ \D^{-1}\A }_{\op}$. The above inequality shows that \cref{eq:error-bound-sampler-attn} holds with high probability in $n$.
    Thus the theorem follows from combining \cref{eq:error-D-approx} and \cref{eq:error-bound-sampler-attn} using triangle inequality.
\end{proof}

\subsection{Proof of \cref{thm-main-attenstion-full-alg}}\label{appendix_proof_main_thrm}

\mainthrmspectralguarantee*

\begin{proof}
    It suffices to run \cref{alg-outer-loop} with some $m = O\left( \varepsilon^{-2} \log n (\sr({\D}^{-1} \A) + \sr(\V)) \right)$ samples and invoke \cref{alg-w-exp-kde} for the calls to \textsc{WExpKDE} made in lines~3-4.
    By \cref{thm-correctness-outerloop} and \cref{thm-corrctness-runtime-wexpkde-alg} along with union bound, the outputs $\bPi$ and $\wt{\D}$ of this procedure satisfy the desired condition of \cref{eq:def-spectral-erro-attention} with probability $\ge 1 - \frac{1}{\poly(n)}$.
    
    \paragraph{Runtime Analysis.} By \cref{thm-corrctness-runtime-wexpkde-alg}, the time to compute $\wt{\D}$ through invoking \textsc{WExpKDE} (i.e., \cref{alg-w-exp-kde}) in line~3 of \cref{alg-outer-loop} is $O \left( n d \cdot \Ccal_{ \frac{\K}{d^{1/4}} , \frac{\Q}{d^{1/4}}, \mathbf{1}_n, \varepsilon, \tau} \right)$.
    Furthermore, time to run \textsc{WExpKDE} in line~4 is $O \left( n d \cdot \Ccal_{\frac{\sqrt{2} \cdot \Q}{d^{1/4}}, \frac{\sqrt{2} \cdot \K}{d^{1/4}}, u, 1, \tau} \right) $, where $u$ is the vector computed in lines~3-4 of \cref{alg-outer-loop}.
    On the other hand, by \cref{thm-corrctness-runtime-wexpkde-alg}, vector $u$ satisfies $\frac{1}{2} v_j \le u_j \le \frac{3}{2} v_j$ for all $j \in [n]$ with probability at least $1 - \frac{1}{\poly(n)}$, where $v$ is the vector defined in the theorem statement. 
    Thus, using the definition of $\Ccal_{\frac{\sqrt{2} \cdot \Q}{d^{1/4}}, \frac{\sqrt{2} \cdot \K}{d^{1/4}}, u, 1, \tau}$ in \cref{eq:runtime-bound-WExpKDE} we can show that the aforementioned runtime is bounded by $O \left( n d \cdot \Ccal_{\frac{\sqrt{2} \cdot \Q}{d^{1/4}}, \frac{\sqrt{2} \cdot \K}{d^{1/4}}, v, 1, \tau} \right) $.
    
    Finally, the time to generate $m$ samples in line~6 of \cref{alg-outer-loop} is $O(m + n)$, using the sampling method developed by~\citet{hagerup1993maintaining}. The total runtime is obtained by summing up these terms.
\end{proof}

\subsection{Proof of \cref{corr-simplified-runtime}}\label{appndx-proof-corr}
\corrsimplifiedruntimr*
\begin{proof}
    First recall that the diameter of the datasets $\Q, \K$ is $\max_{i,j \in [n]} \norm{k_i - q_j}_2^2 = \gamma \sqrt{d} \log n$ for some $\gamma>0$.
    For any $i , j \in [n]$, using the fact that $\norm{k_i - q_j}_2^2 \le \gamma \sqrt{d} \log n$, we have,
    \begin{align*}
        \exp\left( \frac{1}{\sqrt{d}}\langle k_j, q_i \rangle \right) &= \exp\left( \frac{-1}{2\sqrt{d}} \norm{k_j - q_i}_2^2 \right) \cdot \exp\left( \frac{\norm{k_j}^2 + \norm{q_i}^2}{2\sqrt{d}} \right) \\
        &\ge n^{-\gamma/2} \cdot \exp\left( \frac{\norm{k_j}^2 + \norm{q_i}^2}{2\sqrt{d}} \right).
    \end{align*}
    Therefore, summing the above inequality over all $j \in [n]$ gives,
    \[ 
    \sum_{j \in [n]} \exp\left( \frac{1}{\sqrt{d}}\langle k_j, q_i \rangle \right) \ge n^{-\gamma/2} \cdot \sum_{j \in [n]}  \exp\left( \frac{\norm{k_j}^2 + \norm{q_i}^2}{2\sqrt{d}} \right).
    \]
    The above inequality holds for every $i \in [n]$. This inequality implies that the following set is empty for any $\mu \le n^{-1 - \gamma/2}$,
    \[
    \left\{ i\in[n]:  \frac{\sum_{j \in [n]} \exp\left( \frac{1}{\sqrt{d}}\langle k_j, q_i \rangle \right)}{ \sum_{j \in [n]} \exp\left( \frac{\norm{k_j}^2 + \norm{q_i}^2}{2\sqrt{d}} \right)} < n \cdot \mu \right\} = \emptyset.
    \]
    Thus, $\Ccal_{ \frac{\K}{d^{1/4}} , \frac{\Q}{d^{1/4}}, \mathbf{1}_n, \varepsilon, \tau}$ defined as per \cref{eq:runtime-bound-WExpKDE} is bounded as follows,
    \begin{align*}
        \Ccal_{ \frac{\K}{d^{1/4}} , \frac{\Q}{d^{1/4}}, \mathbf{1}_n, \varepsilon, \tau} &= \min_{\mu > 0} \, \varepsilon^{-2}  \mu^{-\tau} + \left| \left\{ i\in[n]:  \frac{\sum_{j \in [n]} \exp\left( \frac{1}{\sqrt{d}}\langle k_j, q_i \rangle \right)}{ \sum_{j \in [n]} \exp\left( \frac{\norm{k_j}^2 + \norm{q_i}^2}{2\sqrt{d}} \right)} < n \mu \right\} \right| \\
        &\le \varepsilon^{-2} \cdot  n^{\tau(1 + \gamma/2)}.
    \end{align*}
    
    Similarly, because $v_{j} > 0$ for every $j\in[n]$, we can show that, for any $i \in [n]$,
    \[
    \sum_{j \in [n]} v_{j} \exp\left( \frac{2}{\sqrt{d}}\langle q_j, k_i \rangle \right) \ge n^{-\gamma} \cdot \sum_{j \in [n]} v_{j} \exp\left( \frac{\norm{q_j}^2 + \norm{k_i}^2}{\sqrt{d}} \right).
    \]
    As a result, the following set is empty for any $\mu \le n^{-1-\gamma}$,
    \[
    \left\{ i\in[n]:  \frac{\sum_{j \in [n]} v_{j} \cdot \exp\left( \frac{2}{\sqrt{d}} \langle q_j, k_i \rangle \right)}{ \sum_{j \in [n]} v_{j} \exp\left( \frac{\norm{q_j}^2 + \norm{k_i}^2}{\sqrt{d}} \right)} < n \cdot \mu \right\} = \emptyset.
    \]
    So, $\Ccal_{\frac{\sqrt{2} \cdot \Q}{d^{1/4}}, \frac{\sqrt{2} \cdot \K}{d^{1/4}}, v, 1, \tau}$ defined as per \cref{eq:runtime-bound-WExpKDE} is bounded as follows,
    \begin{align*}
        \Ccal_{\frac{\sqrt{2} \cdot \Q}{d^{1/4}}, \frac{\sqrt{2} \cdot \K}{d^{1/4}}, v, 1, \tau} &= \min_{\mu > 0} \,  \mu^{-\tau} + \left| \left\{ i\in[n]:  \frac{\sum_{j \in [n]} v_{j} \cdot \exp\left( \frac{2}{\sqrt{d}} \langle q_j, k_i \rangle \right)}{ \sum_{j \in [n]} v_{j} \exp\left( \frac{\norm{q_j}^2 + \norm{k_i}^2}{\sqrt{d}} \right)} < n \cdot \mu \right\} \right|\\
        &\le n^{\tau(1+\gamma)}.
    \end{align*}
    
    Therefore, the total runtime of \cref{thm-main-attenstion-full-alg} is bounded by 
    \[
    O\left( m + nd \cdot \left( \Ccal_{ \frac{\K}{d^{1/4}} , \frac{\Q}{d^{1/4}}, \mathbf{1}_n, \varepsilon, \tau} + \Ccal_{\frac{\sqrt{2} \cdot \Q}{d^{1/4}}, \frac{\sqrt{2} \cdot \K}{d^{1/4}}, v, 1, \tau} \right) \right) = O\left(m + nd \cdot \left( n^{\tau(1+\gamma)} + n^{\tau(1+\gamma/2)}/\varepsilon^2 \right) \right),
    \]
    which completes the proof.
\end{proof}

\section{Additional Results on BigGAN Image Generations}\label{sec:appendix_biggan}

Images in \cref{fig:biggan} are randomly subset from $2,000$ generations from BigGAN~\cite{yuan2021tokens}\footnote{\url{https://github.com/huggingface/pytorch-pretrained-BigGAN}} with the exact attention computation and its various approximations including KDEformer (our), Performer~\cite{choromanski2020rethinking}, Reformer~\cite{kitaev2019reformer} and ScatterBrain~\cite{chen2021scatterbrain}. One can observe that our KDEformer generates more natural and realistic images than other methods by a large margin, and in many cases it is even better than the exact computation. This means that it has much less running time and memory, but it has produced a higher quality and more realistic image in the end. Also, note that the hyperparameters of our approach were not fine-tuned.
	
\begin{figure}[t]
    \centering
    \includegraphics[width=0.9\textwidth]{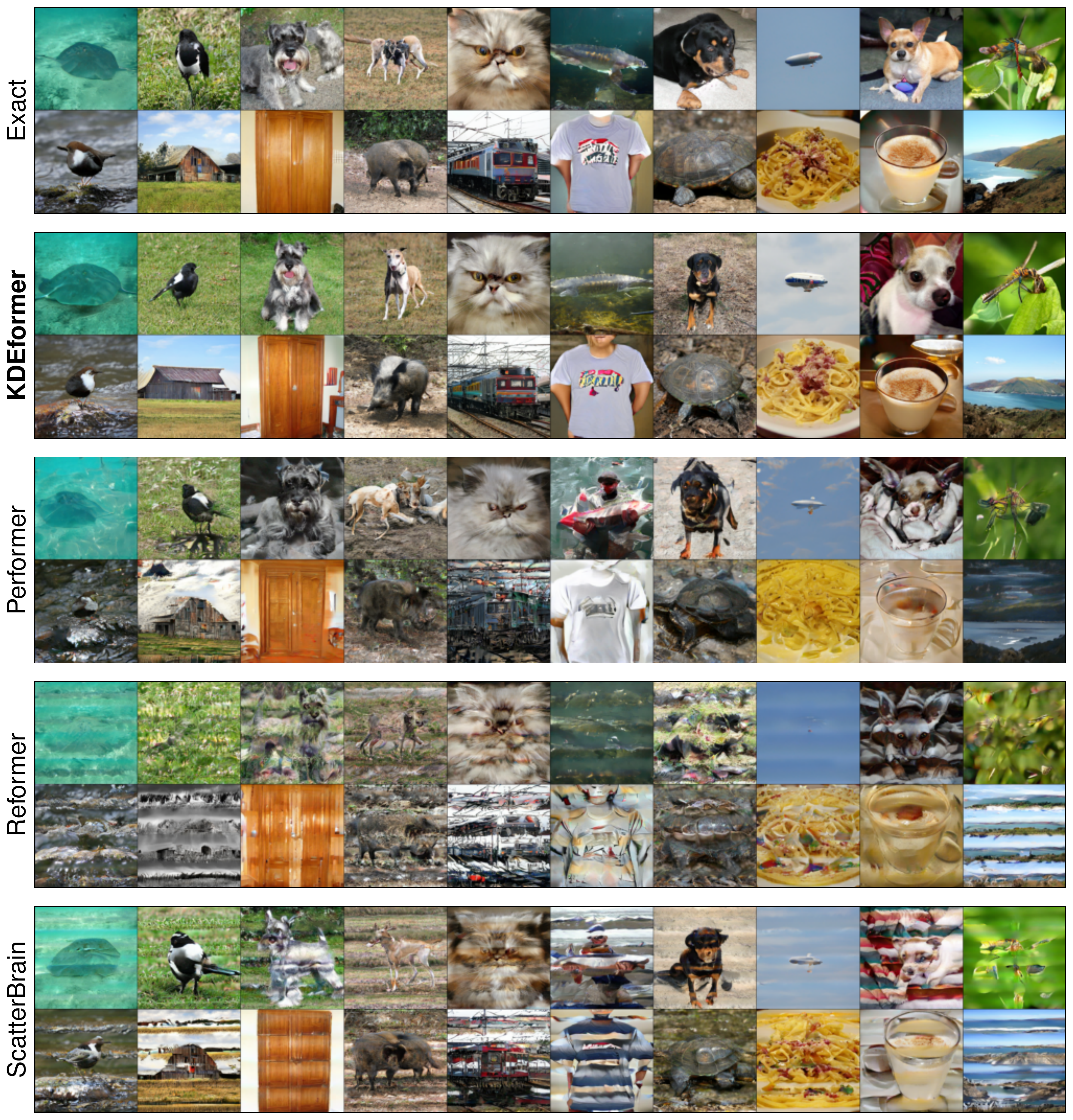}
    \vspace{-0.15in}
    \caption{Images generations from the pre-trained BigGAN with the exact attention (top) and drop-in replacement with its approximations including our KDEformer (second row), Performer (third row), Reformer (fourth row) and ScatterBrain (bottom).}
    \label{fig:biggan}
\end{figure}
	
\end{document}